\newcites{New}{References}
\icmltitlerunning{Duality in RKHSs with Infinite Dimensional Outputs: Application to Robust Losses}
\begin{document}

\twocolumn[
\icmltitle{Duality in RKHSs with Infinite Dimensional Outputs:\\Application to Robust Losses}



\icmlsetsymbol{equal}{*}

\begin{icmlauthorlist}
\icmlauthor{Pierre Laforgue}{to}
\icmlauthor{Alex Lambert}{to}
\icmlauthor{Luc Brogat-Motte}{to}
\icmlauthor{Florence d'Alch\'{e}-Buc}{to}
\end{icmlauthorlist}

\icmlaffiliation{to}{LTCI, T\'{e}l\'{e}com Paris, Institut Polytechnique de Paris, France}

\icmlcorrespondingauthor{Pierre Laforgue}{pierre.laforgue@telecom-paris.fr}
\icmlkeywords{Kernel Methods, Operator-Valued Kernels, Robustness, Structured Prediction}

\vskip 0.3in
]



\printAffiliationsAndNotice{}  

\begin{abstract}
Operator-Valued Kernels (OVKs) and associated vector-valued Reproducing Kernel Hilbert Spaces provide an elegant way to extend scalar kernel methods when the output space is a Hilbert space.
Although primarily used in finite dimension for problems like multi-task regression, the ability of this framework to deal with infinite dimensional output spaces unlocks many more applications, such as functional regression, structured output prediction, and structured data \mbox{representation}.
However, these sophisticated schemes crucially rely on the kernel trick in the output space, so that most of previous works have focused on the square norm loss function, completely neglecting robustness issues that may arise in such surrogate problems.
To overcome this limitation, this paper develops a duality approach that allows to solve OVK machines for a wide range of loss functions.
The infinite dimensional Lagrange multipliers are handled through a \emph{Double Representer Theorem}, and algorithms for $\epsilon$-insensitive losses and the Huber loss are thoroughly detailed.
Robustness benefits are emphasized by a theoretical stability analysis, as well as empirical improvements on structured data applications.
\end{abstract}

\section{Introduction}
\label{sec:intro}

Due to increasingly available streaming and network data, learning to predict complex objects such as \mbox{structured} \mbox{outputs} or time series has attracted a great deal of \mbox{attention} in machine learning. Extending the well known kernel methods devoted to non-vectorial data \cite{Hofmann08}, several kernel-based approaches have emerged~to deal with complex output data.
While Structural SVM and variants cope with discrete structures \citep{tsochantaridis2005large,Joachims09} through structured losses, \mbox{Operator-Valued} Kernels (OVKs) and vector-valued \mbox{Reproducing} Kernel Hilbert Spaces (vv-RKHSs, \citet{micchelli2005learning,carmeli2006vector,carmeli2010vector}) provide a unique framework to handle both functional and \mbox{structured} outputs.
Vv-RKHSs are classes of functions that map an \mbox{arbitrary} input set $\mathcal{X}$ to some output Hilbert space $\bmY$ \citep{Senkene1973, Caponnetto2008}.
Primarily used with finite dimensional outputs ($\bmY=\mathbb{R}^p$) to solve multi-task regression \citep{micchelli2005learning,Baldassare2012} and multiple class classification \citep{Dinuzzo2011}, OVK methods have further been exploited to handle outputs in infinite dimensional Hilbert spaces.
This has unlocked numerous applications, such as functional regression \citep{kadri2010,kadri2016ovk}, structured prediction \citep{Brouard_icml11,Kadri_icml2013}, infinite quantile regression \citep{brault2019infinite}, or structured data representation learning  \citep{laforgue2019autoencoding}.
Nonetheless, these sophisticated schemes often come along with a basic loss function: the output space squared norm, neglecting desirable properties such as parsimony and robustness.

In nonparametric modeling, model parsimony boils down to data sparsity, \textit{e.g.} reducing the number of training data points on which the model relies to make a prediction. Such a property is highly valuable \cite{hastie2015statistical}: not only does it prevent overfitting but it also alleviates the inherent computational load of optimization and prediction, allowing to scale to larger datasets.
Another appealing property of a regression tool is robustness to outliers \cite{huber1964robust,Zhu2008}.
Real data may suffer from incorrect feature measurements and spurious annotations, leading to training datasets contaminated with outliers.
Then, minimizing the squared loss is inappropriate as the least-squares estimates behave poorly when the residuals distribution is not normal, but rather heavy-tailed.
In (scalar) kernel methods, these two properties -- data sparsity and robustness to outliers -- are imposed through the choice of appropriate losses.
Data sparsity is leveraged by using $\epsilon$-insensitive losses, exploited in the well known Support Vector Regression \cite{drucker1997support} while robust regression \cite{Fung2000} can be obtained by minimizing the Huber loss function \citep{huber1964robust}.
Driven by three emblematic learning tasks, structured prediction, functional regression, and structured data representation, we propose a general duality framework that enables sparse data regression and robust regression, even when working in vv-RKHSs with infinite-dimensional outputs.
Although extensively used within scalar kernel methods, very few attempts have been made to adapt duality to vv-RKHSs.
In \citet{brouard2016input}, dualization is presented, but only used in the maximum margin regression scenario.
\citet{sangnier2017data} consider a wider class of loss functions, including $\epsilon$-insensitive losses to leverage data sparsity, but only in the case of matrix-valued kernels \citep{Alvarez2012}, for which the dual problem is finite dimensional.
For a general OVK however, the dual problem is to be solved over $\mathcal{Y}^n$, and is intractable without additional work when $\mathcal{Y}$ is infinite dimensional.
We first notice that the extensions of $\epsilon$-insensitive losses and the Huber loss to general Hilbert space are (still) expressed as convolutions of simpler losses whose Fenchel-Legendre (FL) transforms are known.
\mbox{Inspired} by this remark, we identify general conditions on the OVKs and FL transforms to establish a \emph{Double Representer Theorem} allowing to work with matrix parameterized representations.
In particular, a careful use of the duality principle considerably broadens the range of loss functions for which OVK solutions are computable.
The present work thus aims at developing a comprehensive methodology to solve these dual problems.

The rest of the paper is organized as follows.
In \Cref{sec:preliminaries}, we introduce OVKs, recall the general formulation of dual problems for OVK machines, and derive their solvable finite dimensional reformulation.
\Cref{sec:3-robust} is devoted to \mbox{specific} instantiations of this problem for $\epsilon$-insensitive losses and the Huber loss, with algorithms duly explicited.
In \Cref{sec:5-applis}, we apply our framework to induce sparsity and robustness into structured prediction, functional regression, and structured data representation.
Proofs are postponed to the Appendix.

\section{Learning in vv-RKHSs}
\label{sec:preliminaries}

After reminders on OVKs and vv-RKHS learning theory, this section exposes the duality approach for the regularized empirical risk minimization problem in vv-RKHSs.
Two strategies are then detailed to solve infinite dimensional dual problems, either under an assumption on the kernel, or by approximating the dual.
In the following, $\mathcal{Y}$ is assumed to be a separable Hilbert space.
\begin{definition}\label{def:ovk}
An OVK is an application $\mathcal{K} \colon \mathcal{X} \times \mathcal{X} \to \mathcal{L}(\mathcal{Y})$, that satisfies the following two properties for all $n \in \mathbb{N}^*$:\vspace{-0.05cm}
\begin{align*}
1)& ~\forall (x, x') \in \mathcal{X} \times \mathcal{X}, \qquad\quad~~ \mathcal{K}(x, x') = \mathcal{K}(x', x)^\#,\\
2)& ~\forall~(x_i, y_i)_{i =1}^n \in (\mathcal{X} \times \mathcal{Y})^n, \sum_{i, j = 1}^n \langle y_i, \mathcal{K}(x_i, x_j)y_j \rangle_\mathcal{Y} \ge 0,
\end{align*}
with $\mathcal{L}(E)$ the set of bounded linear operators on vector space $E$, and $A^\#$ the adjoint of any operator $A$.
\end{definition}

A simple example of OVK is the \emph{separable kernel}.
\begin{definition}\label{def:decompo_ovk}
$\mathcal{K}: \mathcal{X} \times \mathcal{X} \rightarrow \mathcal{L}(\mathcal{Y})$ is a \emph{separable kernel} iff there exist a scalar kernel $k: \mathcal{X} \times \mathcal{X} \rightarrow \mathbb{R}$ and a positive semi-definite operator $A \in \mathcal{L}(\mathcal{Y})$ such that for all $(x, x') \in \mathcal{X}^2$ it holds:\quad $\mathcal{K}(x,x')=k(x,x')A$.
\end{definition}

Similarly to scalar-valued kernels, an OVK can be uniquely associated to a functional space from $\mathcal{X}$ to $\mathcal{Y}$: its vv-RKHS.
\begin{theorem}
Let $\mathcal{K}$ be an OVK, and for $x \in \mathcal{X}$, let $\mathcal{K}_x \colon y \mapsto \mathcal{K}_xy \in \mathcal{F}(\mathcal{X}, \mathcal{Y})$ the linear operator such that: $\forall x' \in \mathcal{X},~(\mathcal{K}_x y)(x') = \mathcal{K}(x', x)y$.
Then, there is a unique Hilbert space $\mathcal{H}_\mathcal{K} \subset \mathcal{F}(\mathcal{X}, \mathcal{Y})$ the vv-RKHS associated to $\mathcal{K}$ such that $\forall x \in \mathcal{X}$ it holds:\vspace{-0.3cm}
\begin{enumerate}[label=(\roman*)]
\item $\mathcal{K}_x$ spans the space $\mathcal{H}_\mathcal{K}$ ($\forall y \in \mathcal{Y} \colon \mathcal{K}_x y \in \mathcal{H}_{\mathcal{K}}$)\\[-0.5cm]
\item $\mathcal{K}_x$ is bounded for the uniform norm\\[-0.5cm]
\item $\forall f \in \mathcal{H}_\mathcal{K},~f(x) = \mathcal{K}_{x}^\# f$ (reproducing property)
\end{enumerate}
\end{theorem}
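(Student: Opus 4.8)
The plan is to run the vector-valued analogue of the Moore--Aronszajn construction, building $\mathcal{H}_\mathcal{K}$ by hand from finite combinations of the atoms $\mathcal{K}_x y$ and then completing. First I would introduce the pre-Hilbert space $\mathcal{H}_0 := \operatorname{span}\{\mathcal{K}_x y : x \in \mathcal{X},\, y \in \mathcal{Y}\} \subset \mathcal{F}(\mathcal{X}, \mathcal{Y})$, and on pairs $f = \sum_i \mathcal{K}_{x_i} y_i$, $g = \sum_j \mathcal{K}_{x'_j} y'_j$ define the candidate bilinear form
\begin{equation*}
\langle f, g \rangle := \sum_{i,j} \langle y_i, \mathcal{K}(x_i, x'_j) y'_j \rangle_\mathcal{Y}.
\end{equation*}
The crucial preliminary computation is the identity $\langle f, \mathcal{K}_{x'} y' \rangle = \langle f(x'), y' \rangle_\mathcal{Y}$, obtained by expanding $f(x') = \sum_i \mathcal{K}(x', x_i) y_i$ and invoking the adjoint symmetry $\mathcal{K}(x_i, x') = \mathcal{K}(x', x_i)^\#$ of property 1). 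Since the right-hand side depends only on the function $f$ and not on its expansion, this identity (together with its symmetric counterpart in $g$) shows that $\langle \cdot, \cdot \rangle$ is well defined independently of the chosen representations.

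Next I would verify that $\langle \cdot, \cdot \rangle$ is a genuine inner product. Symmetry follows from property 1), bilinearity is immediate, and positive semi-definiteness $\langle f, f \rangle = \sum_{i,j} \langle y_i, \mathcal{K}(x_i, x_j) y_j \rangle_\mathcal{Y} \geq 0$ is exactly property 2). Definiteness is the one point needing an argument: positive semi-definiteness already yields the Cauchy--Schwarz inequality, so $\langle f, f \rangle = 0$ forces $|\langle f, \mathcal{K}_{x'} y' \rangle|^2 \leq \langle f, f \rangle\, \langle \mathcal{K}_{x'} y', \mathcal{K}_{x'} y' \rangle = 0$ for every $x', y'$; by the key identity this reads $\langle f(x'), y' \rangle_\mathcal{Y} = 0$ for all $x', y'$, whence $f \equiv 0$.

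I would then take $\mathcal{H}_\mathcal{K}$ to be the completion of $(\mathcal{H}_0, \langle \cdot, \cdot \rangle)$, the main subtlety being to realize this abstract completion as an honest subspace of $\mathcal{F}(\mathcal{X}, \mathcal{Y})$ rather than as equivalence classes of Cauchy sequences. This hinges on continuity of evaluation: from the key identity and Cauchy--Schwarz, $|\langle f(x), y \rangle_\mathcal{Y}| \leq \|f\|\,\|\mathcal{K}_x y\|$, and since $\|\mathcal{K}_x y\|^2 = \langle y, \mathcal{K}(x,x) y \rangle_\mathcal{Y} \leq \|\mathcal{K}(x,x)\|\,\|y\|_\mathcal{Y}^2$, point evaluation $f \mapsto f(x)$ is a bounded map into $\mathcal{Y}$. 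Hence every $\langle \cdot, \cdot \rangle$-Cauchy sequence in $\mathcal{H}_0$ converges pointwise to a well-defined element of $\mathcal{F}(\mathcal{X}, \mathcal{Y})$, and the resulting limit map is injective, placing $\mathcal{H}_\mathcal{K}$ inside $\mathcal{F}(\mathcal{X}, \mathcal{Y})$ as claimed. Properties (i) and (ii) are then read off directly: $\mathcal{K}_x y \in \mathcal{H}_0 \subset \mathcal{H}_\mathcal{K}$ by construction (and the $\mathcal{K}_x y$ span a dense subspace), while the bound above gives $\|\mathcal{K}_x\|_{\mathrm{op}} \leq \|\mathcal{K}(x,x)\|^{1/2}$. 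For the reproducing property (iii), the identity $\langle f, \mathcal{K}_x y \rangle = \langle f(x), y \rangle_\mathcal{Y}$ extends from $\mathcal{H}_0$ to all of $\mathcal{H}_\mathcal{K}$ by density and continuity; since this says precisely $\langle \mathcal{K}_x^\# f, y \rangle_\mathcal{Y} = \langle f(x), y \rangle_\mathcal{Y}$ for all $y$, we conclude $\mathcal{K}_x^\# f = f(x)$.

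Finally, for uniqueness I would show that any Hilbert space $\mathcal{H}'$ satisfying (i)--(iii) must carry the same inner product as $\mathcal{H}_\mathcal{K}$ on the common dense span: properties (i) and (iii) force $\langle \mathcal{K}_{x'} y', \mathcal{K}_x y \rangle_{\mathcal{H}'} = \langle (\mathcal{K}_{x'} y')(x), y \rangle_\mathcal{Y} = \langle \mathcal{K}(x, x') y', y \rangle_\mathcal{Y}$, which coincides with the form defined above after using property 1). Since $\mathcal{H}_0$ is dense in both complete spaces by (i), the two spaces and their norms agree. I expect the main obstacle to be the representation-independence of the inner product and the identification of the completion with a space of genuine functions; both are ultimately resolved by the single reproducing identity $\langle f, \mathcal{K}_{x'} y' \rangle = \langle f(x'), y' \rangle_\mathcal{Y}$ together with the continuity of point evaluation.
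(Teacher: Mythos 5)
Your construction is correct and complete in all essentials: the key identity $\langle f, \mathcal{K}_{x'}y'\rangle = \langle f(x'), y'\rangle_{\mathcal{Y}}$ does all the work for well-definedness, definiteness, the realization of the completion as genuine functions, and uniqueness, exactly as in the classical vector-valued Moore--Aronszajn argument. The paper itself gives no proof of this statement --- it is quoted as background from the cited literature (Senkene--Tempel'man, Micchelli--Pontil, Carmeli et al.), and your proof is the standard one found there; the only step you assert rather than argue is the injectivity of the map from the abstract completion into $\mathcal{F}(\mathcal{X},\mathcal{Y})$, which follows in one line from your own setup (an element whose pointwise limit vanishes is orthogonal to the dense span $\mathcal{H}_0$, hence zero).
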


Given a sample $\sample = \{(x_i, y_i)_{i = 1}^n\} \in (\mathcal{X}\times\mathcal{Y})^n$ of $n$ i.i.d. realizations of a generic random variable $(X, Y)$, an OVK $\mathcal{K}: \mathcal{X} \times \mathcal{X} \rightarrow \mathcal{L}(\mathcal{Y})$, a convex loss function $\ell: \mathcal{Y} \times \mathcal{Y} \rightarrow \mathbb{R}$, and a regularization parameter $\Lambda > 0$,  the general form of an OVK-based learning problem is to find $\hat{h}$ that solves:
\begin{problem}\label{pbm:primal_pbm}
\min_{h \in \mathcal{H}_\mathcal{K}} ~ \frac{1}{n} \sum_{i=1}^n \ell(h(x_i), y_i) + \frac{\Lambda}{2} \|h\|_{\mathcal{H}_\mathcal{K}}^2.
\end{problem}
Similarly to scalar ones, a crucial tool in operator-valued kernel methods is the \mbox{\emph{Representer Theorem}}, ensuring that $\hat{h}$ actually pertains to a reduced subspace of $\mathcal{H}_\mathcal{K}$.\vspace{0.2cm}
\begin{theorem}\label{thm:rt}
(Theorem 4.2 in \citet{micchelli2005learning})
There exists $(\hat{\alpha}_i)_{i = 1}^n \in \mathcal{Y}^n$ such that
\begin{equation*}
  \hat{h} = \frac{1}{\Lambda n}\sum_{i=1}^n \mathcal{K}(\cdot, x_i)\hat{\alpha}_i.
\end{equation*}
\end{theorem}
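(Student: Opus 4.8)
The plan is to exploit strong convexity of the objective together with the reproducing property, turning the abstract minimization into a first-order optimality condition that directly exhibits the claimed representation. Write $J(h) = \frac{1}{n}\sum_{i=1}^n \ell(h(x_i), y_i) + \frac{\Lambda}{2}\|h\|_{\mathcal{H}_\mathcal{K}}^2$ for the objective of \Cref{pbm:primal_pbm}. Since $\ell(\cdot, y_i)$ is convex (and lsc, as holds for every loss considered here) and the regularizer is $\Lambda$-strongly convex and coercive on the Hilbert space $\mathcal{H}_\mathcal{K}$, $J$ is strongly convex, lower semicontinuous and coercive; hence a minimizer $\hat h$ exists and is unique. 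The key structural fact I would isolate first is that, by the reproducing property $f(x) = \mathcal{K}_x^\# f$, the evaluation map $E_i \colon h \mapsto h(x_i)$ equals the bounded linear operator $\mathcal{K}_{x_i}^\#$, so that its adjoint is $E_i^\# = \mathcal{K}_{x_i} \colon \mathcal{Y} \to \mathcal{H}_\mathcal{K}$, $y \mapsto \mathcal{K}(\cdot, x_i)y$.

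The main step is then to write down first-order optimality at $\hat h$. Each summand factors as $h \mapsto \ell(E_i h, y_i)$, a convex function post-composed with the bounded linear map $E_i$; by the chain rule its (sub)gradient is $E_i^\#\, \partial_1\ell(E_i h, y_i) = \mathcal{K}_{x_i}\,\partial_1\ell(h(x_i), y_i)$, where $\partial_1$ denotes the (sub)differential in the first argument, valued in $\mathcal{Y}$. The regularizer contributes $\Lambda h$. Optimality $0 \in \partial J(\hat h)$ therefore yields, for some $g_i \in \partial_1\ell(\hat h(x_i), y_i)$,
\[
  \Lambda \hat h + \frac{1}{n}\sum_{i=1}^n \mathcal{K}_{x_i} g_i = 0,
\]
and solving for $\hat h$ gives $\hat h = \frac{1}{\Lambda n}\sum_{i=1}^n \mathcal{K}(\cdot, x_i)\hat\alpha_i$ with $\hat\alpha_i = -g_i \in \mathcal{Y}$. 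This recovers the statement, including the explicit constant $\frac{1}{\Lambda n}$ and the interpretation of the dual variables as negative loss (sub)gradients.

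An alternative, derivative-free route I would keep in reserve is the classical projection argument: decompose $\mathcal{H}_\mathcal{K} = \mathcal{H}_0 \oplus \mathcal{H}_0^\perp$ with $\mathcal{H}_0 = \overline{\mathrm{span}}\{\mathcal{K}_{x_i}y : i \le n,\ y \in \mathcal{Y}\}$, write $h = h_0 + h_\perp$, and observe that $\langle h_\perp(x_i), y\rangle_\mathcal{Y} = \langle h_\perp, \mathcal{K}_{x_i}y\rangle_{\mathcal{H}_\mathcal{K}} = 0$ for all $y$, so $h_\perp(x_i) = 0$ and the loss depends only on $h_0$, while $\|h\|_{\mathcal{H}_\mathcal{K}}^2 = \|h_0\|_{\mathcal{H}_\mathcal{K}}^2 + \|h_\perp\|_{\mathcal{H}_\mathcal{K}}^2 \ge \|h_0\|_{\mathcal{H}_\mathcal{K}}^2$ by Pythagoras. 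Hence $J(h_0) \le J(h)$, with equality forcing $h_\perp = 0$, so $\hat h \in \mathcal{H}_0$.

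I expect the main obstacle to be twofold. First, for the non-differentiable losses of interest here ($\epsilon$-insensitive, Huber), the chain and sum rules for subdifferentials must be justified in the infinite-dimensional setting; this is clean because the regularizer is finite and continuous everywhere, so Moreau--Rockafellar applies and no constraint qualification is needed, but it must be invoked carefully. Second, the projection argument only places $\hat h$ in the \emph{closed} span $\mathcal{H}_0$, whereas the theorem asserts a genuine finite combination $\sum_i \mathcal{K}_{x_i}\hat\alpha_i$ lying in the (possibly non-closed) range sum $\sum_i \mathcal{K}_{x_i}(\mathcal{Y})$; reconciling these requires either the optimality argument above or an additional argument showing the minimizer falls in the non-closed span, which is precisely why I would make the stationarity computation the backbone of the proof.
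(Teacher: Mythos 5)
Your argument is correct, but it follows a different route from the one the paper relies on. The paper never proves \Cref{thm:rt} itself --- it is imported from Theorem 4.2 of Micchelli and Pontil --- and the expansion $\hat h = \frac{1}{\Lambda n}\sum_i \mathcal{K}(\cdot,x_i)\hat\alpha_i$ is re-derived in \Cref{sec:apx_dual} only as a byproduct of the Lagrangian dualization of \Cref{thm:dual_pbm}: auxiliary variables $u_i=h(x_i)$ are introduced, and the representation falls out of the exact inner minimization of the Lagrangian over $h$, while the multipliers are characterized by the dual problem through the Fenchel--Legendre transforms. Your proof is the primal mirror of that computation: writing $0\in\partial J(\hat h)$ and identifying $\hat\alpha_i=-g_i$ with $g_i\in\partial_1\ell(\hat h(x_i),y_i)$ is exactly the stationarity-in-$u_i$ condition $-\alpha_i\in\partial\ell_i(u_i)$ that is implicit in the paper's step $\inf_{u_i}\{\ell_i(u_i)+\langle\alpha_i,u_i\rangle\}=-\ell_i^\star(-\alpha_i)$. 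What your route buys is the representation under minimal hypotheses and without constructing any dual object; what the paper's route buys is \Cref{pbm:dual_pbm} itself, which is the actual object of interest for the rest of the paper. Two remarks. First, your observation that the projection argument only places $\hat h$ in the \emph{closed} span, whereas the theorem asserts a genuine finite sum $\sum_i\mathcal{K}_{x_i}\hat\alpha_i$ over the possibly non-closed range sum, is a real subtlety in infinite dimensions and a good reason to prefer the stationarity (or duality) derivation; note, though, that the original Micchelli--Pontil proof does resolve this, so citing it as the paper does is not a gap. Second, your appeal to Moreau--Rockafellar covers the sum rule but not the composition rule $\partial(\ell_i\circ E_i)=E_i^\#\,\partial\ell_i\circ E_i$, which needs its own qualification (e.g.\ $\ell_i$ finite and continuous at some point of $\mathrm{ran}(E_i)$); this holds for all losses in \Cref{prop:good_losses}, including $\chi_{\mathcal{B}_\lambda}$ since $0$ lies in the interior of its domain, but it should be invoked explicitly rather than folded into the sum rule.
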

Although \Cref{thm:rt} drastically downscales the search \mbox{domain} (from $\mathcal{H}_\mathcal{K}$ to $\mathcal{Y}^n$), it gives no further information about the $(\hat{\alpha}_i)_{i = 1}^n$.
One way to gain insight about these \mbox{coefficients} is to perform \Cref{pbm:primal_pbm}'s dualization, with the notation $\ell_i: y\in\mathcal{Y} \mapsto \ell(y, y_i)$ for any $i \le n$.

\begin{theorem}\label{thm:dual_pbm}
(Appendix B in \citet{brouard2016input})
The solution to \Cref{pbm:primal_pbm} is given by\vspace{-0.1cm}
\begin{equation}\label{eq:decompo_estim_dual}
\hat{h} = \frac{1}{\Lambda n}\sum_{i=1}^n \mathcal{K}(\cdot, x_i)\hat{\alpha}_i,
\end{equation}\vspace{-0.1cm}
with $(\hat{\alpha}_i)_{i = 1}^n \in \mathcal{Y}^n$ the solutions to the dual problem\vspace{-0.2cm}
\begin{problem}\label{pbm:dual_pbm}
\min_{(\alpha_i)_{i=1}^n\in \mathcal{Y}^n} ~ \sum_{i=1}^n\ell_i^\star(-\alpha_i) + \frac{1}{2\Lambda n} \sum_{i,j=1}^n \left\langle \alpha_i, \mathcal{K}(x_i, x_j) \alpha_j\right\rangle_\mathcal{Y},
\end{problem}
where $f^\star: \alpha \in \mathcal{Y} \mapsto \sup_{y \in \mathcal{Y}} \left\langle \alpha, y\right\rangle_\mathcal{Y} - f(y)$ denotes the Fenchel-Legendre transform of a function $f:\mathcal{Y} \rightarrow \mathbb{R}$.
\end{theorem}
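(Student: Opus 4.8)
The plan is to derive \Cref{pbm:dual_pbm} through Fenchel--Rockafellar duality applied to the sampling operator, which has the advantage of simultaneously producing both the dual problem and the representer formula \eqref{eq:decompo_estim_dual}.

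First I would introduce the bounded linear sampling operator $S \colon \mathcal{H}_\mathcal{K} \to \mathcal{Y}^n$ defined by $Sh = (h(x_1), \dots, h(x_n)) = (\mathcal{K}_{x_1}^\# h, \dots, \mathcal{K}_{x_n}^\# h)$, the second expression following from the reproducing property (iii). Writing $L \colon (u_i)_i \mapsto \frac{1}{n}\sum_{i=1}^n \ell_i(u_i)$ and $G \colon h \mapsto \frac{\Lambda}{2}\|h\|_{\mathcal{H}_\mathcal{K}}^2$, \Cref{pbm:primal_pbm} reads $\inf_{h} L(Sh) + G(h)$, to which the Fenchel identity $\inf_h L(Sh)+G(h) = \sup_{\alpha \in \mathcal{Y}^n} -L^\star(\alpha) - G^\star(-S^\# \alpha)$ applies once strong duality is secured.

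Next I would compute the three ingredients. The conjugate of the quadratic regularizer is $G^\star(g) = \frac{1}{2\Lambda}\|g\|_{\mathcal{H}_\mathcal{K}}^2$, and separability of $L$ gives $L^\star(\alpha) = \frac{1}{n}\sum_i \ell_i^\star(n\alpha_i)$. The key computation is the adjoint $S^\#$: from $\langle \alpha, Sh\rangle_{\mathcal{Y}^n} = \sum_i \langle \alpha_i, \mathcal{K}_{x_i}^\# h\rangle_\mathcal{Y} = \langle \sum_i \mathcal{K}_{x_i}\alpha_i, h\rangle_{\mathcal{H}_\mathcal{K}}$ one reads off $S^\# \alpha = \sum_i \mathcal{K}(\cdot, x_i)\alpha_i$, and a second application of the reproducing property turns $\|S^\#\alpha\|_{\mathcal{H}_\mathcal{K}}^2$ into the Gram form $\sum_{i,j}\langle \alpha_i, \mathcal{K}(x_i,x_j)\alpha_j\rangle_\mathcal{Y}$. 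Substituting these pieces and performing the change of variables $\alpha_i \mapsto -\hat\alpha_i/n$ followed by a rescaling by $n$ turns the dual into exactly \Cref{pbm:dual_pbm}. The representer formula then falls out of the optimality condition: differentiability of $G$ forces $-S^\# \alpha^\star = \nabla G(\hat h) = \Lambda \hat h$, so $\hat h = -\frac{1}{\Lambda}\sum_i \mathcal{K}(\cdot,x_i)\alpha_i^\star = \frac{1}{\Lambda n}\sum_i \mathcal{K}(\cdot,x_i)\hat\alpha_i$ under the same change of variables, recovering \eqref{eq:decompo_estim_dual} and re-deriving \Cref{thm:rt} as a by-product.

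The main obstacle is justifying strong duality — a vanishing gap together with attainment — in the infinite-dimensional output setting, since the Fenchel--Rockafellar theorem requires a constraint qualification that is more delicate than in finite dimension. Here it is supplied by continuity: $\ell$ being a finite-valued convex loss on $\mathcal{Y}$ is continuous, so $L$ is continuous with $\mathrm{dom}\,L = \mathcal{Y}^n$, and no relative-interior condition is binding; coercivity of the $\frac{\Lambda}{2}\|\cdot\|^2$ term then guarantees the primal infimum is attained, yielding a genuine minimizer rather than a mere infimum. One must also confirm that each $\ell_i$ is proper, convex and lower semicontinuous so that $\ell_i^{\star\star} = \ell_i$, which legitimizes passing through the conjugates and identifying the recovered primal optimum with $\hat h$.
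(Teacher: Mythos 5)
Your proof is correct and is essentially the paper's own argument in different clothing: the paper introduces slack variables $u_i = h(x_i)$ and minimizes the Lagrangian over $u$ and $h$ separately, which is exactly your Fenchel--Rockafellar computation of $L^\star$, $G^\star$ and $S^\#$ for the sampling operator, and both routes extract \eqref{eq:decompo_estim_dual} from the same stationarity condition in $h$. The one genuine addition on your side is the explicit discussion of the constraint qualification and attainment, which the paper's formal Lagrangian derivation silently assumes.
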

Refer to \Cref{sec:apx_dual} for \Cref{thm:dual_pbm}'s proof, that has been reproduced for self-containedness.
Dualization brings in additional information about the optimal coefficients (notice nonetheless that \Cref{thm:rt} holds true for a much wider class of problems).
As it is, \Cref{pbm:dual_pbm} is however of little interest, since the optimization must be performed on the \mbox{infinite} dimensional space $\mathcal{Y}^n$.
\mbox{Depending} on the problem, we propose two solutions: either using a \emph{Double Representer Theorem}, or by approximating \Cref{pbm:dual_pbm}.
\par{\bf Notation.}{
If $\mathcal{K}$ is identity decomposable (i.e. $\mathcal{K}=k~\mathbf{I}_\mathcal{Y}$), $K^X$ and $K^Y$ denote the input and output gram matrices.
For any matrix $M$, $M_{i:}$ represents its $i^{th}$ line, and $\|M\|_{p, q}$ its $\ell_{p, q}$ row wise mixed norm, \textit{i.e.} the $\ell_q$ norm of the $\ell_p$ norms of its lines.
$\chi_S$ denotes the characteristic function of a set $S$, null on $S$ and equal to $+\infty$ otherwise, $f \infconv g$ is the infimal convolution of $f$ and $g$ \citep{bauschke2011convex}, $(f \infconv g)(x) = \inf_y f(y) + g(x - y)$.
Finally, $\#S$ is the cardinality of any set $S$, and $\|\cdot\|_{\text{op}}$ the operator norm.
}


\subsection{The Double Representer Theorem}
\label{sec:kae_rudi}

In order to make \Cref{pbm:dual_pbm} solvable, we need assumptions on the loss and the kernel.
Let $\bm{\mathsf{Y}}$ denote $\text{span}(y_i,~i \le n)$.
\Cref{hyp:FL_1,hyp:FL_2} characterize admissible losses through conditions on their Fenchel-Legendre (FL) transforms.
They are standard for kernel methods, and ensure computability by stipulating that only dot products are involved.

\begin{assumption}\label{hyp:FL_1}
$\forall i \le n,~\forall (\alpha^{\bm{\mathsf{Y}}}, \alpha^\perp) \in \bm{\mathsf{Y}} \times \bm{\mathsf{Y}}^\perp$, it holds
$\ell_i^\star(\alpha^{\bm{\mathsf{Y}}}) \le \ell_i^\star(\alpha^{\bm{\mathsf{Y}}} + \alpha^\perp)$.
\end{assumption}

\begin{assumption}\label{hyp:FL_2}
$\forall i \le n, \exists L_i: \mathbb{R}^{n + n^2} \rightarrow \mathbb{R}$ such that for all $\bm{\omega} = (\omega_j)_{j \le n} \in \mathbb{R}^n$, \quad $\textstyle \ell_i^\star(-\sum_{j=1}^n \omega_j~y_j ) = L_i(\bm{\omega}, K^Y)$.
\end{assumption}

Regarding the OVK, the key point is \Cref{hyp:stable_ovk}.
Roughly speaking, $\bm{\mathsf{Y}}$ is what we \emph{see} and \emph{know} about output space $\mathcal{Y}$, while $\bm{\mathsf{Y}}^\perp$ represents the part we \emph{ignore}.
What we need is an OVK somewhat \emph{aligned} with the outputs, in the sense that the little we know about $\mathcal{Y}$ should be preserved through $\mathcal{K}$.
As for \Cref{hyp:sum_decompo}, it helps simplifying the computations.

\begin{assumption}\label{hyp:stable_ovk}
$\forall i, j \le n$, $\bm{\mathsf{Y}}$ is invariant by $\mathcal{K}(x_i, x_j)$, \textit{i.e.} $\forall y \in \mathcal{Y},~~y \in \bm{\mathsf{Y}} \Rightarrow \mathcal{K}(x_i, x_j)y \in \bm{\mathsf{Y}}$.
\end{assumption}

\begin{remark}\label{rmk:stable_ovk}
It is important to notice that we do not need \Cref{hyp:stable_ovk} to hold true for every collection $\{y_i\}_{i \le n} \in \mathcal{Y}^n$.
It rather constitutes an a posteriori condition to ensure that the kernel is aligned with the training sample at hand.
If $\mathcal{Y}$ is finite dimensional, one may hope that with sufficiently many outputs, then $\bm{\mathsf{Y}}$ spans $\mathcal{Y}$, and every matrix-valued kernel then fits.
If $\mathcal{Y}$ is infinite dimensional, identity-decomposable kernels are admissible (which despite simple expressions may describe nontrivial dependences in infinite dimensional spaces).
Moreover, separable kernels with operators similar to the empirical covariance $\sum_i y_i\otimes y_i$ \citep{Kadri_icml2013} are also eligible, opening the door to ad-hoc and learned kernels, see \Cref{apx:suit_ker} for further examples.
\end{remark}

\begin{assumption}\label{hyp:sum_decompo}
There exist $T \ge 1$, and for every $t \le T$ admissible scalar kernels $k_t : \mathcal{X} \times \mathcal{X} \rightarrow \mathbb{R}$ as well as positive semi-definite operators $A_t \in \mathcal{L}(\mathcal{Y})$, such that for all $(x, x') \in \mathcal{X}^2$ it holds:~~$
\mathcal{K}(x, x') = \sum_{t=1}^T k_t(x, x')A_t$.
\end{assumption}
Under \Cref{hyp:sum_decompo}, $K_t^X$ and $K_t^Y$ denote the matrices such that $[K_t^X]_{ij} = k_t(x_i, x_j)$, $[K_t^Y]_{ij} = \left\langle y_i, A_t y_j\right\rangle_\mathcal{Y}$.
Notice that it is by no means restrictive, since every shift-invariant OVK can be approximated arbitrarily closely by kernels satisfying \Cref{hyp:sum_decompo}.
Furthermore, if for all $t \le T$, $A_t$ keeps $\bm{\mathsf{Y}}$ invariant, then \Cref{hyp:stable_ovk} is directly fulfilled.
Under these assumptions, \Cref{thm:double_rt} proves that the optimal coefficients lie in $\bm{\mathsf{Y}}^n$, ensuring the solutions computability.

\begin{theorem}\label{thm:double_rt}
Let $\ell: \mathcal{Y} \times \mathcal{Y} \rightarrow \mathbb{R}$ be a loss function with Fenchel-Legendre transforms satisfying \Cref{hyp:FL_1,hyp:FL_2}, and $\mathcal{K}$ be an OVK verifying \Cref{hyp:stable_ovk}.
Then, the solution to \Cref{pbm:primal_pbm} is given by\vspace{-0.2cm}
\begin{equation}\label{eq:expansion}
\hat{h} = \frac{1}{\Lambda n}\sum_{i,j=1}^n \mathcal{K}(\cdot, x_i)~\hat{\omega}_{ij}~y_j,
\end{equation}
with $\hat{\Omega} = [\hat{\omega}_{ij}] \in \mathbb{R}^{n \times n}$ the solution to the dual problem
\begin{equation*}
\min_{\Omega \in \mathbb{R}^{n \times n}} ~ \sum_{i=1}^n L_i\left( \Omega_{i:}, K^Y \right) + \frac{1}{2 \Lambda n} \mathbf{Tr}\left(\tilde{M}^\top (\Omega \otimes \Omega)\right),
\end{equation*}
with $M$ the $n^4$ tensor such that $M_{ijkl} = \langle y_k , \mathcal{K}(x_i, x_j)y_l\rangle_\mathcal{Y}$, and $\tilde{M}$ its rewriting as a $n^2 \times n^2$ block matrix.
If kernel $\mathcal{K}$ further satisfies \Cref{hyp:sum_decompo}, then tensor $M$ simplifies to $M_{ijkl} = \sum_{t=1}^T [K^X_t]_{ij} [K^Y_t]_{kl}$, and the problem rewrites
\begin{problem}\label{pbm:omega_pbm}
\min_{\Omega \in \mathbb{R}^{n \times n}} ~ \sum_{i=1}^n L_i\left( \Omega_{i:}, K^Y \right) + \frac{1}{2\Lambda n} \sum_{t=1}^T \mathbf{Tr}\left( K_t^X \Omega K_t^Y \Omega^\top\right).
\end{problem}
\end{theorem}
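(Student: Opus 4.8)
The plan is to start from the dual characterization of $\hat h$ provided by \Cref{thm:dual_pbm} and to argue that the infinite dimensional dual \Cref{pbm:dual_pbm} admits an optimal solution whose multipliers all lie in the finite dimensional subspace $\bm{\mathsf{Y}} = \mathrm{span}(y_i, i\le n)$. Since $\bm{\mathsf{Y}}$ is finite dimensional, hence closed, every $\alpha_i \in \mathcal{Y}$ decomposes orthogonally as $\alpha_i = \alpha_i^{\bm{\mathsf{Y}}} + \alpha_i^\perp$ with $\alpha_i^{\bm{\mathsf{Y}}}\in\bm{\mathsf{Y}}$ and $\alpha_i^\perp\in\bm{\mathsf{Y}}^\perp$. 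First I would show that projecting each multiplier onto $\bm{\mathsf{Y}}$ (i.e.\ setting $\alpha_i^\perp = 0$) can only decrease the dual objective, so that there is no loss of optimality in restricting to $\bm{\mathsf{Y}}^n$.

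For the data-fitting term $\sum_i \ell_i^\star(-\alpha_i)$, this is immediate from \Cref{hyp:FL_1} applied with $\alpha^{\bm{\mathsf{Y}}} = -\alpha_i^{\bm{\mathsf{Y}}}$ and $\alpha^\perp = -\alpha_i^\perp$, giving $\ell_i^\star(-\alpha_i^{\bm{\mathsf{Y}}}) \le \ell_i^\star(-\alpha_i)$ termwise. The crux is the quadratic term. I would first observe that, because the invariance of \Cref{hyp:stable_ovk} holds for every pair $(x_i, x_j)$ and because $\mathcal{K}(x_i, x_j) = \mathcal{K}(x_j, x_i)^\#$ by \Cref{def:ovk}, the complement $\bm{\mathsf{Y}}^\perp$ is invariant under each $\mathcal{K}(x_i, x_j)$ as well: for $v\in\bm{\mathsf{Y}}^\perp$ and $w\in\bm{\mathsf{Y}}$ one has $\langle \mathcal{K}(x_i,x_j)v, w\rangle_\mathcal{Y} = \langle v, \mathcal{K}(x_j, x_i)w\rangle_\mathcal{Y} = 0$. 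Expanding $\langle \alpha_i, \mathcal{K}(x_i,x_j)\alpha_j\rangle_\mathcal{Y}$ with the decomposition then cancels the two cross terms, leaving $\langle\alpha_i^{\bm{\mathsf{Y}}}, \mathcal{K}(x_i,x_j)\alpha_j^{\bm{\mathsf{Y}}}\rangle_\mathcal{Y} + \langle\alpha_i^\perp, \mathcal{K}(x_i,x_j)\alpha_j^\perp\rangle_\mathcal{Y}$. Summed over $i,j$, the second contribution is nonnegative by the positive semi-definiteness of $\mathcal{K}$ (property 2 of \Cref{def:ovk}), so dropping the $\bm{\mathsf{Y}}^\perp$ parts only decreases the quadratic term. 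Combining both estimates shows that the projection $(\alpha_i^{\bm{\mathsf{Y}}})_i \in \bm{\mathsf{Y}}^n$ of any optimal solution is itself optimal.

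It then remains to write the reduced problem in matrix form. Every admissible multiplier reads $\alpha_i = \sum_{j=1}^n \omega_{ij} y_j$ for some row $\Omega_{i:} = (\omega_{ij})_j$, so \Cref{hyp:FL_2} turns the $i$-th data-fitting term into $L_i(\Omega_{i:}, K^Y)$, while substituting into the quadratic term gives $\sum_{i,j,k,l} \omega_{ik}\omega_{jl} M_{ijkl}$ with $M_{ijkl} = \langle y_k, \mathcal{K}(x_i,x_j) y_l\rangle_\mathcal{Y}$. Reshaping the $n^4$ tensor $M$ into the $n^2\times n^2$ matrix $\tilde M$, with the block layout chosen so that the index pairs $(i,j),(k,l)$ match the Kronecker layout of $\Omega\otimes\Omega$, recasts this bilinear form as $\mathbf{Tr}(\tilde M^\top(\Omega\otimes\Omega))$, yielding the first claimed dual. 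Plugging $\hat\alpha_i = \sum_j \hat\omega_{ij} y_j$ into the expansion of \Cref{thm:dual_pbm} then produces \eqref{eq:expansion}.

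Finally, under \Cref{hyp:sum_decompo} the linearity of the inner product gives $M_{ijkl} = \sum_{t=1}^T k_t(x_i,x_j)\langle y_k, A_t y_l\rangle_\mathcal{Y} = \sum_{t=1}^T [K_t^X]_{ij}[K_t^Y]_{kl}$. I would then carry out the elementary index contraction $\sum_{i,j,k,l}[K_t^X]_{ij}\omega_{ik}[K_t^Y]_{kl}\omega_{jl} = \mathbf{Tr}\big((K_t^X)^\top \Omega K_t^Y \Omega^\top\big)$ and use the symmetry of the scalar Gram matrix $K_t^X$ to drop the transpose, obtaining $\sum_t \mathbf{Tr}(K_t^X \Omega K_t^Y \Omega^\top)$ and hence \Cref{pbm:omega_pbm}. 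The only genuinely delicate step is the optimality-preserving reduction to $\bm{\mathsf{Y}}^n$ --- in particular the invariance of $\bm{\mathsf{Y}}^\perp$ needed to cancel the cross terms --- since the remaining manipulations are routine substitutions and reshapings.
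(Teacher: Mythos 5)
Your proposal is correct and follows essentially the same route as the paper's proof: orthogonal decomposition of the dual variables onto $\bm{\mathsf{Y}}\oplus\bm{\mathsf{Y}}^\perp$, cancellation of the cross terms via \Cref{hyp:stable_ovk} together with the adjoint property, nonnegativity of the residual quadratic term by positive semi-definiteness, then reparametrization $\alpha_i=\sum_j\omega_{ij}y_j$ and the tensor contraction. The only cosmetic difference is that you phrase the cross-term cancellation as invariance of $\bm{\mathsf{Y}}^\perp$, whereas the paper groups the two cross terms by symmetry and kills them using the invariance of $\bm{\mathsf{Y}}$ directly; these are the same argument.
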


See \Cref{apx:double_rt} for the proof.
This theorem can be seen as a \emph{Double Representer Theorem}, since both theorems share analogous proofs and  consequences: a search domain reduction, respectively from $\mathcal{H}_\mathcal{K}$ to $\mathcal{Y}^n$, and $\mathcal{Y}^n$ to $\mathbb{R}^{n \times n}$.

\begin{remark}
The \emph{Double Representer Theorem} emphasizes that only the knowledge of the $n^4$ tensor $M$ is required to make OVK problems in infinite dimensional output spaces computable.
Although it might seem prohibitive at first sight, one has to keep in mind that, like for scalar kernel methods, a first $n^2$ cost is needed to use (input) kernels with infinite dimensional feature maps, while the second $n^2$ cost allows for handling infinite dimensional outputs.
In the case of a decomposable kernel, one has $M_{ijkl} = K^X_{ij} K^Y_{kl}$.
One only needs two $n^2$ matrices, recovering the scalar complexity.
\end{remark}

We now present a non-exhaustive list of admissible losses (one may refer to \Cref{apx:good_losses} for the proof).
\begin{proposition}\label{prop:good_losses}
The following losses have Fenchel-Legendre transforms verifying \mbox{Assumptions} \ref{hyp:FL_1} and \ref{hyp:FL_2}:
\begin{itemize}
\item $\ell_i(y) = f(\left\langle y, z_i\right\rangle)$, $z_i \in Y$ and $f:\mathbb{R} \rightarrow \mathbb{R}$ convex. This encompasses maximum-margin regression, obtained with $z_i = y_i$ and $f(t) = \max(0, 1 -t)$.
\item $\ell(y) = f(\|y\|)$, $f:\mathbb{R}_+ \rightarrow \mathbb{R}$ convex increasing s.t. $t \mapsto \frac{f'(t)}{t}$ is continuous over $\mathbb{R}_+$. This includes all power functions $\frac{\lambda}{\eta}\|y\|_\mathcal{Y}^\eta$ for $\eta > 1$ and $\lambda > 0$.
\item $\forall \lambda > 0$, with $\mathcal{B}_\lambda$ the centered ball of radius $\lambda$,
      \begin{align*}
      &\sqbullet~\ell(y) = \lambda \|y\|,                 && \sqbullet~\ell(y) = \lambda \|y\|\log(\|y\|),\\
      &\sqbullet~\ell(y) = \chi_{\mathcal{B}_\lambda}(y), && \sqbullet~\ell(y) = \lambda (\exp(\|y\|) - 1).
      \end{align*}
\item $\ell_i(y) = f(y-y_i)$, $f^\star$ verifying \Cref{hyp:FL_1,hyp:FL_2}.
\item Any infimal convolution involving functions satisfying \Cref{hyp:FL_1,hyp:FL_2}.
This encompasses $\epsilon$-insensitive losses \citep{sangnier2017data}, the Huber loss \citep{huber1964robust}, and generally all Moreau or Pasch-Hausdorff envelopes \citep{moreau1962fonctions,bauschke2011convex}.
\end{itemize}
\end{proposition}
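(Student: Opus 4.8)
The plan is to treat each family in the list separately, verifying \Cref{hyp:FL_1,hyp:FL_2} in every case through elementary Fenchel--Legendre (FL) calculus. The unifying principle is that, for all the listed losses, the supremum defining $\ell_i^\star(\alpha)$ is controlled by the projection of $\alpha$ onto $\bm{\mathsf{Y}}$: displacing $\alpha$ along $\bm{\mathsf{Y}}^\perp$ never decreases $\ell_i^\star$, which is exactly \Cref{hyp:FL_1}, while on $\bm{\mathsf{Y}}$ the transform collapses to inner products and hence depends on the $y_j$ only through $K^Y$, which is \Cref{hyp:FL_2}. The building blocks (bullets one to three) are verified directly, and the last two bullets then follow from closure of both assumptions under translation and infimal convolution.

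For the linear family $\ell_i(y) = f(\langle y, z_i\rangle)$ with $z_i \in \bm{\mathsf{Y}}$, I would write $\ell_i^\star(\alpha) = \sup_{y} \langle\alpha, y\rangle - f(\langle y, z_i\rangle)$ and note that, since the evaluation point $\alpha = -\sum_j \omega_j y_j$ and $z_i$ both lie in $\bm{\mathsf{Y}}$, the objective depends only on the $\bm{\mathsf{Y}}$-component of $y$; the supremum therefore restricts to $y = \sum_m \beta_m y_m \in \bm{\mathsf{Y}}$, yielding the finite program $\sup_{\bm{\beta} \in \mathbb{R}^n} -\bm{\omega}^\top K^Y \bm{\beta} - f(\bm{\zeta}_i^\top K^Y \bm{\beta})$, where $z_i = \sum_k \zeta_{ik} y_k$ and $\bm{\zeta}_i = (\zeta_{ik})_k$. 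This is a function of $\bm{\omega}$ and $K^Y$ alone, giving \Cref{hyp:FL_2}. For \Cref{hyp:FL_1}, any component of $\alpha$ orthogonal to $z_i$ sends the supremum to $+\infty$, so $\ell_i^\star$ is finite only on the line $\mathbb{R} z_i \subset \bm{\mathsf{Y}}$; adding a nonzero $\alpha^\perp \in \bm{\mathsf{Y}}^\perp$ to $\alpha^{\bm{\mathsf{Y}}}$ moves the argument off this line (as $z_i \in \bm{\mathsf{Y}}$), forcing $\ell_i^\star(\alpha^{\bm{\mathsf{Y}}} + \alpha^\perp) = +\infty$, so $\ell_i^\star(\alpha^{\bm{\mathsf{Y}}}) \le \ell_i^\star(\alpha^{\bm{\mathsf{Y}}} + \alpha^\perp)$ in all cases. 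The hinge loss $f(t) = \max(0, 1-t)$ with $z_i = y_i$ is the announced max-margin instance.

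The crux is the radial family $\ell(y) = f(\|y\|)$, and I expect the radial FL identity, together with the reduction of the defining supremum to $\bm{\mathsf{Y}}$, to be the main technical point; everything else is bookkeeping. Here I would establish $\ell^\star(\alpha) = f^\star(\|\alpha\|)$, with $f^\star(s) = \sup_{r \ge 0} rs - f(r)$ the one-sided transform, by first optimizing over the direction of $y$ (aligned with $\alpha$) and then over its radius. Two observations then close the case: $f^\star$ is automatically nondecreasing on $\mathbb{R}_+$, since each map $s \mapsto rs - f(r)$ is nondecreasing for $r \ge 0$, so \Cref{hyp:FL_1} follows from $\|\alpha^{\bm{\mathsf{Y}}}\| \le \|\alpha^{\bm{\mathsf{Y}}} + \alpha^\perp\|$ (Pythagoras); and $\|\sum_j \omega_j y_j\|^2 = \bm{\omega}^\top K^Y \bm{\omega}$, so $\ell^\star(-\sum_j \omega_j y_j) = f^\star(\sqrt{\bm{\omega}^\top K^Y \bm{\omega}})$ depends only on $\bm{\omega}$ and $K^Y$, giving \Cref{hyp:FL_2}. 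The four explicit losses of the third bullet, $\lambda\|y\|$, $\lambda\|y\|\log(\|y\|)$, $\chi_{\mathcal{B}_\lambda}(y)$, and $\lambda(\exp(\|y\|) - 1)$, are all of this radial form, so they follow once each underlying $f$ is checked to be convex; the monotonicity of $f^\star$ is then free, so the condition that $t \mapsto f'(t)/t$ be continuous plays a role only in the regularity of the loss and need not be invoked for the two assumptions.

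Finally, the last two bullets follow from stability of the assumptions under standard FL operations. For the translated loss $\ell_i(y) = f(y - y_i)$ I would use $(f(\cdot - y_i))^\star(\alpha) = f^\star(\alpha) + \langle \alpha, y_i\rangle$: since $y_i \in \bm{\mathsf{Y}}$, the added linear term is insensitive to $\alpha^\perp$ and evaluates through $K^Y$ at the points $-\sum_j \omega_j y_j$, so both assumptions transfer from $f^\star$ to $\ell_i^\star$. For infimal convolutions I would invoke the always-valid identity $(f \infconv g)^\star = f^\star + g^\star$: \Cref{hyp:FL_1} is preserved by summation (add the two inequalities) and \Cref{hyp:FL_2} likewise (add the two representations). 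This covers $\epsilon$-insensitive losses, the Huber loss, and Moreau or Pasch--Hausdorff envelopes, all obtained as infimal convolutions of the norm, the squared norm, and ball indicators already treated above.
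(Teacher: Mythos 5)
Your proposal is correct and follows the same overall architecture as the paper's proof: treat each bullet by controlling the Fenchel--Legendre transform, deduce \Cref{hyp:FL_1} from monotonicity in $\|\alpha\|$ (or from the transform being $+\infty$ off a line contained in $\bm{\mathsf{Y}}$ for the first bullet), deduce \Cref{hyp:FL_2} by expanding inner products into $K^Y$, and close with the translation identity $(f(\cdot-y_i))^\star = f^\star + \langle\cdot,y_i\rangle$ and $(f\infconv g)^\star = f^\star + g^\star$. The one place you genuinely diverge is the radial case. The paper derives $\ell^\star(\alpha) = \|\alpha\|{f'}^{-1}(\|\alpha\|) - f\circ{f'}^{-1}(\|\alpha\|)$ from the first-order condition (which is where the continuity of $t\mapsto f'(t)/t$ is used) and then checks monotonicity by differentiating; it then computes the four explicit transforms of the third bullet one by one. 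You instead reduce to the one-sided transform $\ell^\star(\alpha)=\sup_{r\ge0}\, r\|\alpha\|-f(r)$ and observe that a supremum of nondecreasing affine maps of $\|\alpha\|$ is nondecreasing, which yields \Cref{hyp:FL_1} with no differentiability or invertibility of $f'$, and lets you absorb the third bullet (including the extended-real-valued $\chi_{\mathcal{B}_\lambda}$ and the non-monotone-near-zero $r\log r$) into the same argument without computing any transform explicitly. This is a mild but real gain in generality and economy; what you lose relative to the paper is only the closed-form expressions of the transforms, which the paper reuses later when instantiating the dual problems in \Cref{thm:loss_instantiation}.
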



\subsection{Approximating the Dual Problem}
\label{sec:dual_approx}

If \Cref{hyp:stable_ovk} is not satisfied, another way to get a finite dimensional decomposition similar to that of \Cref{thm:double_rt} is to approximate the dual problem.
This may be done by restricting the dual variables to suitable finite dimensional subsets of $\mathcal{Y}$, if the following hypothesis on kernel $\mathcal{K}$ holds.

\begin{assumption}\label{hyp:compact_self}
The kernel $\mathcal{K}=k \cdot A$ is a separable OVK, with $A$ a compact operator.
\end{assumption}

Recalling that $A$ is by design self adjoint and positive, its compactness then allows for a spectral decomposition: there exists an orthonormal basis $(\psi_j)_{j=1}^\infty$ of $\mathcal{Y}$, and some positive $(\lambda_j)_{j=1}^\infty$, ordered in a non-increasing fashion and converging to zero, such that $A = \sum_{j=1}^\infty\lambda_j \psi_j \otimes \psi_j$ \citep{osborn1975spectral}.


Using such a basis, one can say that there exists $(\hat{\omega}_i)_{i=1}^n \in \ell^2(\mathbb{R})^n$ such that $\forall i \leq n, \hat{\alpha}_i = \sum_{j=1}^\infty \hat{\omega}_{ij} \psi_j$.
Since this leads to an infinite size representation of the dual variables, the idea is then to restrict the search space to the eigenspace associated to the $m$ largest eigenvalues of $A$, for some $m>0$.
Let $\widetilde{\mathcal{Y}}_m$ denote $\text{span}(\{\psi_j\}_{j=1}^m)$, and $S=\text{diag}(\lambda_j)_{j=1}^m$.
An approximated dual problem reads
\begin{problem}\label{pbm:dual_pbm_approx}
\min_{(\alpha_i)_{i=1}^n\in \widetilde{\mathcal{Y}}_m^n} ~ \sum_{i=1}^n\ell_i^\star(-\alpha_i) + \frac{1}{2\Lambda n} \sum_{i,j=1}^n \left\langle \alpha_i, \mathcal{K}(x_i, x_j) \alpha_j\right\rangle_\mathcal{Y},
\end{problem}

We now state a condition similar to \Cref{hyp:FL_2}, which makes the solution to \Cref{pbm:dual_pbm_approx} computable.
\begin{assumption}\label{hyp:FL_3}
$\forall i \le n, \exists L_i: \mathbb{R}^{2m} \rightarrow \mathbb{R}$ such that \mbox{$\forall~\bm{\omega} = (\omega_j)_{j \le m} \in \mathbb{R}^m$}, ~~ $\textstyle \ell_i^\star(-\sum_{j=1}^m \omega_j~\psi_j ) = L_i(\bm{\omega}, R_{i:})$, with $R \in \mathbb{R}^{n \times m}$ the matrix such that $R_{ij} = \langle y_i,\psi_j \rangle_{\mathcal{Y}}$.
\end{assumption}
\begin{remark}

  \Cref{hyp:FL_3} is similar to \Cref{hyp:FL_2}, except that the output Gram matrix $K^Y$ is replaced by matrix $R$ storing the dot products between the orthonormal family $\{\psi_j\}_{j=1}^m$ and the outputs. In particular, all losses explicited in \Cref{prop:good_losses} have FL transforms verifying \Cref{hyp:FL_3}.
\end{remark}
\begin{theorem}\label{thm:double_rt_approx}
Let $\mathcal{K}$ be an OVK meeting Assumption \ref{hyp:compact_self} and \mbox{$\ell: \mathcal{Y} \times \mathcal{Y} \rightarrow \mathbb{R}$} be a loss function with FL transforms \mbox{satisfying} Assumption \ref{hyp:FL_3}.
Then, \Cref{pbm:dual_pbm_approx} is equivalent to
\begin{problem}\label{pbm:omega_pbm_approx}
\min_{\Omega \in \mathbb{R}^{n \times m}} ~ \sum_{i=1}^n L_i\left( \Omega_{i:}, R_{i:} \right) + \frac{1}{2\Lambda n} \mathbf{Tr}\left( K^X \Omega S \Omega^\top\right).
\end{problem}
Denoting by $\hat{\Omega} = [\hat{\omega}_{ij}] \in \mathbb{R}^{n \times m}$ the solution to \Cref{pbm:omega_pbm_approx}, the associated predictor is finally given by
\begin{equation}\label{eq:expansion_approx}
\hat{h} = \frac{1}{\Lambda n}\sum_{i=1}^n \sum_{j=1}^{m} k(\cdot, x_i)~\lambda_j~\hat{\omega}_{ij}~\psi_j,
\end{equation}
\end{theorem}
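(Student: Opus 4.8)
The plan is to exploit the fact that the constraint $(\alpha_i)_{i=1}^n \in \widetilde{\mathcal{Y}}_m^n$ provides an explicit finite parameterization of the dual variables, and then to rewrite each of the two terms of \Cref{pbm:dual_pbm_approx}'s objective in terms of the resulting coefficient matrix. Since $\{\psi_j\}_{j=1}^m$ is an orthonormal basis of $\widetilde{\mathcal{Y}}_m$, every $\alpha_i \in \widetilde{\mathcal{Y}}_m$ writes uniquely as $\alpha_i = \sum_{j=1}^m \omega_{ij}\,\psi_j$, so that the map $(\alpha_i)_{i=1}^n \mapsto \Omega = [\omega_{ij}] \in \mathbb{R}^{n \times m}$ is a bijection. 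It then suffices to check that the objective is preserved under this identification, which establishes the claimed equivalence (equal infima and corresponding minimizers), and finally to substitute the optimal coefficients into the representer expansion of \Cref{thm:dual_pbm}.

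First I would handle the data-fitting term. For each $i \le n$, the summand $\ell_i^\star(-\alpha_i)$ equals $\ell_i^\star(-\sum_{j=1}^m \omega_{ij}\,\psi_j)$, which is exactly the quantity controlled by \Cref{hyp:FL_3}; applying it directly yields $\ell_i^\star(-\alpha_i) = L_i(\Omega_{i:}, R_{i:})$, recovering the first sum of \Cref{pbm:omega_pbm_approx}.

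Next I would treat the quadratic term, which is where \Cref{hyp:compact_self} enters. Writing $\mathcal{K}(x_i, x_j) = k(x_i, x_j)\,A$ and using that the $\psi_j$ are eigenvectors of $A$ (so $A\psi_l = \lambda_l \psi_l$), together with the orthonormality $\langle \psi_p, \psi_l\rangle_\mathcal{Y} = \delta_{pl}$, I would collapse
\begin{equation*}
\langle \alpha_i, \mathcal{K}(x_i, x_j)\alpha_j\rangle_\mathcal{Y} = k(x_i, x_j)\sum_{l=1}^m \lambda_l\, \omega_{il}\,\omega_{jl}.
\end{equation*}
Summing over $i,j$ and recognizing the double sum as a trace then gives $\sum_{i,j}\langle \alpha_i, \mathcal{K}(x_i, x_j)\alpha_j\rangle_\mathcal{Y} = \mathbf{Tr}(K^X \Omega S \Omega^\top)$ with $S = \text{diag}(\lambda_j)_{j=1}^m$, matching the second term of \Cref{pbm:omega_pbm_approx} and completing the equivalence. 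Note that the truncation to the top $m$ eigenvalues is precisely why only $\lambda_1, \dots, \lambda_m$ (i.e.\ $S$) survive here.

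Finally, for the predictor, I would substitute the optimal $\hat\alpha_i = \sum_{j=1}^m \hat\omega_{ij}\,\psi_j$ into $\hat h = \frac{1}{\Lambda n}\sum_{i=1}^n \mathcal{K}(\cdot, x_i)\hat\alpha_i$ from \Cref{thm:dual_pbm}. Since $\mathcal{K}(\cdot, x_i)\hat\alpha_i = k(\cdot, x_i)\,A\hat\alpha_i$ and $A\hat\alpha_i = \sum_{j=1}^m \hat\omega_{ij}\,\lambda_j\,\psi_j$ by the eigenrelation, this yields exactly \eqref{eq:expansion_approx}. The computation is essentially routine; the only point requiring care is the bookkeeping of the spectral truncation---making sure that restricting $\alpha_i$ to $\widetilde{\mathcal{Y}}_m$ both diagonalizes the quadratic form against $A$ and retains only the $m$ largest eigenvalues---which I expect to be the main (though minor) obstacle.
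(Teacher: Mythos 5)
Your proposal is correct and follows exactly the route the paper takes (the paper's own proof is a one-line remark that the dual variables admit a finite expansion in $\{\psi_j\}_{j=1}^m$; you simply fill in the orthonormality/eigenvalue computations that make the trace term and the predictor formula explicit). No gaps.
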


\begin{remark}
The rationale behind the above approximation is that under compactness of $A$, \Cref{eq:expansion_approx} constitutes a reasonable approximation of \Cref{eq:decompo_estim_dual}.
Notice that \citet{kadri2016ovk} use a truncated spectral decomposition of the operator to implement a functional version of Kernel Ridge Regression, without resorting to dualization however.
\end{remark}

\section{Application to Robust Losses}
 \label{sec:3-robust}

We now instantiate \Cref{thm:double_rt}'s dual problem for three loss functions encouraging data sparsity and robustness. They write as infimal convolutions, and are thus hardly tractable in the primal.
Their dual problems enjoy simple resolution algorithms that are thoroughly detailed.
A stability analysis is also carried out to highlight the hyperparameters impact.

\subsection{Complete Dual Resolution for Three Robust Losses}

As a first go, we recall the important notion of $\epsilon$-insensitive losses.
Following in the footsteps of \citet{sangnier2017data}, we extend them in a natural way from $\mathbb{R}^p$ to any Hilbert space $\mathcal{Y}$.
To avoid additional notation, in this subsection $\ell$ denotes the loss taken w.r.t. one argument (previously $\ell_i$).

\begin{definition}\label{def:eps_insensitive}
Let $\ell: \mathcal{Y} \rightarrow \mathbb{R}_+$ be a convex loss such that $\ell(0) = 0$, and $\epsilon > 0$.
The $\epsilon$-insensitive version of $\ell$, denoted $\ell_\epsilon$, is defined by $\ell_\epsilon(y) = (\ell \infconv \chi_{\mathcal{B}_\epsilon})(y)$, or again:
\begin{equation*}
\forall y \in \mathcal{Y}, ~~ \ell_\epsilon(y) =
\left\{\begin{matrix*}
\begin{array}{cl}
0 & \textrm{if } \|y\|_\mathcal{Y} \le \epsilon\\[0.1cm]
\displaystyle \inf_{\|d\|_\mathcal{Y} \le 1} \ell(y - \epsilon d) & \textrm{otherwise}
\end{array}
\end{matrix*}\right..
\end{equation*}
\end{definition}

In other terms, $\ell_\epsilon(y)$ is the smallest value of $\ell$ within the ball of radius $\epsilon$ centered at $y$.
As revealed by the next definition, natural choices for $\ell$ yield extensions of celebrated scalar loss functions to infinite dimensional Hilbert spaces.

\begin{definition}\label{def:eps_svr_ridge}
If $\ell = \|\cdot\|_\mathcal{Y}$, then $\|\cdot\|_{\mathcal{Y}, \epsilon} = \max(\|\cdot\|_\mathcal{Y} - \epsilon, 0)$, and the related problem is the natural extension of $\epsilon$-SVR.\\[0.15cm]
If $\ell = \|\cdot\|_\mathcal{Y}^2$, then $\|\cdot\|_{\mathcal{Y}, \epsilon}^2 = \max(\|\cdot\|_\mathcal{Y} - \epsilon, 0)^2$, and the related problem is called the $\epsilon$-insensitive Ridge regression.
\end{definition}

The third framework that nicely falls into our resolution methodology is the Huber loss regression \citep{huber1964robust}.
Tailored to induce robustness, the Huber loss function does not feature convolution with $\chi_{\mathcal{B}_\epsilon}$ but rather between the first two powers of the Hilbert norm (that used in \Cref{def:eps_svr_ridge}).

\begin{definition}\label{def:Huber}
The Huber loss of parameter $\kappa$ is given by $\ell_{H, \kappa}(y) = (\kappa \|\cdot\|_\mathcal{Y} \infconv \frac{1}{2} \|\cdot\|_\mathcal{Y}^2)(y)$, or again:
\begin{equation*}
\forall y \in \mathcal{Y}, ~~ \ell_{H, \kappa}(y) =
\left\{\begin{matrix*}
\begin{array}{cl}
\frac{1}{2} \|y\|_\mathcal{Y}^2 & \textrm{if } \|y\|_\mathcal{Y} \le \kappa\\[0.15cm]
\kappa\left(\|y\|_\mathcal{Y} - \frac{\kappa}{2}\right) & \textrm{otherwise}
\end{array}
\end{matrix*}\right..
\end{equation*}
\end{definition}

Due to its asymptotic behavior as $\|\cdot\|_\mathcal{Y}$, the Huber loss is useful when the training data is heavy tailed or contains outliers.
Illustrations of \Cref{def:eps_svr_ridge,def:Huber}'s loss functions in one and two dimensions are available in \Cref{apx:loss_fig}.
Interestingly, \Cref{pbm:omega_pbm} for these three losses -- and an identity decomposable kernel -- admits a very nice writing, allowing for an efficient resolution.

\begin{theorem}\label{thm:loss_instantiation}
If $\mathcal{K} = k~\mathbf{I}_\mathcal{Y}$, the solutions to the $\epsilon$-Ridge regression, $\kappa$-Huber regression, and $\epsilon$-SVR primal problems
\begin{align*}
(P1) \quad \min_{h \in \mathcal{H}_\mathcal{K}}& ~~~ \frac{1}{2n}~\sum_{i=1}^n \| h(x_i) - y_i \|_{\mathcal{Y}, \epsilon}^2 + \frac{\Lambda}{2} \|h\|_{\mathcal{H}_\mathcal{K}}^2,\\
(P2) \quad \min_{h \in \mathcal{H}_\mathcal{K}}& ~~~~~ \frac{1}{n}~\sum_{i=1}^n \ell_{H, \kappa}(h(x_i) - y_i) + \frac{\Lambda}{2} \|h\|_{\mathcal{H}_\mathcal{K}}^2,\\
(P3) \quad \min_{h \in \mathcal{H}_\mathcal{K}}& ~~~~~ \frac{1}{n}~\sum_{i=1}^n \| h(x_i) - y_i \|_{\mathcal{Y}, \epsilon} + \frac{\Lambda}{2} \|h\|_{\mathcal{H}_\mathcal{K}}^2,
\end{align*}
are given by \Cref{eq:expansion}, with $\hat{\Omega} = \hat{W}V^{-1}$, and $\hat{W}$ the solution to the respective finite dimensional dual problems
\begin{equation*}
\begin{aligned}
(D1) \quad &\min_{W \in \mathbb{R}^{n \times n}} ~~ &&\frac{1}{2} \left\|AW - B\right\|_\mathrm{Fro}^2 + \epsilon~\|W \|_{2, 1},\\[0.3cm]
(D2) \quad &\min_{W \in \mathbb{R}^{n \times n}} ~~ &&\frac{1}{2} \left\|AW - B\right\|_\mathrm{Fro}^2,\\
&~~~\text{s.t.} && \|W\|_{2, \infty} \le \kappa,\\[0.3cm]
(D3) \quad &\min_{W \in \mathbb{R}^{n \times n}} ~~ &&\frac{1}{2} \left\|AW - B\right\|_\mathrm{Fro}^2 + \epsilon~\|W \|_{2, 1},\\
&~~~\text{s.t.} && \|W\|_{2, \infty} \le 1,
\end{aligned}
\end{equation*}
with $V$, $A$, $B$ such that: $VV^\top = K^Y$, $A^\top A = \frac{K^X}{\Lambda n} + \mathbf{I}_n$ (or $A^\top A = K^X/(\Lambda n)$ for the $\epsilon$-SVR), and $A^\top B = V$.
\end{theorem}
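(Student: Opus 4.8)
The plan is to specialize \Cref{thm:double_rt} to the identity-decomposable kernel and then read off the three dual problems through an explicit Fenchel--Legendre computation followed by a single change of variables. Since $\mathcal{K} = k\,\mathbf{I}_\mathcal{Y}$, \Cref{hyp:stable_ovk} holds automatically (as $\mathcal{K}(x_i,x_j)y = k(x_i,x_j)y$ stays in $\bm{\mathsf{Y}}$), and \Cref{hyp:sum_decompo} holds with $T=1$, $A_1=\mathbf{I}_\mathcal{Y}$, so that $K^X_1 = K^X$ and $K^Y_1 = K^Y$. The three losses are $\ell_i(y) = f(y-y_i)$ for $f \in \{\tfrac12\|\cdot\|_{\mathcal{Y},\epsilon}^2,\ \ell_{H,\kappa},\ \|\cdot\|_{\mathcal{Y},\epsilon}\}$, each an infimal convolution of functions listed in \Cref{prop:good_losses}; hence \Cref{hyp:FL_1,hyp:FL_2} are met and \Cref{thm:double_rt} reduces \Cref{pbm:primal_pbm} to \Cref{pbm:omega_pbm}, that is $\min_\Omega \sum_i L_i(\Omega_{i:},K^Y) + \frac{1}{2\Lambda n}\mathbf{Tr}(K^X\Omega K^Y\Omega^\top)$. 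It then remains to (i) make each $L_i$ explicit and (ii) transform the problem into (D1)--(D3).

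For step (i) I would use the shift rule $\ell_i^\star(\alpha) = f^\star(\alpha) + \langle\alpha,y_i\rangle_\mathcal{Y}$ together with the infimal-convolution rule $(f_1\infconv f_2)^\star = f_1^\star + f_2^\star$, after checking its mild qualification hypotheses. With the standard Hilbert-space transforms $(\tfrac12\|\cdot\|^2)^\star = \tfrac12\|\cdot\|^2$, $(\lambda\|\cdot\|)^\star = \chi_{\mathcal{B}_\lambda}$ and $\chi_{\mathcal{B}_\lambda}^\star = \lambda\|\cdot\|$, this yields $f^\star = \tfrac12\|\cdot\|^2 + \epsilon\|\cdot\|$ (Ridge), $f^\star = \tfrac12\|\cdot\|^2 + \chi_{\mathcal{B}_\kappa}$ (Huber), and $f^\star = \epsilon\|\cdot\| + \chi_{\mathcal{B}_1}$ (SVR). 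Evaluating at $\alpha = -\sum_j\Omega_{ij}y_j$, every term is expressed through $\|\alpha\|^2 = \Omega_{i:}K^Y\Omega_{i:}^\top$ and $\langle\alpha,y_i\rangle = -(\Omega K^Y)_{ii}$, which is exactly the promised dependence $L_i(\Omega_{i:},K^Y)$ and reconfirms \Cref{hyp:FL_2}.

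For step (ii) the unifying device is a factor $V$ with $VV^\top = K^Y$ and the substitution $W = \Omega V$ (bijective when $K^Y\succ 0$; a reduced factorization handles the degenerate case). The key identities are $\Omega_{i:}K^Y\Omega_{i:}^\top = \|W_{i:}\|_2^2$, $\mathbf{Tr}(\Omega K^Y) = \mathbf{Tr}(W^\top V)$, and $\mathbf{Tr}(K^X\Omega K^Y\Omega^\top) = \mathbf{Tr}(K^X W W^\top)$. Under these, the $\|\cdot\|$ pieces of $f^\star$ become the mixed norm $\epsilon\|W\|_{2,1}$, the $\chi_{\mathcal{B}_\lambda}$ pieces become the row constraints $\|W\|_{2,\infty}\le\lambda$, and the smooth part combines with the regularizer by completing the square: choosing $A^\top A = \frac{K^X}{\Lambda n}+\mathbf{I}_n$ (Ridge and Huber, the $\mathbf{I}_n$ absorbing the $\tfrac12\|W\|_{\mathrm{Fro}}^2$ arising from the $\tfrac12\|\cdot\|^2$ term of $f^\star$) or $A^\top A = \frac{K^X}{\Lambda n}$ (SVR, where no such term appears), together with $A^\top B = V$, turns the quadratic part into $\tfrac12\|AW-B\|_{\mathrm{Fro}}^2$ up to the $W$-independent constant $\tfrac12\|B\|_{\mathrm{Fro}}^2$. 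This is precisely (D1)--(D3), and setting $\hat\Omega = \hat W V^{-1}$ in \eqref{eq:expansion} recovers the predictor.

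I expect the main obstacle to be the bookkeeping that unifies the three cases rather than any single hard estimate: correctly attributing the $+\mathbf{I}_n$ in $A^\top A$ to the quadratic term of $f^\star$ (present for Ridge and Huber, absent for SVR), and checking that $A$ and $B$ are well defined. The matrix $\frac{K^X}{\Lambda n}+\mathbf{I}_n$ is always positive definite, so $A$ (e.g.\ its symmetric square root) and then $B = A^{-\top}V$ exist; for the SVR one must additionally assume $K^X$ definite, or argue that $V$ lies in the range of $A^\top$ so that $A^\top B = V$ is solvable. A final routine verification is that the qualification conditions for the infimal-convolution duality rule hold for these particular losses.
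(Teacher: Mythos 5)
Your proposal is correct and follows essentially the same route as the paper's proof: compute the Fenchel--Legendre transforms via the shift and infimal-convolution rules, restrict the dual variables to $\bm{\mathsf{Y}}^n$ via the Double Representer Theorem, reparametrize with $W=\Omega V$ where $VV^\top=K^Y$, and complete the square using $A^\top A=\tilde K$, $A^\top B=V$. Your added remark on the solvability of $A^\top B=V$ in the $\epsilon$-SVR case (where $\tilde K=K^X/(\Lambda n)$ may be singular) is a small point the paper glosses over, but it does not change the argument.
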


\Cref{thm:loss_instantiation}'s proof is detailed in \Cref{apx:loss_instantiation}.
If $\mathcal{K}$ is not identity decomposable, but only satisfies \Cref{hyp:sum_decompo}, the dual problems do not admit compact writings such as those of \Cref{thm:loss_instantiation}.
Nonetheless, they are still easily solvable, and the standard Ridge regression is recovered for $\epsilon=0$ or $\kappa = +\infty$.
This is discussed at length in the Appendix.

Problem $(D1)$ is a Multi-Task Lasso problem \citep{obozinski2010joint}.
It can be solved by Projected Gradient Descent (PGD), that involves the Block Soft Thresholding operator such that BST$(x, \tau) = \left(1 - \tau/\|x\|\right)_+x$.
Problem $(D2)$ is a constrained least square problem, that also admits a resolution through PGD, but with the Projection operator such that $\text{Proj}(x, \tau) = \min\left(\tau/\|x\|, 1\right)x$.
Finally, Problem $(D3)$ combines both non-smooth terms and consequently both projection steps.
Given a stepsize $\eta$, and $T$ a number of epoch, the algorithms are detailed in \Cref{alg:pgd}.
Note that $\tilde{K}$'s Singular Value Decomposition is not necessary, since the computations only involve $A^\top A = \widetilde{K}$ and $A^\top B = V$.

\begin{algorithm}[!ht]
\SetKwInOut{Input}{input}
\SetKwInOut{Init}{init}
\SetKwInOut{Parameter}{Param}
\caption{Projected Gradient Descents (PGDs)}
\Input{~Gram matrices $K^X$, $K^Y$, parameters $\Lambda$, $\epsilon$, $\kappa$}\vspace{0.1cm}
\Init{~$\widetilde{K} = \frac{1}{\Lambda n} K^X + \textbf{I}_n$ (or $\widetilde{K} = \frac{1}{\Lambda n} K^X$ for $\epsilon$-SVR),\\\vspace{0.1cm}~$K^Y = VV^\top$, $W = \bm{0}_{\mathbb{R}^{n \times n}}$}\vspace{0.1cm}
\For{epoch from $1$ to $T$}{\vspace{0.1cm}
    \tcp{gradient step}
    $W = W - \eta(\widetilde{K}W - V)$

    \tcp{projection step}
    \For{row $i$ from $1$ to $n$}{\vspace{0.1cm}

        $W_{i:} = \text{BST}\left(W_{i:}, \epsilon\right)$ \hspace{0.86cm}\tcp{if Ridge or SVR}

        $W_{i:} = \text{Proj}\left(W_{i:}, \kappa \text{~or~}1\right)$ \hspace{0.2cm}\tcp{if Huber or SVR}

        }
    }
    \vspace{0.1cm}

\Return{$W$}
\label{alg:pgd}
\end{algorithm}


\subsection{Approximate Dual Resolution with Huber Loss}

In this section we solve \Cref{pbm:dual_pbm_approx} for the Huber loss and $\mathcal{Y} = L^2[\Theta,\mu]$, with $\Theta$ a compact set endowed with measure $\mu$.
A classical choice of OVK is then $\mathcal{K}=k_{\mathcal{X}} \cdot T_k$, $k_{\mathcal{X}}$ being a scalar kernel over the inputs, and $T_k$ the integral operator associated to a scalar kernel $k \colon \Theta \times \Theta \to \mathbb{R}$ defined for all $g \in L^2[\Theta,\mu]$ by $T_k g = \int_{\Theta} k(\cdot,\theta) g(\theta) \mathrm{d}\mu(\theta)$.
Continuity of $k$ grants compactness of $T_k$, allowing for the methodology presented in \Cref{sec:dual_approx}.
In the following, $(\lambda_j,\psi_j)_{j=1}^m$ denotes the eigendecomposition of $T_k$, which is dependent both in $k$ and $\mu$, and can be obtained by solving a differential equation derived from the eigenvalue problem.
However, given that the optimal kernel $k$ is unknown, one can choose a Hilbertian basis $\{\psi_j\}_{j=1}^\infty$ of $L^2[\Theta,\mu]$ and a non-increasing summable sequence $(\lambda_j)_{j=1}^\infty \in \mathbb{R}_+^*$ to construct the kernel $k$, which gives direct access to $T_k$'s eigendecomposition.
\begin{theorem}\label{thm:huber_l2}
For an OVK $\mathcal{K} = k_{\mathcal{X}}~T_k$, an approximate solution to the Huber loss regression problem
\begin{equation*}
\min_{h \in \mathcal{H}_\mathcal{K}} ~ \frac{1}{n} \sum_{i=1}^n \ell_{H, \kappa}(h(x_i) - y_i) + \frac{\Lambda}{2} \|h\|_{\mathcal{H}_\mathcal{K}}^2,
\end{equation*}
is given by \Cref{eq:expansion_approx}, with $\hat{\Omega}$ the solution to the following constrained quadratic problem (with $R$ as in \Cref{hyp:FL_3}), that can be tackled by PGD in the spirit of \Cref{alg:pgd}:
\begin{problem}\label{pbm:huber_dual_l2}
\begin{aligned}
&\min_{\Omega \in \mathbb{R}^{n \times m}} ~~ && \mathbf{Tr}\left( \frac{1}{2} \Omega \Omega^\top + \frac{1}{2 \Lambda n} K^X \Omega S \Omega^\top -\Omega R^\top \right),\\
&~~~\text{s.t.} && \|\Omega\|_{2, \infty} \le \kappa.
\end{aligned}
\end{problem}
\end{theorem}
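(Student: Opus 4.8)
The plan is to specialize the approximate Double Representer Theorem (\Cref{thm:double_rt_approx}) to the Huber loss, so the whole argument reduces to checking its two hypotheses and computing one Fenchel--Legendre transform. First I would verify \Cref{hyp:compact_self}: since $k$ is continuous over the compact set $\Theta$, the integral operator $T_k$ is Hilbert--Schmidt, hence compact, self-adjoint and positive, so $\mathcal{K} = k_{\mathcal{X}}\,T_k$ is a separable OVK with compact operator $A = T_k$, and $(\lambda_j,\psi_j)_j$ is exactly the spectral decomposition used in \Cref{sec:dual_approx}. It then remains to establish \Cref{hyp:FL_3} for $\ell_i = \ell_{H,\kappa}(\cdot - y_i)$ and to read off the resulting $L_i$.

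The core computation is the FL transform of $\ell_i$. Writing $\ell_{H,\kappa} = \kappa\|\cdot\|_{\mathcal{Y}} \infconv \tfrac12\|\cdot\|_{\mathcal{Y}}^2$ (\Cref{def:Huber}) and using that the FL transform turns an infimal convolution into a sum, I would combine two elementary transforms: $(\tfrac12\|\cdot\|_{\mathcal{Y}}^2)^\star = \tfrac12\|\cdot\|_{\mathcal{Y}}^2$ by self-duality of the squared norm, and $(\kappa\|\cdot\|_{\mathcal{Y}})^\star = \chi_{\mathcal{B}_\kappa}$ by a short supremum computation (the sup of $\langle\alpha,y\rangle_{\mathcal{Y}} - \kappa\|y\|_{\mathcal{Y}}$ is $0$ when $\|\alpha\|_{\mathcal{Y}}\le\kappa$ and $+\infty$ otherwise). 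Hence $\ell_{H,\kappa}^\star = \tfrac12\|\cdot\|_{\mathcal{Y}}^2 + \chi_{\mathcal{B}_\kappa}$. A shift rule, $(f(\cdot - y_i))^\star(\alpha) = \langle\alpha,y_i\rangle_{\mathcal{Y}} + f^\star(\alpha)$, then yields $\ell_i^\star(\alpha) = \langle\alpha,y_i\rangle_{\mathcal{Y}} + \tfrac12\|\alpha\|_{\mathcal{Y}}^2 + \chi_{\mathcal{B}_\kappa}(\alpha)$.

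Next I would evaluate this at $-\alpha_i$ with $\alpha_i = \sum_{j=1}^m \omega_{ij}\psi_j \in \widetilde{\mathcal{Y}}_m$, the restriction to the leading eigenspace being precisely what makes the solution approximate. Orthonormality of the $\psi_j$ renders all three terms finite-dimensional: $\|\alpha_i\|_{\mathcal{Y}}^2 = \|\bm{\omega}\|^2$, $\langle -\alpha_i,y_i\rangle_{\mathcal{Y}} = -\sum_{j} \omega_{ij} R_{ij} = -\bm{\omega}^\top R_{i:}$, and $\chi_{\mathcal{B}_\kappa}(-\alpha_i)$ vanishes iff $\|\bm{\omega}\|\le\kappa$ by symmetry of the ball. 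This exhibits the map $L_i(\bm{\omega},R_{i:}) = \tfrac12\|\bm{\omega}\|^2 - \bm{\omega}^\top R_{i:} + \chi_{\{\|\bm{\omega}\|\le\kappa\}}$ demanded by \Cref{hyp:FL_3}.

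Finally, I would substitute $L_i$ into \Cref{pbm:omega_pbm_approx}. Summing the data-fitting terms over $i$ collapses to $\mathbf{Tr}(\tfrac12\Omega\Omega^\top - \Omega R^\top)$, the characteristic functions aggregate into the single constraint $\max_i\|\Omega_{i:}\|\le\kappa$, i.e. $\|\Omega\|_{2,\infty}\le\kappa$, and the penalty $\tfrac{1}{2\Lambda n}\mathbf{Tr}(K^X\Omega S\Omega^\top)$ is inherited verbatim from \Cref{thm:double_rt_approx}; this is exactly \Cref{pbm:huber_dual_l2}, with the predictor read off from \Cref{eq:expansion_approx}. No step is genuinely hard: the only points requiring care are the self-duality and ball-transform identities and, above all, recognizing that the indicator $\chi_{\mathcal{B}_\kappa}$ generated by the norm term is what materializes as the $\ell_{2,\infty}$ constraint rather than as an additive penalty.
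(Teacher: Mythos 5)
Your proposal is correct and follows essentially the same route as the paper, which proves this result by combining the Fenchel--Legendre computation $\bigl(\ell_{H,\kappa}(\cdot - y_i)\bigr)^\star(\alpha) = \chi_{\mathcal{B}_\kappa}(\alpha) + \tfrac{1}{2}\|\alpha\|_\mathcal{Y}^2 + \langle\alpha, y_i\rangle_\mathcal{Y}$ from \Cref{apx:huber} with the finite representation of \Cref{thm:double_rt_approx}. Your write-up simply makes explicit the steps the paper leaves implicit (verification of \Cref{hyp:compact_self} via continuity of $k$, the orthonormality argument yielding $L_i$, and the emergence of the $\ell_{2,\infty}$ constraint from the indicator), all of which check out.
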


\begin{remark}
When $\kappa$ is large, one recovers the unconstrained Ridge regression problem, whose solution enjoys a closed form expression, and for which a resolution method based on an approximation of the inverse of the integral operator $T_k$ was presented in \citet{kadri2016ovk}.
\end{remark}


\subsection{Stability Analysis}

Algorithm stability is a notion introduced by \citet{bousquet2002stability}.
It links the \emph{stability} of an algorithm, \textit{i.e.} how removing a training observation impacts the algorithm output, to the algorithm generalization capacity, \textit{i.e.} how far the empirical risk of the algorithm output is to its true risk.
The rationale behind this approach is that standard analyses of Empirical Risk Minimization rely on a crude approximation consisting in bounding the empirical process $\sup_{h \in \mathcal{H}}|\hat{\mathcal{R}}_n(h) - \mathcal{R}(h)|$.
Indeed, considering a supremum over the whole hypothesis set seems very pessimistic, as decision functions with high discrepancy $|\hat{\mathcal{R}}_n(h) - \mathcal{R}(h)|$ would hopefully not be selected by the algorithm.
However, the limitation of stability approaches lies in that algorithms performances are never compared to an optimal solution~$h^*$.
Nevertheless, their capacity to deal with OVK machines without making the trace-class assumption (as opposed to Rademacher-based strategies, see \textit{e.g.} \citet{maurer2016bounds}) make them particularly well suited to our setting.
In the footsteps of \citet{audiffren2013stability}, we now derive stability bounds for our algorithms, which are all the more relevant as they make explicit the role of hyperparameters.
For any algorithm $A$, $h_{A(\sample)}$ and $h_{A(\samplei)}$ denote the decision functions output by the algorithm, respectively trained on samples $\sample$ and $\samplei = \sample \backslash \{(x_i, y_i)\}$.
Notice that symmetry among observations in \Cref{pbm:primal_pbm} cancels the impact of $i$.
Formally, algorithm stability states as follows.

\begin{definition}
\citep{bousquet2002stability}
Algorithm $A$ has stability $\beta$ if for any sample $\sample$, and any $i \le \#\sample$, it holds: $\sup_{(x,y)\in\mathcal{X}\times\mathcal{Y}}|\ell(h_{A(\sample)}(x), y) - \ell(h_{A(\samplei)}(x), y)|\le\beta$.
\end{definition}

\begin{assumption}\label{hyp:bounded_loss}
There exists $M>0$ such that for any \mbox{sample} $\sample$ and any realization $(x,y) \in \mathcal{X} \times \mathcal{Y}$ of $(X, Y)$ it holds: $\ell(h_{A(\sample)}(x), y) \le M$.
\end{assumption}

\begin{theorem}
\citep{bousquet2002stability}
Let $A$ be an algorithm with stability $\beta$ and loss function satisfying \mbox{\Cref{hyp:bounded_loss}}.
Then, for any $n\ge1$ and $\delta \in ]0,1[$ it holds with probability at least $1 - \delta$:\vspace{-0.3cm}
\begin{equation*}
\mathcal{R}(h_{A(\sample)}) \le \hat{\mathcal{R}}_n(h_{A(\sample)}) + 2\beta + (4n\beta + M)\sqrt{\frac{\ln(1/\delta)}{2n}}.
\end{equation*}
\end{theorem}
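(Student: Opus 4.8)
The plan is to reproduce the classical bounded-differences argument of \citet{bousquet2002stability}, applied to the generalization gap
$\Phi(\sample) := \mathcal{R}(h_{A(\sample)}) - \hat{\mathcal{R}}_n(h_{A(\sample)})$ seen as a function of the $n$ i.i.d. observations. Two ingredients suffice: a uniform bound on how much $\Phi$ can change when a single observation is replaced, which feeds McDiarmid's inequality, and a bound on $\mathbb{E}[\Phi]$ of order $\beta$.

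First I would establish the bounded-differences estimate. Fix an index $i$ and let $\sample$ and $\sample'$ be two samples that coincide everywhere except at position $i$; both contain the same leave-one-out sample $\samplei$. By the definition of stability, $|\ell(h_{A(\sample)}(x), y) - \ell(h_{A(\samplei)}(x), y)| \le \beta$ for every $(x,y)$, and likewise for $\sample'$, so the triangle inequality yields $|\ell(h_{A(\sample)}(x), y) - \ell(h_{A(\sample')}(x), y)| \le 2\beta$ uniformly; integrating against a fresh $(x,y)$ gives $|\mathcal{R}(h_{A(\sample)}) - \mathcal{R}(h_{A(\sample')})| \le 2\beta$. For the empirical part I would split the sum defining $\hat{\mathcal{R}}_n$ into the $n-1$ indices $j \ne i$, each controlled by the same $2\beta$ stability bound and contributing at most $\frac{n-1}{n}\,2\beta \le 2\beta$, and the single index $j=i$ where the datum itself is swapped, controlled only by \Cref{hyp:bounded_loss} and hence by $M/n$. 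Altogether $|\Phi(\sample) - \Phi(\sample')| \le 4\beta + M/n$.

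Next I would bound the mean. Writing $\mathbb{E}[\hat{\mathcal{R}}_n(h_{A(\sample)})] = \frac{1}{n}\sum_i \mathbb{E}[\ell(h_{A(\sample)}(x_i), y_i)]$ and $\mathbb{E}[\mathcal{R}(h_{A(\sample)})] = \mathbb{E}[\ell(h_{A(\sample)}(x'), y')]$ for an independent copy $(x',y')$, a renaming of $(x_i,y_i)$ with $(x',y')$ — licit because all $n+1$ pairs are i.i.d. — rewrites each term of $\mathbb{E}[\Phi]$ as $\mathbb{E}[\ell(h_{A(\sample')}(x_i), y_i) - \ell(h_{A(\sample)}(x_i), y_i)]$, where now $\sample'$ replaces the $i$-th pair by $(x',y')$. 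Passing again through $\samplei$, stability bounds this difference by $2\beta$, whence $\mathbb{E}[\Phi] \le 2\beta$.

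Finally, McDiarmid's inequality with per-coordinate constant $c = 4\beta + M/n$ gives, with probability at least $1-\delta$, that $\Phi(\sample) \le \mathbb{E}[\Phi] + c\sqrt{n\ln(1/\delta)/2}$; substituting $\mathbb{E}[\Phi] \le 2\beta$ and using $c\sqrt{n/2} = (4n\beta + M)/\sqrt{2n}$ yields exactly the claimed term $(4n\beta + M)\sqrt{\ln(1/\delta)/(2n)}$. The main delicacy is the bounded-differences step: one must carefully separate the $n-1$ indices where only the decision function moves (controlled by stability, each $O(\beta)$) from the single index where the datum itself is swapped (controlled only by boundedness, $O(M/n)$), and consistently use $\samplei$ as the bridge between two samples differing by a replacement, since stability is stated here for removal rather than replacement.
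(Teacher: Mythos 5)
Your proof is correct, and it is the standard argument: the paper itself does not prove this statement but imports it verbatim from Bousquet and Elisseeff (2002), so there is no in-paper proof to compare against. Your reconstruction matches the classical one — bounded differences of $4\beta + M/n$ for the replace-one perturbation (splitting the $n-1$ indices controlled by stability from the single swapped index controlled by \Cref{hyp:bounded_loss}), the symmetrization bound $\mathbb{E}[\Phi]\le 2\beta$, and McDiarmid — and you correctly handle the one delicate point, namely bridging removal-based stability to replacement via the leave-one-out sample.
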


Stability for OVK machines such as in \Cref{pbm:primal_pbm} may be derived from the following two assumptions.

\begin{assumption}\label{hyp:bounded_kernel}
There exists $\gamma > 0$ such that for any input observation $x \in \mathcal{X}$ it holds: $\|\mathcal{K}(x, x)\|_\text{op} \le \gamma^2$.
\end{assumption}

\begin{assumption}\label{hyp:lipschitz_loss}
There exists $C > 0$ such that for any point
$(x,y) \in \mathcal{X} \times \mathcal{Y}$, any sample $\sample$, and any $i \le \#\sample$, it holds: $|\ell(h_\mathcal{S}(x), y) - \ell(h_{\mathcal{S}^{\backslash i}}(x), y)|\le C \|h_\mathcal{S}(x) - h_{\mathcal{S}^{\backslash i}}(x)\|_\mathcal{Y}$.
\end{assumption}

\begin{theorem}
\label{thm:stability_kadri}
\citep{audiffren2013stability}
If \Cref{hyp:bounded_kernel,hyp:lipschitz_loss} hold, then the algorithm returning the solution to \Cref{pbm:primal_pbm} has $\beta$ stability with $\beta \le C^2\gamma^2/(\Lambda n)$.
\end{theorem}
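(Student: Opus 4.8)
The plan is to follow the classical regularized-ERM stability argument of Bousquet--Elisseeff, transported verbatim to the vv-RKHS setting. Write $c_k(h) = \ell(h(x_k), y_k)$, and let $F_{\mathcal{S}}(h) = \frac{1}{n}\sum_{k=1}^n c_k(h) + \frac{\Lambda}{2}\|h\|_{\mathcal{H}_\mathcal{K}}^2$ and $F_{\mathcal{S}^{\backslash i}}(h) = \frac{1}{n}\sum_{k \ne i} c_k(h) + \frac{\Lambda}{2}\|h\|_{\mathcal{H}_\mathcal{K}}^2$ be the two regularized objectives minimized respectively by $h_{\mathcal{S}}$ and $h_{\mathcal{S}^{\backslash i}}$ (keeping the same $1/n$ normalization on both, as is standard for this argument). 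Since the omitted observation $(x_i,y_i)$ contributes only the term $\frac{1}{n} c_i$, one has $F_{\mathcal{S}} - F_{\mathcal{S}^{\backslash i}} = \frac{1}{n} c_i$.

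The core of the argument is to control $\|h_{\mathcal{S}} - h_{\mathcal{S}^{\backslash i}}\|_{\mathcal{H}_\mathcal{K}}$ through strong convexity. Because $\ell$ is convex, each $c_k$ is convex, so both $F_{\mathcal{S}}$ and $F_{\mathcal{S}^{\backslash i}}$ are $\Lambda$-strongly convex thanks to the regularizer $\frac{\Lambda}{2}\|\cdot\|^2$. Evaluating the strong-convexity inequality at the two minimizers gives
\begin{align*}
F_{\mathcal{S}}(h_{\mathcal{S}^{\backslash i}}) - F_{\mathcal{S}}(h_{\mathcal{S}}) &\ge \tfrac{\Lambda}{2}\|h_{\mathcal{S}^{\backslash i}} - h_{\mathcal{S}}\|_{\mathcal{H}_\mathcal{K}}^2, \\
F_{\mathcal{S}^{\backslash i}}(h_{\mathcal{S}}) - F_{\mathcal{S}^{\backslash i}}(h_{\mathcal{S}^{\backslash i}}) &\ge \tfrac{\Lambda}{2}\|h_{\mathcal{S}} - h_{\mathcal{S}^{\backslash i}}\|_{\mathcal{H}_\mathcal{K}}^2.
\end{align*}
Summing the two inequalities, the regularizers and all shared loss terms cancel and the left-hand side collapses to $\frac{1}{n}\bigl(c_i(h_{\mathcal{S}^{\backslash i}}) - c_i(h_{\mathcal{S}})\bigr)$, which yields
\[
\frac{1}{n}\bigl(c_i(h_{\mathcal{S}^{\backslash i}}) - c_i(h_{\mathcal{S}})\bigr) \ge \Lambda \, \|h_{\mathcal{S}} - h_{\mathcal{S}^{\backslash i}}\|_{\mathcal{H}_\mathcal{K}}^2.
\]

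It then remains to bound the left-hand side. Using \Cref{hyp:lipschitz_loss} together with the reproducing property $h(x_i) = \mathcal{K}_{x_i}^\# h$, I would write
\[
c_i(h_{\mathcal{S}^{\backslash i}}) - c_i(h_{\mathcal{S}}) \le C\,\|h_{\mathcal{S}^{\backslash i}}(x_i) - h_{\mathcal{S}}(x_i)\|_\mathcal{Y} \le C\,\|\mathcal{K}_{x_i}\|_{\text{op}}\,\|h_{\mathcal{S}^{\backslash i}} - h_{\mathcal{S}}\|_{\mathcal{H}_\mathcal{K}}.
\]
The identity needed here, a direct consequence of the reproducing property, is $\|\mathcal{K}_{x}\|_{\text{op}}^2 = \|\mathcal{K}_{x}^\# \mathcal{K}_{x}\|_{\text{op}} = \|\mathcal{K}(x,x)\|_{\text{op}}$, so that \Cref{hyp:bounded_kernel} gives $\|\mathcal{K}_{x_i}\|_{\text{op}} \le \gamma$. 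Plugging this into the strong-convexity bound and dividing by $\|h_{\mathcal{S}} - h_{\mathcal{S}^{\backslash i}}\|_{\mathcal{H}_\mathcal{K}}$ (the degenerate case where it vanishes being trivial) produces $\|h_{\mathcal{S}} - h_{\mathcal{S}^{\backslash i}}\|_{\mathcal{H}_\mathcal{K}} \le C\gamma/(\Lambda n)$. Stability then follows by invoking \Cref{hyp:lipschitz_loss} and the same evaluation-operator bound once more at an arbitrary $(x,y)$:
\[
|\ell(h_{\mathcal{S}}(x), y) - \ell(h_{\mathcal{S}^{\backslash i}}(x), y)| \le C\gamma\, \|h_{\mathcal{S}} - h_{\mathcal{S}^{\backslash i}}\|_{\mathcal{H}_\mathcal{K}} \le \frac{C^2\gamma^2}{\Lambda n},
\]
which is the claimed bound. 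The step I would be most careful about is the cancellation when summing the two strong-convexity inequalities — ensuring the $1/n$ normalization is handled consistently so that only $\frac{1}{n}c_i$ survives — together with the verification that $\|\mathcal{K}_{x}\|_{\text{op}} = \sqrt{\|\mathcal{K}(x,x)\|_{\text{op}}}$; these are precisely the places where the infinite-dimensional output could a priori cause trouble, but in fact both go through unchanged since no trace-class or dimension-dependent quantity ever enters.
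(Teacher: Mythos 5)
Your argument is correct: the paper itself does not reprove this theorem but imports it from \citet{audiffren2013stability}, whose derivation is exactly the Bousquet--Elisseeff route you follow --- $\Lambda$-strong convexity of the two regularized objectives evaluated at each other's minimizers, cancellation leaving $\tfrac{1}{n}\bigl(c_i(h_{\mathcal{S}^{\backslash i}})-c_i(h_{\mathcal{S}})\bigr)$, and the evaluation bound $\|h(x)\|_{\mathcal{Y}}\le\gamma\|h\|_{\mathcal{H}_\mathcal{K}}$ via $\mathcal{K}_x^{\#}\mathcal{K}_x=\mathcal{K}(x,x)$, which is also the identity the paper uses in its own proof of \Cref{thm:stability}. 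All steps, including the operator-norm identity and the handling of the $1/n$ normalization, go through as you state, so nothing further is needed.
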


In order to get generalization bounds, we shall now derive constants $M$ and $C$ of \Cref{hyp:bounded_loss,hyp:lipschitz_loss} respectively.
This is usually done under the following assumption.

\begin{assumption}\label{hyp:bounded_outputs}
There exists $M_\mathcal{Y} > 0$ such that for any realization $y \in \mathcal{Y}$ of $Y$ it holds: $\|y\|_\mathcal{Y} \le M_\mathcal{Y}$.
\end{assumption}

\begin{remark}
It should be noticed that in structured prediction or structured data representation this assumption is directly fulfilled with $M_\mathcal{Y} = 1$.
Indeed, outputs (and potentially inputs) are actually some $y_i=\phi(z_i)$, with $\phi$ the canonical feature map associated to a scalar kernel, so that it suffices to choose a normalized kernel to satisfy \Cref{hyp:bounded_outputs}.
\end{remark}

\begin{theorem}\label{thm:stability}
Under \Cref{hyp:bounded_outputs}, algorithms previously described satisfy \Cref{hyp:bounded_loss,hyp:lipschitz_loss} with constants $M$ and $C$ as detailed in \Cref{tab:constants}.
\end{theorem}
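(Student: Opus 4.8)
The theorem asks me to verify Assumptions~\ref{hyp:bounded_loss} and~\ref{hyp:lipschitz_loss} (bounded loss, Lipschitz loss) for the three robust losses under the bounded-output assumption~\ref{hyp:bounded_outputs}, and to exhibit explicit constants $M$ and $C$. The overall strategy is to reduce everything to two ingredients: a uniform bound on $\|h_{A(\sample)}(x)\|_\mathcal{Y}$ (which controls the residual $\|h(x)-y\|_\mathcal{Y}$), and the pointwise growth/Lipschitz behavior of each loss as a function of that residual. Since all three losses are of the form $\ell(h(x)-y)$ with $\ell$ built from the Hilbert norm, both $M$ and $C$ will follow from elementary one-dimensional estimates on $\|\cdot\|_{\mathcal{Y},\epsilon}$, $\|\cdot\|_{\mathcal{Y},\epsilon}^2$, and $\ell_{H,\kappa}$.

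\textbf{Step 1: a uniform bound on the predictor norm.} First I would bound $\|h_{A(\sample)}\|_{\mathcal{H}_\mathcal{K}}$ using optimality: plugging $h=0$ into Problem~\ref{pbm:primal_pbm} gives $\tfrac{\Lambda}{2}\|\hat h\|_{\mathcal{H}_\mathcal{K}}^2 \le \tfrac1n\sum_i \ell(0,y_i)$. For each loss, $\ell(0,y_i)$ is a bounded function of $\|y_i\|_\mathcal{Y}\le M_\mathcal{Y}$, so this yields $\|\hat h\|_{\mathcal{H}_\mathcal{K}}\le \rho$ for an explicit $\rho$ depending on $M_\mathcal{Y},\Lambda$ (and $\epsilon$ or $\kappa$). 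Then the reproducing property together with Assumption~\ref{hyp:bounded_kernel} gives $\|h(x)\|_\mathcal{Y} = \|\mathcal{K}_x^\# h\|_\mathcal{Y}\le \|\mathcal{K}_x\|_{\mathrm{op}}\|h\|_{\mathcal{H}_\mathcal{K}}\le \gamma\,\rho$, hence $\|h(x)-y\|_\mathcal{Y}\le \gamma\rho + M_\mathcal{Y}=:D$. This single residual bound $D$ feeds both constants.

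\textbf{Step 2: bounding and differentiating each loss.} For $M$ (Assumption~\ref{hyp:bounded_loss}) I would simply evaluate each loss at a residual of norm $\le D$: for the $\epsilon$-insensitive norm $\max(D-\epsilon,0)$, its square, and for Huber the piecewise bound $\kappa(D-\kappa/2)$ when $D>\kappa$. For $C$ (Assumption~\ref{hyp:lipschitz_loss}) the key observation is that each loss, viewed as $\varphi(\|y-u\|_\mathcal{Y})$ for scalar $\varphi$, satisfies $|\ell(a)-\ell(b)|\le L\,\|a-b\|_\mathcal{Y}$ where $L$ is a Lipschitz constant of $\varphi$; by the reverse triangle inequality $|\,\|y-a\|-\|y-b\|\,|\le\|a-b\|$, so a Lipschitz bound on $\varphi$ transfers directly. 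Here $\|\cdot\|_{\mathcal{Y},\epsilon}$ is $1$-Lipschitz (so $C=1$ for $\epsilon$-SVR), $\ell_{H,\kappa}$ is $\kappa$-Lipschitz (so $C=\kappa$), and the squared $\epsilon$-insensitive loss has $\varphi(t)=\max(t-\epsilon,0)^2$ whose slope is bounded by $2\max(D-\epsilon,0)\le 2D$ on the relevant range, giving $C=2D$.

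\textbf{The main obstacle.} The routine parts are the one-dimensional Lipschitz estimates; the delicate point is the \emph{squared} $\epsilon$-insensitive loss, which is only \emph{locally} Lipschitz — its gradient grows with the residual. The Lipschitz constant $C$ is therefore not universal but depends on the a priori bound $D$ from Step~1, so I must be careful that the argument is not circular: the bound $D$ must be derived purely from optimality (Step~1), independently of the stability estimate, and only then used to certify the local Lipschitz constant on the compact range $[0,D]$. Once $D$ is pinned down first, substituting the resulting $C$ and $M$ into Theorem~\ref{thm:stability_kadri} yields the explicit entries of Table~\ref{tab:constants}; verifying that each entry matches the table is then bookkeeping.
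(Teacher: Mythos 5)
Your proposal is correct and follows essentially the same route as the paper: bound $\|\hat h\|_{\mathcal{H}_\mathcal{K}}$ by comparing the objective to $h=0$, transfer to a pointwise bound via the reproducing property and \Cref{hyp:bounded_kernel}, then read off $M$ by evaluating each loss on the bounded residual and $C$ from the (local) Lipschitz constant of the scalar profile combined with the reverse triangle inequality — which is exactly what the paper's explicit case analysis (both norms above/below the threshold, or one of each) establishes. Your identification of the squared $\epsilon$-insensitive loss as the only locally-Lipschitz case, with slope $2\max(D-\epsilon,0)$ evaluated at the a priori residual bound, reproduces the table's entry $2(M_\mathcal{Y}-\epsilon)\left(1+\gamma\sqrt{2}/\sqrt{\Lambda}\right)$ (just keep the $-\epsilon$ rather than loosening to $2D$, and note that the same norm bound must also be applied to $h_{A(\samplei)}$).
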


\begin{figure*}[!t]
\begin{minipage}[b]{0.67\textwidth}
\begin{center}
\begin{footnotesize}
\begin{tabular}{c|c|c}
\toprule
                 & $M$ & $C$\\\midrule
$\epsilon$-SVR   & $\sqrt{M_\mathcal{Y} - \epsilon}\left(\frac{\sqrt{2}\gamma}{\sqrt{\Lambda}} + \sqrt{M_\mathcal{Y} - \epsilon}\right)$ & $1$\\[0.3cm]
$\epsilon$-Ridge & $(M_\mathcal{Y} - \epsilon)^2\left(1 + \frac{2\sqrt{2}\gamma}{\sqrt{\Lambda}} + \frac{2\gamma^2}{\Lambda}\right)$ & $2(M_\mathcal{Y} - \epsilon)\left(1 + \frac{\gamma\sqrt{2}}{\sqrt{\Lambda}}\right)$\\[0.3cm]
$\kappa$-Huber   & $\kappa\sqrt{M_\mathcal{Y} - \frac{\kappa}{2}}\left(\frac{\gamma\sqrt{2\kappa}}{\sqrt{\Lambda}} + \sqrt{M_\mathcal{Y} - \frac{\kappa}{2}}\right)$ & $\kappa$\\\bottomrule
\end{tabular}
\end{footnotesize}
\end{center}
\vspace{-0.3cm}
\caption{Algorithms Constants}
\label{tab:constants}
\end{minipage}
%
%
\begin{minipage}[b]{0.32\textwidth}
\begin{center}
\includegraphics[width=\textwidth]{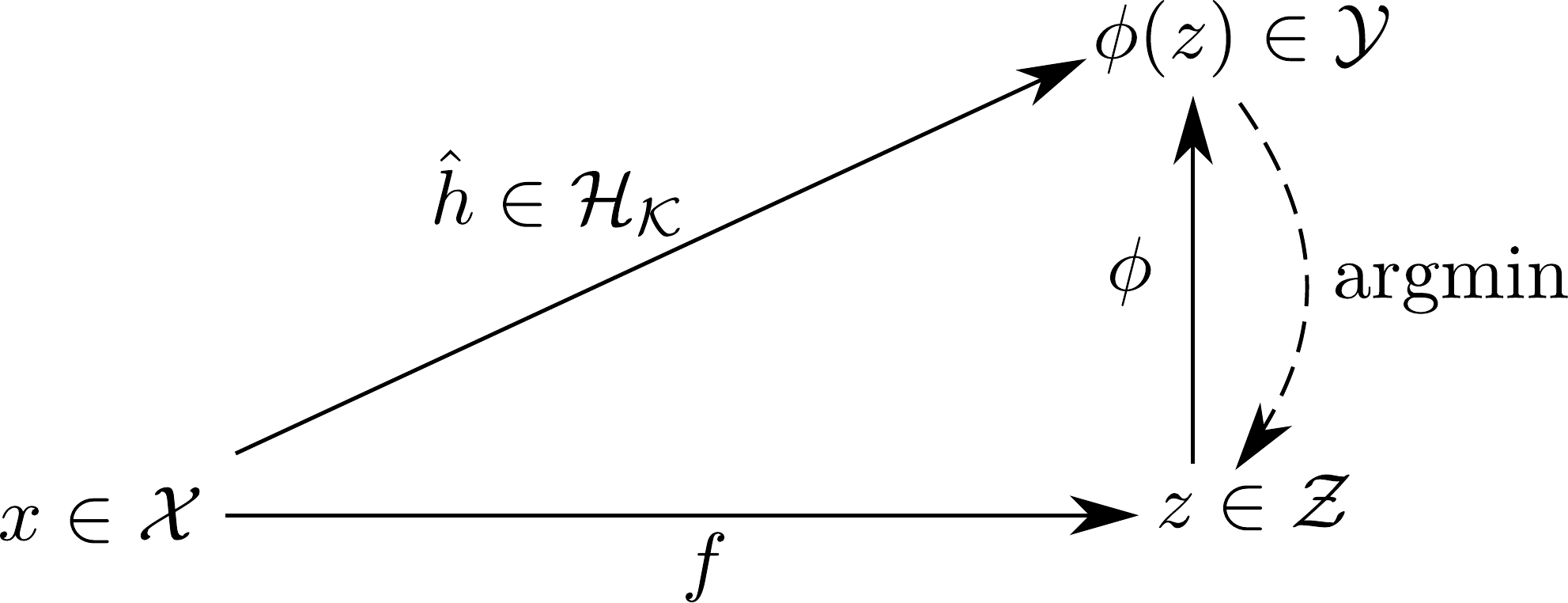}
\end{center}
\vspace{-0.3cm}
\caption{Output Kernel Regression}
\label{fig:okr}
\end{minipage}
\end{figure*}

\section{Applications and Numerical Experiments}
\label{sec:5-applis}


In this section, we discuss some applications unlocked by vv-RKHSs with infinite dimensional outputs.
In particular, structured prediction, structured representation learning, and functional regression are formally described, and numerical experiments highlight the benefits of the losses introduced.

\subsection{Application to Structured Output Prediction}
\label{sec:applis}


Assume one is interested in learning a predictive decision rule $f$ from a set $\mathcal{X}$ to a complex structured space $\mathcal{Z}$.
To bypass the absence of norm on $\mathcal{Z}$, one may design a (scalar) kernel $k$ on $\mathcal{Z}$, whose canonical feature map $\phi: z \mapsto k(\cdot, z)$ transforms any element of $\mathcal{Z}$ into an element of the (scalar) RKHS associated to $k$, denoted $\mathcal{Y}$ ($=\mathcal{H}_k$).
Learning a predictive model $f$ from $\mathcal{X}$ to $\mathcal{Z}$ boils down to learning a surrogate vector-valued model $h$ from $\mathcal{X}$ to $\mathcal{Y}$,
which is searched for in the vv-RKHS $ \mathcal{H}_\mathcal{K}$ associated to an OVK $\mathcal{K}$ by solving the following regularized empirical problem.
\begin{problem}\label{pbm:struct}
\hat{h} = \argmin_{h \in \mathcal{H}_\mathcal{K}} \frac{1}{n }\sum_{i=1}^n \ell(h(x_i), \phi(z_i)) + \frac{\Lambda}{2} \|h\|^2_{\mathcal{H}_\mathcal{K}}.
\end{problem}
%
Once $\hat{h}$ is learned, the predictions in $\mathcal{Z}$ are produced through a pre-image problem $f(x) = \argmin_{z \in \mathcal{Z}} \ell(\phi(z),\hat{h}(x))$.
This approach called Input Output Kernel Regression has been studied in several works \cite{Brouard_icml11,Kadri_icml2013}.
As an instance of the general Output Kernel Regression scheme of \Cref{fig:okr}, it belongs to the family of Surrogate Approaches for structured prediction (see \textit{e.g.} \citet{Ciliberto2016}).
While previous works have focused on identity decomposable kernels only, with the squared loss or hinge loss \citep{brouard2016input}, our general framework allows for many more losses.
The use of an $\epsilon$-insensitive loss in \Cref{pbm:struct}, in particular, seems adequate as it is a surrogate task, and inducing small mistakes that do not harm the inverse problem, while improving generalization, sounds as a suitable compromise.
We thus advocate to solve structured prediction in vv-RKHSs by using losses more sophisticated than the squared norm. In the following, the variants of IOKR are called accordingly to the loss they minimize: $\epsilon$-SV-IOKR, $\epsilon$-Ridge-IOKR, and Huber-IOKR.

{\bf YEAST dataset.} Although our approach's main strength of is to predict infinite dimensional outputs, we start with a simpler standard structured prediction dataset composed of $14$-dimensional outputs (the so-called YEAST dataset \citet{finley2008training}) described in the Supplements, on which comparisons and interpretations are easier.
We have collected results from \citet{finley2008training} and \citet{belanger2016structured}, and benchmarked our three algorithms.
Hyperparameters $\Lambda$, $\epsilon$, $\kappa$ have been selected among geometrical grids by cross-validation on the train dataset solely, and performances evaluated on the same test set as the above publications.
Results in terms of Hamming error are reported in \Cref{tab:yeast}, with significant improvements for the $\epsilon$-Ridge-IOKR and Huber-IOKR.
Furthermore, in order to highlight the interactions between our two ways of regularizing, \textit{i.e.} the RKHS norm and the $\epsilon$-insensitivity, we have plotted the $\epsilon$-Ridge-IOKR Mean Square Errors (the Hamming before clamping) and solution sparsity with respect to $\Lambda$ for $\epsilon$ varying from $1e$-$5$ to $1.5$ (\Cref{fig:mse_yeast,fig:sparse_yeast}):  $\Lambda$ and $\epsilon$ seem to act as competitive regularizations.
When $\Lambda$ is small, the regularization in $\epsilon$ is efficient, as solution with the best MSE is obtained for $\epsilon$ around $0.6$.
Conversely, when $\Lambda$ is big, no sparsity is induced, and having a high $\epsilon$ induces too much regularization.
Similar graphs for the $\epsilon$-SVR and $\kappa$-Huber are available in the Supplements, that highlight the superiority of the approaches for a wide range of hyperparameters.
A linear output kernel was used, such that solving the inverse problem boils down to clamping.

{\bf Metabolite dataset.} Regarding the infinite dimensional outputs, we have considered the metabolite identification problem \citep{Schymanski2016}, in which one aims at predicting molecules from their mass spectra.
For this task, Ridge-IOKR is the state-of-the-art approach, corresponding to our $\epsilon$-Ridge-IOKR with $\epsilon=0$.
Given the high number of constraints, Structured SVMs are not tractable as confirmed by our tests using the Pystruct lib implementation \cite{Mueller2014}.
This was already noticed in \citet{belanger2016structured} ($14$ is the maximum output dimension on which SSVMs were tested), and the implementation we tried indeed yielded very poor results despite prolonged training ($5\%,31\%,45\%$ top-$k$ errors).
We thus investigated the advantages of substituting the standard Ridge Regression for its $\epsilon$-insensitive version or a Huber regression.
Outputs (\textit{i.e.} metabolites) are embedded in an infinite dimensional Hilbert space through a Tanimoto-Gaussian kernel with $0.72$ bandwidth.
The dataset, presented in the Supplements and described at length in \citet{Brouard_ismb2016}, is composed of $6974$ mass spectra, while algorithms are compared through the top-$k$ accuracies, $k=1, 10, 20$.
Two $\Lambda$'s have been picked for their interesting behavior: one that yields the best performance for Ridge-IOKR, and the second that gives the best overall scores (hyperparameters $\epsilon$ and $\kappa$ being chosen to produce the best scores each time).
Again, results of \Cref{tab:metabolite} show improvements due to robust losses that are all the more important as the norm regularization is low, with an improvement on the best overall score.




\begin{table}[!h]
\caption{Top $1$~/~$10$~/~$20$ test accuracies (\%)}
\label{tab:metabolite}
\begin{center}
\begin{footnotesize}
\begin{sc}
\begin{tabular}{ccc}\toprule
$\Lambda$      & $1e$-$6$                                                  & $1e$-$4$\\\midrule
Ridge-IOKR           & 35.7 $\vert$ 79.9 $\vert$ 86.6                            & 38.1 $\vert$ 82.0 $\vert$ 88.9\\
$\epsilon$-Ridge-IOKR & 37.1 $\vert$ 81.7 $\vert$ 88.3                            & 36.3 $\vert$ 81.2 $\vert$ 87.9\\
Huber-IOKR & \textbf{38.3} $\vert$ \textbf{82.2} $\vert$ \textbf{89.1} & 37.7 $\vert$ 81.9 $\vert$ 88.8\\\bottomrule
\end{tabular}
\end{sc}
\end{footnotesize}
\end{center}
\end{table}

\begin{figure*}[!t]
%
\begin{minipage}[b]{0.3\textwidth}
\begin{center}
\includegraphics[width=\textwidth]{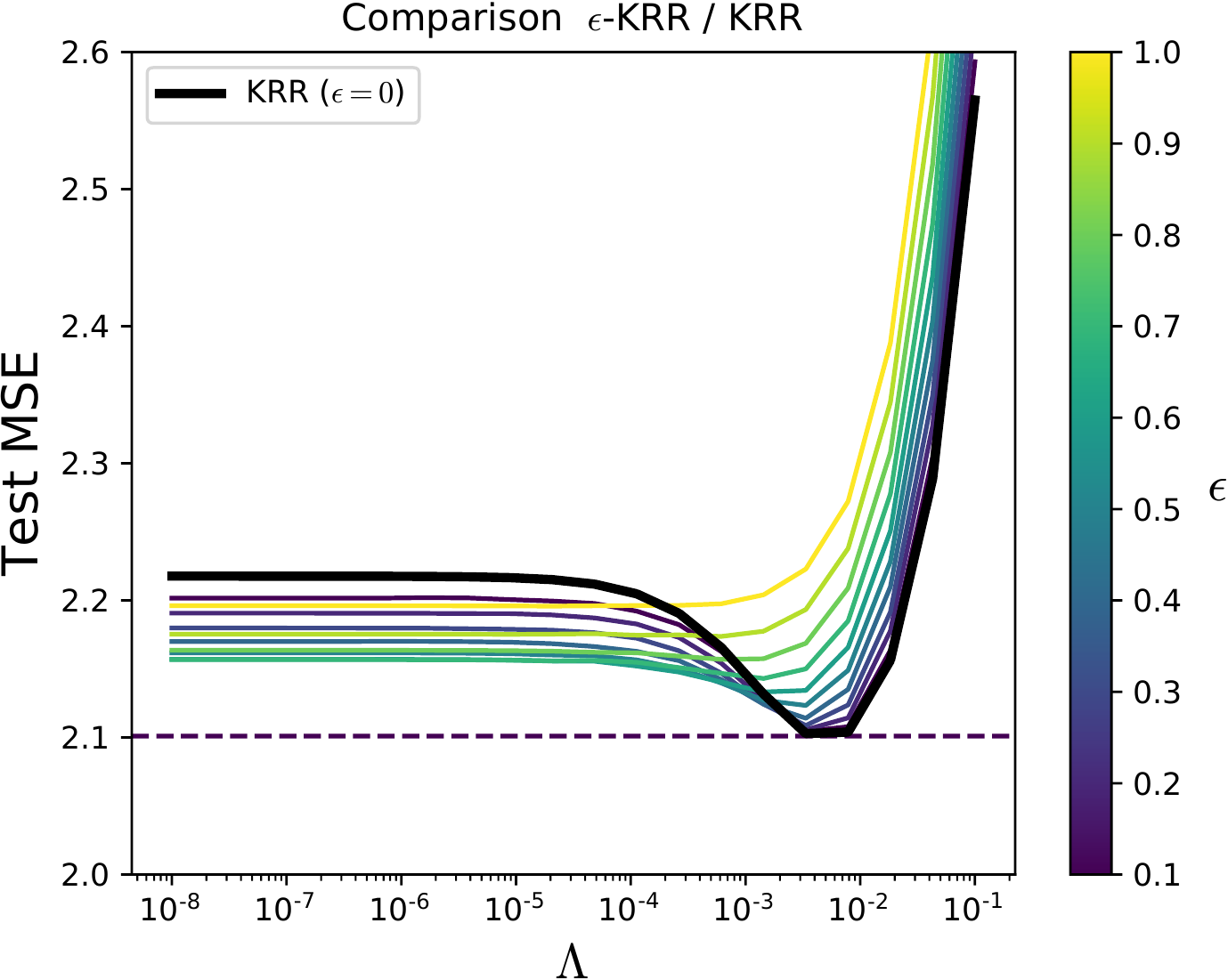}
\end{center}
\vspace{-0.3cm}
\caption{Test MSE w.r.t. $\Lambda$}
\label{fig:mse_yeast}
\end{minipage}
\hfill
\begin{minipage}[b]{0.3\textwidth}
\begin{center}
\includegraphics[width=\textwidth]{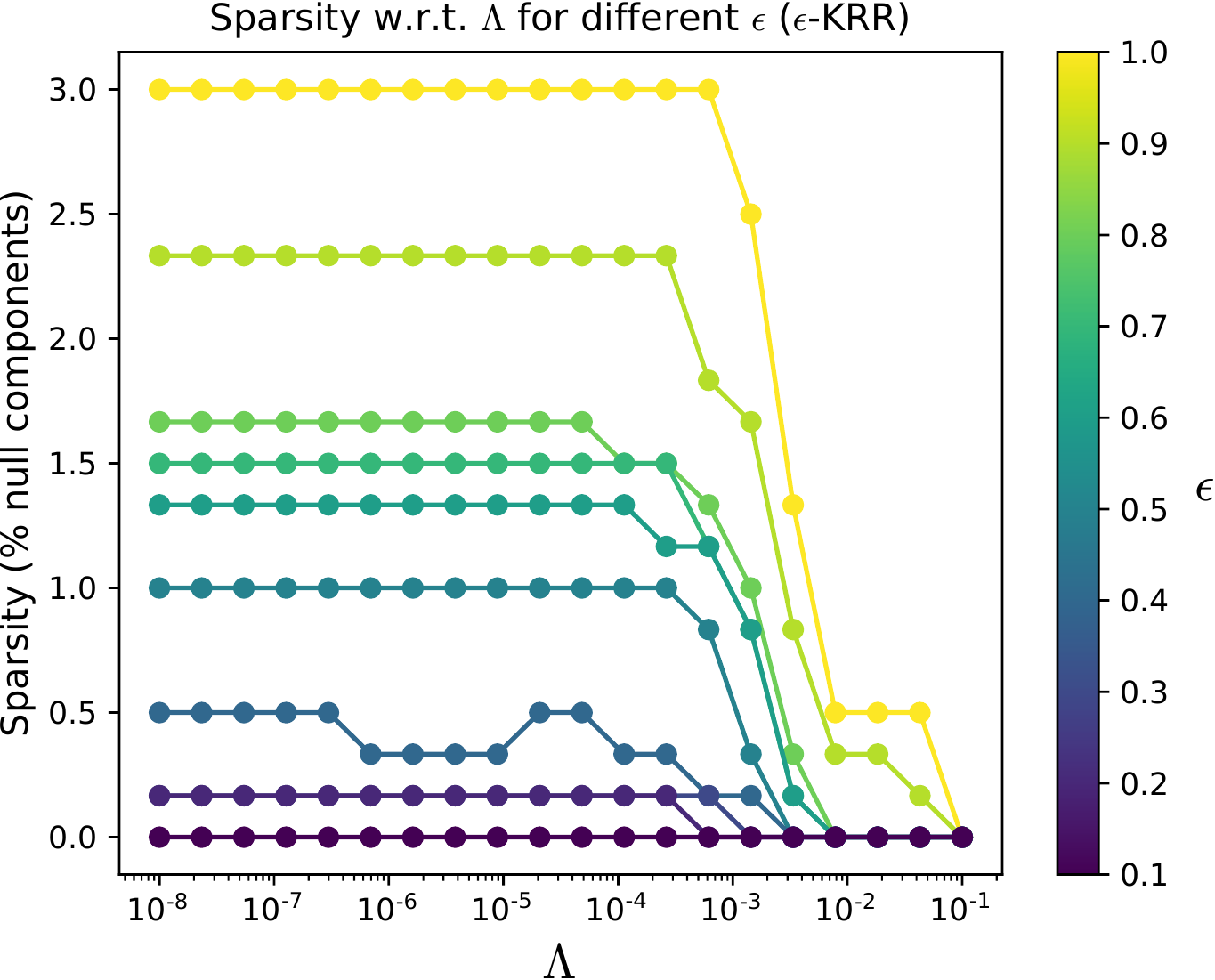}
\end{center}
\vspace{-0.3cm}
\caption{Sparsity w.r.t. $\Lambda$}
\label{fig:sparse_yeast}
\end{minipage}
\hfill
\begin{minipage}[b]{0.35\textwidth}
\begin{center}
\includegraphics[width=\textwidth, page=4]{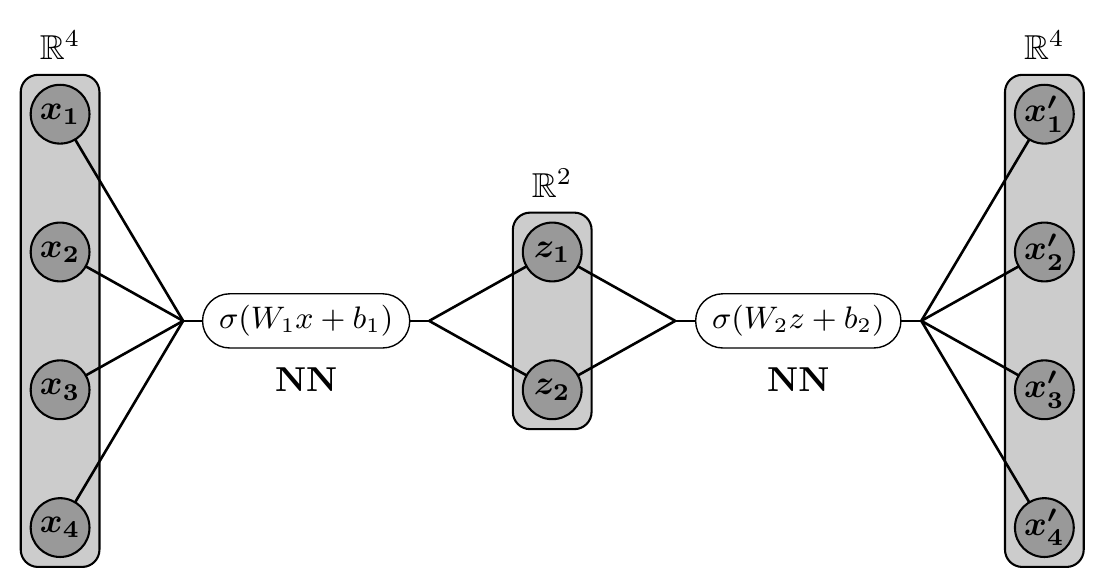}
\end{center}
\vspace{-0.3cm}
\caption{$2$-Layer Kernel Autoencoder}
\label{fig:kae}
\end{minipage}
\end{figure*}

\begin{figure*}[!t]
\begin{minipage}[b]{0.25\textwidth}
\begin{center}
\begin{sc}
\begin{tabular}{cc}\toprule
SSVM                  & 20.2\\
SPEN                  & 20.0\\\midrule
$\epsilon$-Ridge-IOKR & \textbf{19.0}\\
Huber-IOKR            & 19.1\\
$\epsilon$-SV-IOKR    & 21.1\\\bottomrule
\end{tabular}
\end{sc}
\end{center}
\vspace{0.6cm}
\caption{YEAST Hamming errors}
\label{tab:yeast}
\end{minipage}
\hfill
\begin{minipage}[b]{0.38\textwidth}
\begin{center}
\includegraphics[width=\textwidth]{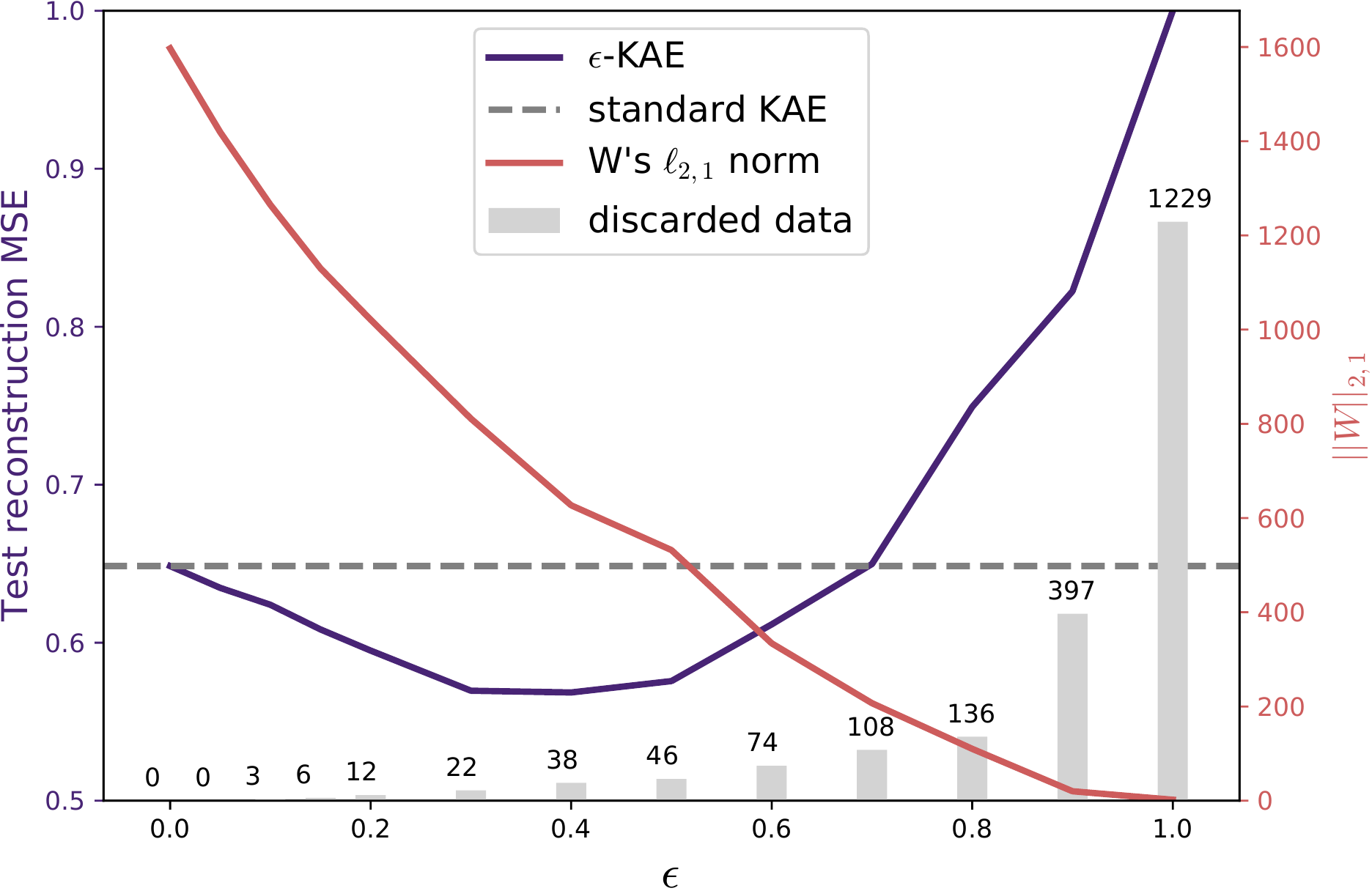}
\end{center}
\vspace{-0.3cm}
\caption{Reconstruction error w.r.t. $\epsilon$}
\label{fig:eps_kae}
\end{minipage}
\hfill
\begin{minipage}[b]{0.33\textwidth}
\begin{center}
\includegraphics[width=\textwidth]{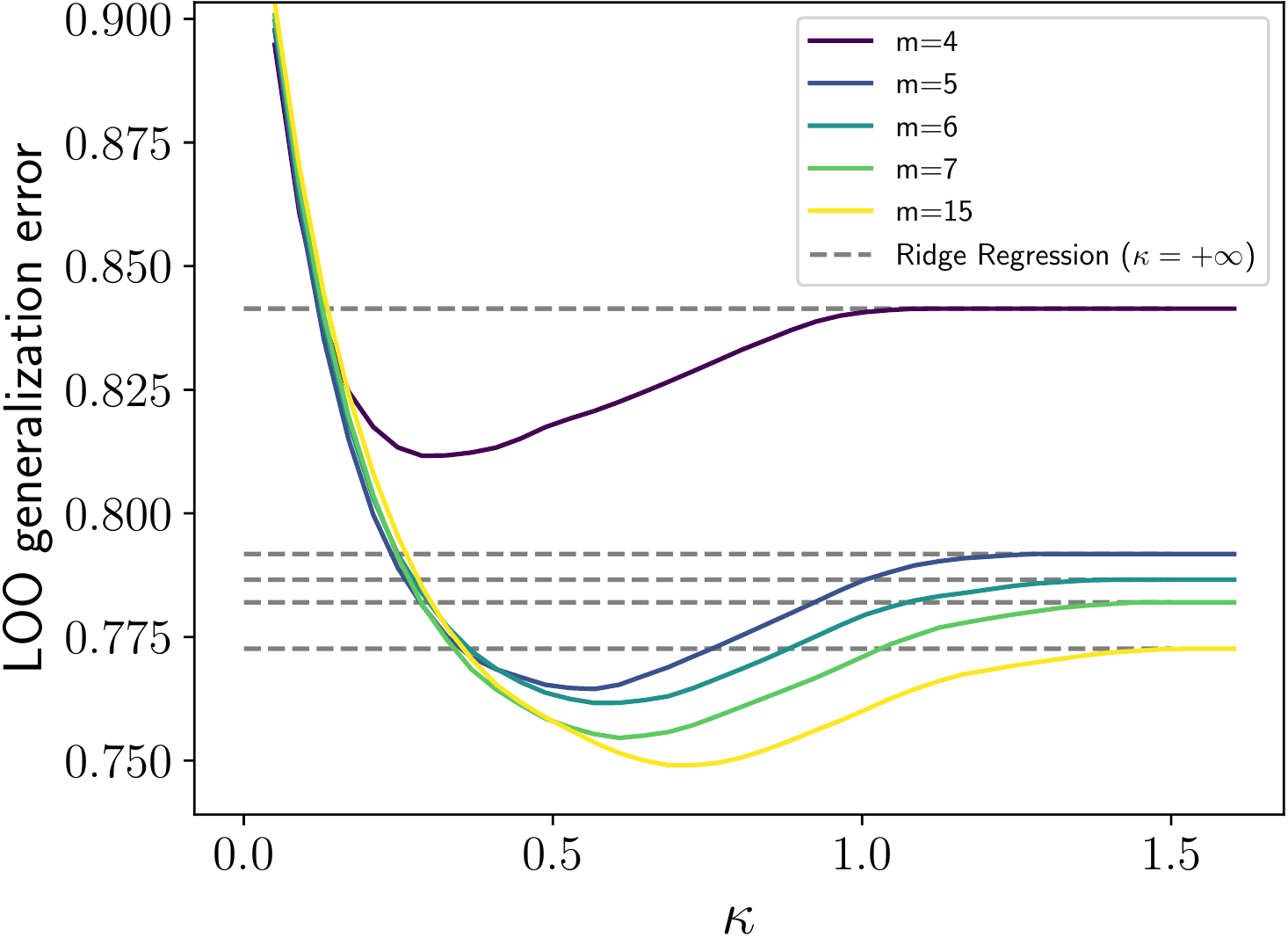}
\end{center}
\vspace{-0.3cm}
\caption{LOO error w.r.t. $\kappa$}
\label{fig:huber_fonctional}
\end{minipage}
\end{figure*}


\subsection{Structured Representation Learning}
Extracting vectorial representations from structured inputs is another task that can be tackled in vv-RKHSs \citep{laforgue2019autoencoding}.
This is a relevant approach in many cases: when complex data are uniquely available under the form of a similarity matrix for instance, for preserving privacy, or when deep neural networks fail to tackle structured objects as raw data.
Embedding data into a Hilbert space makes sense.
Then, composing functions in vv-RKHSs results in a Kernel Autoencoder (KAE, \Cref{fig:kae}) that outputs finite codes by minimizing the (regularized) discrepancy:
\begin{problem}\label{pbm:kae}
\frac{1}{2n} \sum_{i=1}^n \|\phi(x_i) - f_2\circ f_1(\phi(x_i))\|_\mathcal{Y}^2 + \Lambda~\text{Reg}(f_1, f_2).
\end{problem}
Again, this reconstruction loss is not the real goal, but rather a proxy to make the internal representation meaningful.
Therefore, all incentives to use $\epsilon$-insensitive losses or the Huber loss still apply.
The inferred $\epsilon$-KAE and Huber-KAE, obtained by changing the loss function in \Cref{pbm:kae}, are optimized as follows: the first layer coefficients are updated by Gradient Descent, while the second ones are reparametrized into $W_2$ and updated through PGD (instead of KRR closed form for standard KAEs).
This has been applied to a drug dataset, introduced in \citet{su2010structured} as an extract from the NCI-Cancer database.
As shown in \Cref{fig:eps_kae}, the $\epsilon$-insensitivity improves the generalization while inducing sparsity.
The $\epsilon$-insensitive framework is thus particularly promising in the context of Autoencoders.

\subsection{Function-to-Function Regression}

Regression with both inputs and outputs of functional \mbox{nature} is a challenging problem at the crossroads of Functional Data Analysis \cite{ramsay2007applied} and Machine Learning \cite{kadri2016ovk}.
While Functional Linear Modeling is the most common approach to address function-to-function regression, nonparametric approaches based on vv-RKHSs have emerged, that rely on the minimization of a squared loss.
However, robustness to abnormal functions is particularly meaningful in a field where data come from sensors and are used to monitor physical assets.
To the best of our knowledge, robust regression has only been tackled in the context of Functional Linear Models \cite{Kalogridis2019}.
We propose here to highlight the relevance of OVK machines learned with a Huber loss by solving \Cref{pbm:huber_dual_l2} for various levels $\kappa$.

{\bf Lip acceleration from EMG dataset.}
We consider the problem of predicting lip acceleration among time from electromyography (EMG) signals \citep{ramsay2007applied}.
The dataset consists of 32 records of the lower lip trajectory over 641 timestamps, and the associated EMG records, augmented with 4 outliers to assess the robustness of our approach.
Usefulness of minimizing the Huber loss is illustrated in \Cref{fig:huber_fonctional} by computing the Leave-One-Out (LOO) error associated to each model for various values of $m$.
For each $m$, as $\kappa$ grows larger than a threshold, the constraint on $\|\Omega\|_{2, \infty}$ becomes void and we recover the Ridge Regression solution.
The kernel chosen is given by $k_{\mathcal{X}}(x_1,x_2) = \int_{0}^1 \exp{(|x_1(\theta)-x_2(\theta)|)} \mathrm{d}\theta$, with $(\psi_j)_{j=1}^m$ being the harmonic basis in sine and cosine of $L^2[0,1]$, and $(\lambda_j)_{j=1}^m = (1/(j+1)^2)_{j=1}^m$.

\subsection{Related Work}

Another application of the presented results, both theoretical and computational, is the generalization of the \textit{loss trick}, see \textit{e.g.} \citet{Ciliberto2016}.
In the context of Output Kernel Regression, the latter stipulates that for suitable losses, the decoding expresses in terms of loss evaluations.
The work by \citet{luise2019leveraging} has extended this trick to penalization schemes different from the natural vv-RKHS norm.
Our findings, and the double expansion in particular, suggest that the loss trick can still be used with other surrogate loss functions than the squared norm, opening the door to a wide range of applications.

\section{Conclusion}\label{sec:conclu}

This work presents a versatile framework based on duality to learn OVK machines with infinite dimensional outputs.
The case of convolved losses (\textit{e.g.} $\epsilon$-insensitive, Huber) is thoroughly tackled, from algorithmic procedures to stability analysis.
This offers novel ways to enforce sparsity and robustness when learning within vv-RKHSs, opening an avenue for new applications on structured and functional data (\textit{e.g.} anomaly detection, robust prediction).
Future research directions could feature a calibration study of these novel surrogate approaches, or the introduction of kernel approximations such as random Fourier features, that would benefit our framework twice: both in input and in output. \par

\paragraph{Acknowledgment.} This work has been partially funded by the industrial chair \href{https://datascienceandai.wp.imt.fr/}{\textit{Data Science \& Artificial Intelligence for Digitalized Industry and Services}} from T\'{e}l\'{e}com Paris.
Authors would like to thank Olivier Fercoq for his insightful discussions and helpful comments.

\bibliography{ref}

\appendix
\onecolumn

The Supplementary Material is organized as follows.
\Cref{apx:proofs} collects the technical proofs of the core article's results.
\Cref{apx:loss_fig} provides illustrations of the main loss functions considered ($\epsilon$-insensitive Ridge and SVR, $\kappa$-Huber) in $1$ and $2$ dimensions.
\Cref{apx:expes} gathers additional details about the experimental protocols and the code furnished.

\section{Technical Proofs}
\label{apx:proofs}


\subsection{Proof of \Cref{thm:dual_pbm}}
\label{sec:apx_dual}

First, notice that the primal problem
\begin{equation*}
\min_{h \in \mathcal{H}_\mathcal{K}} \quad \frac{1}{n} \sum_{i=1}^n \ell(h(x_i), y_i) + \frac{\Lambda}{2} \|h\|_{\mathcal{H}_\mathcal{K}}^2
\end{equation*}
can be rewritten
\begin{align*}
\min_{h \in \mathcal{H}_\mathcal{K}} \quad & \sum_{i=1}^n \ell_i(u_i) + \frac{\Lambda n}{2} \|h\|_{\mathcal{H}_\mathcal{K}}^2,\\
\text{s.t.} \quad & u_i = h(x_i) \quad \forall i \le n.
\end{align*}
Therefore, with the notation $\bm{u} = (u_i)_{i \le n}$ and $\bm{\alpha} = (\alpha_i)_{i \le n}$, the Lagrangian writes
\begin{align*}
\mathscr{L}(h, \bm{u}, \bm{\alpha}) &= \sum_{i=1}^n \ell_i(u_i) + \frac{\Lambda n}{2} \|h\|_{\mathcal{H}_\mathcal{K}}^2 + \sum_{i=1}^n \left\langle \alpha_i, u_i - h(x_i) \right\rangle_\mathcal{Y},\\
&= \sum_{i=1}^n \ell_i(u_i) + \frac{\Lambda n}{2} \|h\|_{\mathcal{H}_\mathcal{K}}^2 + \sum_{i=1}^n \left\langle \alpha_i, u_i\right\rangle_\mathcal{Y} - \sum_{i=1}^n \left\langle \mathcal{K}(\cdot, x_i)\alpha_i, h\right\rangle_{\mathcal{H}_\mathcal{K}}.
\end{align*}
Differentiating with respect to $h$ and using the definition of the Fenchel-Legendre transform, one gets
\begin{align*}
g(\bm{\alpha}) &= \inf_{h \in \mathcal{H}_\mathcal{K}, \bm{u} \in \mathcal{Y}^n} \mathscr{L}(h, \bm{u}, \bm{\alpha}),\\
&= \sum_{i=1}^n~\inf_{u_i \in \mathcal{Y}} \left\{\ell_i(u_i) + \left\langle \alpha_i, u_i\right\rangle_\mathcal{Y}\right\} + \inf_{h \in \mathcal{H}_\mathcal{K}} \left\{\frac{\Lambda n}{2} \|h\|_{\mathcal{H}_\mathcal{K}}^2 - \sum_{i=1}^n \left\langle \mathcal{K}(\cdot, x_i)\alpha_i, h\right\rangle_{\mathcal{H}_\mathcal{K}}\right\},\\
&= \sum_{i=1}^n -\ell_i^\star(-\alpha_i) - \frac{1}{2 \Lambda n} \sum_{i, j=1}^n \left\langle \alpha_i, \mathcal{K}(x_i, x_j) \alpha_j\right\rangle_\mathcal{Y},
\end{align*}
together with the equality $\displaystyle \hat{h} = \frac{1}{\Lambda n} \sum_{i=1}^n \mathcal{K}(\cdot, x_i) \alpha_i$.
The conclusion follows immediately.\qed


\subsection{Proof of \Cref{thm:double_rt}}
\label{apx:double_rt}

As a reminder, our goal is to compute the solutions to the following problem:
\begin{align*}
\hat{h} \in \argmin_{h \in \mathcal{H}_\mathcal{K}} ~ \frac{1}{n} \sum_{i=1}^n \ell(h(x_i), y_i) + \frac{\Lambda}{2} \|h\|_{\mathcal{H}_\mathcal{K}}^2.
\end{align*}
Using \Cref{thm:dual_pbm}, one gets that $\hat{h} = \frac{1}{\Lambda n} \sum_{i=1}^n \mathcal{K}(\cdot, x_i) \hat{\alpha}_i$, with the $(\hat{\alpha}_i)_{i \le n}$ satisfying:
\begin{align*}
(\hat{\alpha}_i)_{i=1}^n \in \argmin_{(\alpha_i)_{i=1}^n \in \mathcal{Y}^n} ~ \sum_{i=1}^n\ell_i^\star(-\alpha_i) + \frac{1}{2\Lambda n} \sum_{i,j=1}^n \left\langle \alpha_i, \mathcal{K}(x_i, x_j) \alpha_j\right\rangle_\mathcal{Y}.
\end{align*}
However, this optimization problem cannot be solved in a straightforward manner, as $\mathcal{Y}$ is in general infinite dimensional.
Nevertheless, it is possible to bypass this difficulty by noticing that the optimal $(\hat{\alpha}_i)_{i \le n}$ actually lie in $\bm{\mathsf{Y}}^n$.
To show this, we decompose each coefficient as $\hat{\alpha}_i = \alpha_i^{\bm{\mathsf{Y}}} + \alpha_i^\perp$, with $(\alpha_i^{\bm{\mathsf{Y}}})_{i \le n}, (\alpha_i^\perp)_{i \le n} \in \bm{\mathsf{Y}}^n \times {\bm{\mathsf{Y}}^\perp}^n$.
Then, noticing that non-null $(\alpha_i^\perp)_{i \le n}$ necessarily increase the objective, we can conclude that the optimal $(\hat{\alpha}_i)_{i \le n}$ have no components among $\bm{\mathsf{Y}}^\perp$, or equivalently pertain to $\bm{\mathsf{Y}}$.
Indeed, by virtue of \Cref{hyp:FL_1,hyp:stable_ovk}, it holds:

\begin{equation*}
\sum_{i=1}^n\ell_i^\star(-\alpha_i^{\bm{\mathsf{Y}}}) + \frac{1}{2\Lambda n} \sum_{i,j=1}^n \left\langle \alpha_i^{\bm{\mathsf{Y}}}, \mathcal{K}(x_i, x_j) \alpha_j^{\bm{\mathsf{Y}}}\right\rangle_\mathcal{Y} \le \sum_{i=1}^n\ell_i^\star(-\alpha_i^{\bm{\mathsf{Y}}} - \alpha_i^\perp) + \frac{1}{2\Lambda n} \sum_{i,j=1}^n \left\langle \alpha_i^{\bm{\mathsf{Y}}} + \alpha_i^\perp, \mathcal{K}(x_i, x_j)(\alpha_j^{\bm{\mathsf{Y}}} + \alpha_j^\perp)\right\rangle_\mathcal{Y}.
\end{equation*}
If the inequality about $\ell_i^\star$ follows directly \Cref{hyp:FL_1}, that about $\mathcal{K}(x_i, x_j)$ can be obtained by \Cref{hyp:stable_ovk} as follows:
\begin{align*}
\sum_{i,j=1}^n \big\langle \alpha_i^{\bm{\mathsf{Y}}} + \alpha_i^\perp,& \mathcal{K}(x_i, x_j)(\alpha_j^{\bm{\mathsf{Y}}} + \alpha_j^\perp)\big\rangle_\mathcal{Y}\\
&= \sum_{i,j=1}^n \left\langle \alpha_i^{\bm{\mathsf{Y}}}, \mathcal{K}(x_i, x_j)\alpha_j^{\bm{\mathsf{Y}}})\right\rangle_\mathcal{Y} + 2 \sum_{i,j=1}^n \left\langle \alpha_i^\perp, \mathcal{K}(x_i, x_j)\alpha_j^{\bm{\mathsf{Y}}}\right\rangle_\mathcal{Y} + \sum_{i,j=1}^n \left\langle\alpha_i^\perp, \mathcal{K}(x_i, x_j)\alpha_j^\perp\right\rangle_\mathcal{Y},\\
&= \sum_{i,j=1}^n \left\langle \alpha_i^{\bm{\mathsf{Y}}}, \mathcal{K}(x_i, x_j)\alpha_j^{\bm{\mathsf{Y}}})\right\rangle_\mathcal{Y} + \sum_{i,j=1}^n \left\langle\alpha_i^\perp, \mathcal{K}(x_i, x_j)\alpha_j^\perp\right\rangle_\mathcal{Y},\\
&\ge \sum_{i,j=1}^n \left\langle \alpha_i^{\bm{\mathsf{Y}}}, \mathcal{K}(x_i, x_j)\alpha_j^{\bm{\mathsf{Y}}})\right\rangle_\mathcal{Y},
\end{align*}
where we have used successively \Cref{hyp:stable_ovk} and the positiveness of $\mathcal{K}$.
So there exists $\Omega = [\omega_{ij}]_{1 \le i, j \le n} \in \mathbb{R}^{n \times n}$ such that for all $i \le n$, $\hat{\alpha}_i = \sum_j \omega_{ij}~y_j$.
This proof technique is very similar in spirit to that of the Representer Theorem, and yields an analogous result, the reduction of the search space to a smaller vector space, as discussed at length in the main text.
The dual optimization problem thus rewrites:
\begin{align}
\min_{\Omega \in \mathbb{R}^{n \times n}} ~ &\sum_{i=1}^n\ell_i^\star\left(- \sum_{j=1}^n \omega_{ij}~y_j\right) + \frac{1}{2\Lambda n} \sum_{i,j=1}^n \left\langle \sum_{k=1}^n \omega_{ik}~y_k, \mathcal{K}(x_i, x_j) \sum_{l=1}^n \omega_{jl}~y_l\right\rangle_\mathcal{Y}\nonumber\\
\min_{\Omega \in \mathbb{R}^{n \times n}} ~ &\sum_{i=1}^n L_i\left( (\omega_{ij})_{j \le n}, K^Y \right) + \frac{1}{2\Lambda n} \sum_{i,j,k,l=1}^n \omega_{ik}~\omega_{jl}~\left\langle y_k, \sum_{t=1}^T k_t(x_i, x_j) A_t y_l\right\rangle_\mathcal{Y},\nonumber\\
\min_{\Omega \in \mathbb{R}^{n \times n}} ~ &\sum_{i=1}^n L_i\left( \Omega_{i:}, K^Y \right) + \frac{1}{2 \Lambda n} \mathbf{Tr}\left(\tilde{M}^\top (\Omega \otimes \Omega)\right),\label{eq:any_kernel}
\end{align}
with $M$ the $n \times n \times n \times n$ tensor such that $M_{ijkl} = \langle y_k , \mathcal{K}(x_i, x_j)y_l\rangle_\mathcal{Y}$, and $\tilde{M}$ its rewriting as a $n^2 \times n^2$ block matrix such that its $(i, j)$ block is the $n \times n$ matrix with elements $\tilde{M}^{(i,j)}_{st} = \left\langle y_j, \mathcal{K}(x_i, x_s)y_t\right\rangle_\mathcal{Y}$.

The second term is quadratic in $\Omega$, and consequently convex.
As for the $L_i$'s, they are basically rewritings of the Fenchel-Legendre transforms $\ell_i^\star$'s that ensure the computability of the problem (they only depend on $K^Y$, which is known).
Regarding their convexity, we know by definition that the $\ell_i^\star$'s are convex.
Composing by a linear function preserving the convexity, we know that each $L_i$ is convex with respect to $\Omega_{i:}$, and therefore with respect to $\Omega$.

Thus, we have first converted the infinite dimensional primal problem in $\mathcal{H}_\mathcal{K}$ into an infinite dimensional dual problem in $\mathcal{Y}^n$, which in turn is reduced to a convex optimization procedure over $\mathbb{R}^{n \times n}$, that only involves computable quantities.

If $\mathcal{K}$ satisfies \Cref{hyp:sum_decompo}, the tensor $M$ simplifies to
\begin{equation*}
M_{ijkl} = \left\langle y_k , \mathcal{K}(x_i, x_j)y_l\right\rangle_\mathcal{Y} = \sum_{t=1}^T k_t(x_i, x_j) \left\langle y_k, A_t y_l\right\rangle_\mathcal{Y} = \sum_{t=1}^T [K^X_t]_{ij} [K^Y_t]_{kl},
\end{equation*}
and the problem rewrites
\begin{equation*}
\min_{\Omega \in \mathbb{R}^{n \times n}} ~ \sum_{i=1}^n L_i\left( \Omega_{i:}, K^Y \right) + \frac{1}{2\Lambda n} \sum_{t=1}^T \mathbf{Tr}\left( K_t^X \Omega K_t^Y \Omega^\top\right).
\end{equation*}
\qed

\begin{remark}
The second term of Problem \eqref{eq:any_kernel} can be easily optimized.
Indeed, let $\tilde{M}$ be a block matrix such that $\tilde{M}^{(i,j)}_{st} = \tilde{M}^{(s,t)}_{ij}$ for all $i,j,s,t \le n$.
Notice that $\tilde{M}$ as defined earlier satisfies this condition as a direct consequence of the OVK symmetry property.
Then it holds:

\begin{equation*}
\frac{\partial\mathbf{Tr}\left(\tilde{M}^\top (\Omega \otimes \Omega)\right)}{\partial\omega_{st}} = 2 \mathbf{Tr}\left(\tilde{M}^{(s,t) \top} \Omega\right).
\end{equation*}
\end{remark}

Indeed, notice that $\mathbf{Tr}\left(\tilde{M}^\top (\Omega \otimes \Omega)\right) = \sum_{i,j=1}^n \omega_{ij} \mathbf{Tr}\left(\tilde{M}^{(i,j) \top} \Omega\right)$ and use the symmetry assumption.
In the particular case of a decomposable kernel, it holds that $\tilde{M}^{(i,j)} = K^X_{i:} {K^Y_{j:}}^\top$ so that
\begin{equation*}
\frac{\partial\mathbf{Tr}\left(\tilde{M}^\top (\Omega \otimes \Omega)\right)}{\partial\omega_{st}} = 2 \mathbf{Tr}\left(\tilde{M}^{(s,t) \top} \Omega\right) = 2 \sum_{i,j=1}^n \left[K^X_{s:} {K^Y_{t:}}^\top\right]_{ij} \omega_{ij} = 2\sum_{ij=1}^n K^X_{si} K^Y_{tj} \omega_{ij} = 2\left[K^X\Omega K^Y\right]_{st},
\end{equation*}
and one recovers the gradients established in \Cref{eq:gradient}.


\subsection{Proof of \Cref{prop:good_losses}}\label{apx:good_losses}

The proof technique is the same for all losses: first explicit the FL transforms $\ell_i^\star$, then use simple arguments to verify Assumptions \ref{hyp:FL_1} and \ref{hyp:FL_2}.
For instance, any increasing function of $\|\alpha\|$ automatically satisfy the assumptions.

\begin{itemize}
\item Assume that $\ell$ is such that there is $f:\mathbb{R} \rightarrow \mathbb{R}$ convex, $\forall i \le n, \exists z_i \in Y,~\ell_i(y) = f(\left\langle y, z_i\right\rangle)$.
Then $\ell_i^\star: \mathcal{Y} \rightarrow \mathbb{R}$ writes $\ell_i^\star(\alpha) = \sup_{y \in \mathcal{Y}} \left\langle \alpha, y\right\rangle - f(\left\langle y, z_i\right\rangle)$.
If $\alpha$ is not collinear to $z_i$, this quantity is obviously $+ \infty$.
Otherwise, assume that $\alpha = \lambda z_i$.
The FL transform rewrites: $\ell_i^\star(\alpha) = \sup_t \lambda t - f(t) = f^\star(\lambda) = f^\star(\pm \|\alpha\|/\|z_i\|)$.
Finally, $\ell_i^\star(\alpha) = \chi_{\text{span}(z_i)}(\alpha) + f^\star\left(\pm \frac{\|\alpha\|}{\|z_i\|}\right)$.
If $\alpha \notin Y$, then \textit{a fortiori} $\alpha \notin \text{span}(z_i)$, so $\ell_i^\star(\alpha^Y + \alpha^\perp) = +\infty \ge \ell_i^\star(\alpha^Y)$ for all $(\alpha^Y, \alpha^\perp) \in Y \times Y^\perp$.
For all $i \le n$, $\ell_i^\star$ satisfy \Cref{hyp:FL_1}.
As for \Cref{hyp:FL_2}, if $\alpha = \sum_{i=1}^n c_i y_i$, then $\chi_{\text{span}(z_i)}(\alpha)$ only depends on the $(c_i)_{i \le n}$
Indeed, assume that $z_i \in Y$ writes $\sum_j b_j y_j$.
Then $\chi_{\text{span}(z_i)}(\alpha)$ is equal to $0$ if there exists $\lambda \in \mathbb{R}$ such that $c_j = \lambda b_j$ for all $j \le n$, and to $+\infty$ otherwise.
The second term of $\ell_i^\star$ depending only on $\|\alpha\|$, it directly satisfies \Cref{hyp:FL_2}.
This concludes the proof.
\item Assume that $\ell$ is such that there is $f: \mathbb{R}_+ \rightarrow \mathbb{R}$ convex increasing, with $\frac{f'(t)}{t}$ continuous over $\mathbb{R}_+$, $\ell(y) = f(\|y\|)$.
Although this loss may seem useless at the first sight since $\ell$ does not depend on $y_i$, it should not be forgotten that the composition with $y \mapsto y - y_i$ does not affect the validation of Assumptions \ref{hyp:FL_1} and \ref{hyp:FL_2} (see below).
One has: $\ell^\star(\alpha) = \sup_{y \in \mathcal{Y}} \left\langle \alpha, y\right\rangle - f(\|y\|)$.
Differentiating w.r.t. $y$, one gets: $\alpha = \frac{f'(\|y\|)}{\|y\|}y$, which is always well define as $t \mapsto \frac{f'(t)}{t}$ is continuous over $\mathbb{R}_+$.
Reverting the equality, it holds: $y = \frac{{f'}^{-1}(\|\alpha\|)}{\|\alpha\|} \alpha$, and $\ell^\star(\alpha) = \|\alpha\|{f'}^{-1}(\|\alpha\|) - f\circ{f'}^{-1}(\|\alpha\|)$.
This expression depending only on $\|\alpha\|$, \Cref{hyp:FL_2} is automatically satisfied.
Let us now investigate the monotonicity of $\ell^\star$ w.r.t. $\|\alpha\|$.
Let $g: \mathbb{R}_+ \rightarrow \mathbb{R}$ such that $g(t) = t{f'}^{-1}(t) - f\circ{f'}^{-1}(t)$.
Then $g'(t) = {f'}^{-1}(t) \ge 0$.
Indeed, as $f':\mathbb{R}_+ \rightarrow \mathbb{R}_+$ is always positive due to the monotonicity of $f$, so is ${f'}^{-1}$.
This final remark guarantees that $\ell^\star$ is increasing with $\|\alpha\|$.
It is then direct that $\ell^\star$ fulfills \Cref{hyp:FL_1}.
\item Assume that $\ell(y) = \lambda \|y\|$.
It holds $\ell^\star(\alpha) = \chi_{\mathcal{B}_\lambda}(\alpha)$.
So $\ell^\star$ is increasing w.r.t. $\|\alpha\|$: it fulfills Assumptions \ref{hyp:FL_1} and \ref{hyp:FL_2}.
\item Assume that $\ell(y) = \chi_{\mathcal{B}_\lambda}(y)$.
It holds $\ell^\star(\alpha) = \lambda \|\alpha\|$.
The monotonicity argument also applies.
\item Assume that $\ell(y) = \lambda \|y\|\log(\|y\|)$.
It can be shown that $\ell^\star(\alpha) = \lambda e^{\frac{\|\alpha\|}{\lambda} - 1}$.
The same argument as above applies.
\item Assume that $\ell(y) = \lambda (\exp(\|y\|) - 1)$.
It can be shown that $\ell^\star(\alpha) = \mathbb{I}\{\|\alpha\| \ge \lambda\}\cdot \left(\|\alpha\|\log\left(\frac{\|\alpha\|}{\lambda e}\right) + \lambda\right)$.
Again, the FL transform is an increasing function of $\|\alpha\|$: it satisfies Assumptions \ref{hyp:FL_1} and \ref{hyp:FL_2}.
\item Assume that $\ell_i(y) = f(y - y_i)$, with f such that $f^\star$ fulfills Assumptions \ref{hyp:FL_1} and \ref{hyp:FL_2}.
Then $\ell_i^\star(\alpha) = \sup_{y \in \mathcal{Y}} \left\langle \alpha, y\right\rangle - f(y - y_i) = f^\star(\alpha) + \left\langle \alpha, y_i\right\rangle$.
If $f^\star$ satisfies Assumptions \ref{hyp:FL_1} and \ref{hyp:FL_2}, then so does $\ell_i^\star$.
This remark is very important, as it gives more sense to loss function based on $\|y\|$ only, since they can be applied to $y - y_i$ now.
\item Assume that there exists $f, g$ satisfying Assumptions \ref{hyp:FL_1} and \ref{hyp:FL_2} such that $\ell_i(y) = (f \infconv g)(y)$, where $\infconv$ denotes the infimal convolution, \textit{i.e.} $(f \infconv g)(y) = \inf_x f(x) + g(y-x)$.
Standard arguments about FL transforms state that $(f\infconv g)^\star = f^\star + g^\star$, so that if both $f$ and $g$ satisfy Assumptions \ref{hyp:FL_1} and \ref{hyp:FL_2}, so does $f \infconv g$.
This last example allows to deal with $\epsilon$-insensitive losses for instance (convolution of a loss and $\chi_{\mathcal{B}_\epsilon}$), the Huber loss (convolution of $\|.\|$ and $\|.\|^2$), or more generally all Moreau envelopes (convolution of a loss and $\frac{1}{2}\|.\|^2$).
\end{itemize}
\vspace{-0.33cm}
\qed


\subsection{Proof of \Cref{thm:double_rt_approx}}
\label{apx:double_rt_approx}
The proof of \Cref{thm:double_rt_approx} is straightforward: since the dual space $\widetilde{\mathcal{Y}}_m$ is of finite dimension $m$, the dual variable can be written as a linear combination of the $\{\psi_j\}_{j=1}^m$ to get \Cref{pbm:omega_pbm_approx}.

\subsection{Proof of \Cref{thm:loss_instantiation}}
\label{apx:loss_instantiation}


\subsubsection{$\epsilon$-Ridge -- from Problem $(P1)$ to $(D1)$}
\label{apx:other_pbms}

Applying \Cref{thm:dual_pbm} together with the Fenchel-Legendre transforms detailed in the proof of \Cref{prop:good_losses}, a dual to the $\epsilon$-Ridge regression primal problem is:
\begin{align*}
\min_{(\alpha_i)_{i=1}^n\in \mathcal{Y}^n} \quad &\frac{1}{2}\sum_{i=1}^n\|\alpha_i\|^2_\mathcal{Y} -\sum_{i=1}^n \left\langle \alpha_i, y_i\right\rangle_\mathcal{Y} + \epsilon \sum_{i=1}^n \|\alpha_i\|_\mathcal{Y} + \frac{1}{2\Lambda n} \sum_{i,j=1}^n \left\langle \alpha_i, \mathcal{K}(x_i, x_j) \alpha_j\right\rangle_\mathcal{Y},\\
\min_{(\alpha_i)_{i=1}^n\in \mathcal{Y}^n} \quad &\frac{1}{2} \sum_{i,j=1}^n \left\langle \alpha_i, \left(\delta_{ij} \mathbf{I}_\mathcal{Y} +  \frac{1}{\Lambda n}\mathcal{K}(x_i, x_j)\right) \alpha_j\right\rangle_\mathcal{Y} -\sum_{i=1}^n \left\langle \alpha_i, y_i\right\rangle_\mathcal{Y} + \epsilon \sum_{i=1}^n \|\alpha_i\|_\mathcal{Y}.
\end{align*}

By virtue of \Cref{thm:double_rt}, we known that the optimal $(\alpha_i)_{i=1}^n\in \mathcal{Y}^n$ are in $\bm{\mathsf{Y}}^n$.
After the reparametrization $\alpha_i = \sum_j \omega_{ij}~y_j$, the problem rewrites:
\begin{problem}\label{pbm:foo}
\min_{\Omega \in \mathbb{R}^{n \times n}} \quad \frac{1}{2} \mathbf{Tr}\left(\tilde{K}\Omega K^Y\Omega^\top\right) - \mathbf{Tr}\left(K^Y \Omega\right) + \epsilon \sum_{i=1}^n \sqrt{\left[\Omega K^Y \Omega^\top\right]_{ii}},
\end{problem}
with $\Omega$, $\tilde{K}$, $K^Y$ the $n \times n$ matrices such that $[\Omega]_{ij} = \omega_{ij}$, $\tilde{K} = \frac{1}{\Lambda n} K^X + \mathbf{I}_n$, and $[K^Y]_{ij} = \left\langle y_i, y_j\right\rangle_\mathcal{Y}$.

Now, let $K^Y = U \Sigma U^\top = \left(U\Sigma^{1/2}\right)\left(U\Sigma^{1/2}\right)^\top = VV^\top$ be the SVD of $K^Y$, and let $W = \Omega V$.
Notice that $K^Y$ is positive semi-definite, and can be made positive definite if necessary, so that $V$ is full rank, and optimizing with respect to $W$ is strictly equivalent to minimizing with respect to $\Omega$.
With this change of variable, \Cref{pbm:foo} rewrites:
\begin{problem}\label{pbm:bar}
\min_{W \in \mathbb{R}^{n \times n}} \quad \frac{1}{2} \mathbf{Tr}\left(\tilde{K}WW^\top\right) - \mathbf{Tr}\left(V^\top W\right) + \epsilon \|W\|_{2, 1},
\end{problem}
with $\|W\|_{2, 1} = \sum_i \|W_{i:}\|_2$ the row-wise $\ell_{2, 1}$ mixed norm of matrix $W$.
With $\tilde{K} = A^\top A$ the SVD of $\tilde{K}$, and $B$ such that $A^\top B = V$, one can add the constant term $\frac{1}{2}\mathbf{Tr}({A^\top}^{-1}VV^\top A^{-1}) = \frac{1}{2}\mathbf{Tr}(BB^\top)$ to the objective without changing \Cref{pbm:bar}.
One finally gets the Multi-Task Lasso problem:
\begin{equation*}
\min_{W \in \mathbb{R}^{n \times n}} \quad \frac{1}{2} \|AW - B\|_\mathrm{Fro}^2 + \epsilon \|W\|_{2, 1}.
\end{equation*}
\qed

We also emphasize that we recover the solution to the standard Ridge regression when $\epsilon = 0$.
Indeed, coming back to \Cref{pbm:foo} and differentiating with respect to $\Omega$, one gets:
\begin{equation*}
\tilde{K}\hat{\Omega}K^Y - K^Y = 0 \iff \hat{\Omega} = \tilde{K}^{-1},
\end{equation*}
which is exactly the standard kernel Ridge regression solution, see \textit{e.g.} \citetNew{brouard2016input2}.

Furthermore, notice that when $\mathcal{K}$ is not identity decomposable, but only satisfies \Cref{hyp:sum_decompo}, then \Cref{pbm:bar} cannot be factorized that easily.
Nonetheless, it admits a simple resolution, as detailed in the following lines.
After the $\Omega$ reparametrization, the problem writes
\begin{gather*}
\min_{\Omega \in \mathbb{R}^{n \times n}} \quad \frac{1}{2} \mathbf{Tr}(\Omega K^Y \Omega^\top) - \mathbf{Tr}\left(K^Y \Omega\right) + \epsilon \sum_{i=1}^n \sqrt{\left[\Omega K^Y \Omega^\top\right]_{i, i}} + \frac{1}{2 \Lambda n} \sum_{t=1}^T \mathbf{Tr}(K^X_t \Omega K^Y_t \Omega^\top),\\
\min_{W \in \mathbb{R}^{n \times n}} \quad \frac{1}{2} \mathbf{Tr}(WW^\top) + \frac{1}{2 \Lambda n} \sum_{t=1}^T \mathbf{Tr}(K^X_t W \tilde{K}^Y_t W^\top) - \mathbf{Tr}\left(V^\top W\right) + \epsilon \|W\|_{2, 1},
\end{gather*}
with $K^Y = VV^\top$, $W = \Omega V$, $\tilde{K}_t^Y = V^{-1} K_t^Y (V^\top)^{-1}$.
Due to the different quadratic terms, this problem cannot be summed up as a Multi-Task Lasso like before.
However, it may still be solved, \textit{e.g.} by proximal gradient descent.
Indeed, the gradient of the smooth term (\textit{i.e.} all but the $\ell_{2, 1}$ mixed norm) reads
\begin{equation}\label{eq:gradient}
W + \frac{1}{\Lambda n} \sum_{t=1}^T K^X_t W \tilde{K}^Y_t - V,
\end{equation}
while the proximal operator of the $\ell_{2, 1}$ mixed norm is
\begin{equation*}
\text{prox}_{\epsilon~\|~\cdot~\|_{2, 1}}(W) = \left(\begin{matrix}|\\\text{prox}_{\epsilon~\|~\cdot~\|_2}(W_{i:})\\|\end{matrix}\right) = \left(\begin{matrix}|\\\left(1 - \frac{\epsilon}{\|W_{i:}\|_2}\right)_+W_{i:}\\|\end{matrix}\right) = \left(\begin{matrix}|\\\text{BST}(W_{i:}, \epsilon)\\|\end{matrix}\right).
\end{equation*}
Hence, even in the more involved case of an OVK satisfying only \Cref{hyp:sum_decompo}, we have designed an efficient algorithm to compute the solutions to the dual problem.


\subsubsection{$\kappa$-Huber -- from Problem $(P2)$ to $(D2)$}
\label{apx:huber}

Basic manipulations give the Fenchel-Legendre transforms of the Huber loss:
\begin{align*}
\Big(y \mapsto \ell_{H, \kappa}(y - y_i)\Big)^\star(\alpha) &= \left(\kappa\|\cdot\|_\mathcal{Y} \infconv \frac{1}{2}\|\cdot\|_\mathcal{Y}^2\right)^\star(\alpha) + \left\langle \alpha, y_i\right\rangle_\mathcal{Y},\\
&= \left(\kappa\|\cdot\|_\mathcal{Y} \right)^\star(\alpha) + \left(\frac{1}{2}\|\cdot\|_\mathcal{Y}^2\right)^\star(\alpha) + \left\langle \alpha, y_i\right\rangle_\mathcal{Y},\\
&= \chi_{\mathcal{B}_\kappa}(\alpha) + \frac{1}{2}\|\alpha\|_\mathcal{Y}^2 + \left\langle \alpha, y_i\right\rangle_\mathcal{Y}.
\end{align*}

Following the same lines as for as for the $\epsilon$-Ridge regression, the dual problem writes
\begin{equation*}
\min_{(\alpha_i)_{i=1}^n\in \mathcal{Y}^n} \quad \frac{1}{2} \sum_{i,j=1}^n \left\langle \alpha_i, \left(\delta_{ij} \mathbf{I}_\mathcal{Y} +  \frac{1}{\Lambda n}\mathcal{K}(x_i, x_j)\right) \alpha_j\right\rangle_\mathcal{Y} -\sum_{i=1}^n \left\langle \alpha_i, y_i\right\rangle_\mathcal{Y} + \sum_{i=1}^n \chi_\kappa(\|\alpha_i\|_\mathcal{Y}),
\end{equation*}
or again after the reparametrization in $\Omega$
\begin{equation*}
\begin{aligned}
&\min_{\Omega \in \mathbb{R}^{n \times n}} \quad &&\frac{1}{2} \mathbf{Tr}\left(\tilde{K}\Omega K^Y\Omega^\top\right) - \mathbf{Tr}\left(K^Y \Omega\right)\\
&~~~\text{s.t.} && \sqrt{\left[\Omega K^Y \Omega^\top\right]_{ii}} \le \kappa \qquad \forall i \le n
\end{aligned}
\end{equation*}
The same change of variable permits to conclude.\qed

When $\mathcal{K}$ is not identity decomposable, but only satisfies \Cref{hyp:sum_decompo}, the problem rewrites
\begin{equation*}
\begin{aligned}
&\min_{W \in \mathbb{R}^{n \times n}} ~~ &&\frac{1}{2} \mathbf{Tr}(WW^\top) + \frac{1}{2 \Lambda n} \sum_{t=1}^T \mathbf{Tr}(K^X_t W \tilde{K}^Y_t W^\top) - \mathbf{Tr}\left(V^\top W\right),\\
&~~~\text{s.t.} && \|W_{i:}\|_2 \le \kappa \quad \forall i \le n,
\end{aligned}
\end{equation*}
Again, the gradient term is given by \Cref{eq:gradient}, while the projection is similar to the identity decomposable case.
The only change thus occurs in the gradient step of \Cref{alg:pgd}, with a replacement by the above formula.

Notice that if $\kappa$ tends to infinity, the problem is unconstrained, and one also recovers the standard Ridge regression solution.

\subsubsection{$\epsilon$-SVR -- from Problem $(P3)$ to $(D3)$}

The proof is similar to the above derivations except that the term $\sum_i\|\alpha_i\|_\mathcal{Y}^2$ does not appear in the dual, hence the change of matrix $\tilde{K}$.
Instead, the dual problem features both the $\ell_{2, 1}$ penalization and the $\ell_{2, \infty}$ constraint.
\qed

\subsection{Proof of \Cref{thm:huber_l2}}
The proof is similar to \Cref{apx:huber}, with the finite representation coming from \Cref{thm:double_rt_approx}.


\subsection{Proof of \Cref{thm:stability}}
\label{sec:apx_stability}

In this section, we detail the derivation of constants in \Cref{tab:constants}.


\subsubsection{$\epsilon$-SVR}

Using that the null function is part of the vv-RKHS, it holds
\begin{equation*}
\frac{\Lambda}{2} \|\hsample\|_{\mathcal{H}_\mathcal{K}}^2 \le \hat{\mathcal{R}}_n(\hsample) \le \hat{\mathcal{R}}_n(0_{\mathcal{H}_\mathcal{K}}) \le M_\mathcal{Y} - \epsilon, \qquad \text{or again} \qquad \|\hsample\|_{\mathcal{H}_\mathcal{K}} \le \sqrt{\frac{2}{\Lambda}(M_\mathcal{Y} - \epsilon)}.
\end{equation*}

Furthermore, the reproducing property and \Cref{hyp:bounded_kernel} give that for any $x\in \mathcal{X}$ and any $h \in \mathcal{H}_\mathcal{K}$ it holds
\begin{equation*}
\|h(x)\|^2 = \left\langle \mathcal{K}(\cdot, x)\mathcal{K}(\cdot, x)^\# h, h\right\rangle_{\mathcal{H}_\mathcal{K}} \le \left\|\mathcal{K}(\cdot, x)\mathcal{K}(\cdot, x)^\#\right\|_\text{op} \|h\|^2_{\mathcal{H}_\mathcal{K}} \le \left\|\mathcal{K}(x, x) \right\|_\text{op} \|h\|^2_{\mathcal{H}_\mathcal{K}} \le \gamma^2 \|h\|^2_{\mathcal{H}_\mathcal{K}}.
\end{equation*}

Therefore, one gets that for any realization $(x, y) \in \mathcal{X} \times \mathcal{Y}$ of $(X, Y)$ it holds
\begin{equation*}
\ell(\hsample(x), y) = \max(\|y - \hsample(x)\|_\mathcal{Y}-\epsilon, 0) \le M_y - \epsilon + \|\hsample(x)\|_\mathcal{Y} \le \sqrt{M_\mathcal{Y} - \epsilon}\left(\gamma\sqrt{\frac{2}{\Lambda}} + \sqrt{M_\mathcal{Y} - \epsilon}\right).
\end{equation*}

This gives $M$. As for $C$, one has
\begin{equation*}
\ell(\hsample(x), y) - \ell(\hsamplei(x), y) = \max(\|y - \hsample(x)\|_\mathcal{Y} - \epsilon, 0) - \max(\|y - \hsamplei(x)\|_\mathcal{Y} - \epsilon, 0).
\end{equation*}
If both norms are smaller than $\epsilon$, then any value of $C$ fits.
If both norms are greater than $\epsilon$, the difference reads
\begin{equation*}
\|y - \hsample(x)\|_\mathcal{Y} - \|y - \hsamplei(x)\|_\mathcal{Y} \le \|\hsample(x) - \hsamplei(x)\|_\mathcal{Y}.
\end{equation*}
If only one norm is greater than $\epsilon$ (we write it only for $\hsample$ as it is symmetrical), the difference may be rewritten
\begin{equation*}
\|y - \hsample(x)\|_\mathcal{Y} - \epsilon \le \|y - \hsample(x)\|_\mathcal{Y} - \|y - \hsamplei(x)\|_\mathcal{Y} \le \|\hsample(x) - \hsamplei(x)\|_\mathcal{Y}.
\end{equation*}
Hence we get $C=1$.


\subsubsection{$\epsilon$-Ridge}

Using the same reasoning as for the $\epsilon$-SVR, one has
\begin{equation}\label{eq:norm_funcs}
\|\hsample\|_{\mathcal{H}_\mathcal{K}} \le \sqrt{\frac{2}{\Lambda}}(M_\mathcal{Y} - \epsilon) \qquad \text{and} \qquad \|\hsamplei\|_{\mathcal{H}_\mathcal{K}} \le \sqrt{\frac{2}{\Lambda}}(M_\mathcal{Y} - \epsilon).
\end{equation}

Therefore, for any realization $(x, y) \in \mathcal{X} \times \mathcal{Y}$ of $(X, Y)$ it holds
\begin{equation*}
\ell(\hsample(x), y) = \max(\|y - \hsample(x)\| - \epsilon, 0)^2 \le (\|y\|_\mathcal{Y} - \epsilon + \|\hsample(x)\|_\mathcal{Y})^2 \le (M_\mathcal{Y} - \epsilon)^2\left(1 + \frac{2\sqrt{2}\gamma}{\sqrt{\Lambda}} + \frac{2\gamma^2}{\Lambda}\right).
\end{equation*}

As for $C$, one has
\begin{equation*}
\ell(\hsample(x), y) - \ell(\hsamplei(x), y) = \max(\|y - \hsample(x)\|_\mathcal{Y} - \epsilon, 0)^2 - \max(\|y - \hsamplei(x)\|_\mathcal{Y} - \epsilon, 0)^2.
\end{equation*}
If both norms are smaller than $\epsilon$, any $C$ fits.
If both are larger than $\epsilon$, using \Cref{eq:norm_funcs} the difference becomes
\begin{align*}
&\left(\|y - \hsample(x)\|_\mathcal{Y} + \|y - \hsamplei(x)\|_\mathcal{Y} - 2\epsilon\right)\left(\|y - \hsample(x)\|_\mathcal{Y} - \|y - \hsamplei(x)\|_\mathcal{Y}\right),\\[0.15cm]
\le~& 2(M_\mathcal{Y} - \epsilon)\left(1 + \frac{\gamma\sqrt{2}}{\sqrt{\Lambda}}\right)\|\hsample(x) - \hsamplei(x)\|_\mathcal{Y}.
\end{align*}
If only one norm is greater than $\epsilon$ (again, the analysis is symmetrical), the difference may be rewritten
\begin{align*}
\left(\|y - \hsample(x)\|_\mathcal{Y} - \epsilon\right)^2 &\le \left(\|y - \hsample(x)\|_\mathcal{Y} - \|y - \hsamplei(x)\|_\mathcal{Y}\right)^2 \le \|\hsample(x) - \hsamplei(x)\|^2_\mathcal{Y},\\[0.15cm]
&\le \left(\|\hsample(x)\|_\mathcal{Y} + \|\hsamplei(x)\|_\mathcal{Y}\right)\|\hsample(x) - \hsamplei(x)\|_\mathcal{Y},\\[0.05cm]
&\le 2(M_\mathcal{Y} - \epsilon)\frac{\gamma\sqrt{2}}{\sqrt{\Lambda}}\|\hsample(x) - \hsamplei(x)\|_\mathcal{Y}.
\end{align*}
In every case $C = 2(M_\mathcal{Y} - \epsilon)\left(1 + \gamma\sqrt{2}/\sqrt{\Lambda}\right)$ works, hence the conclusion.


\subsubsection{$\kappa$-Huber}

Using the same techniques, one gets
\begin{equation*}
\|\hsample\|_{\mathcal{H}_\mathcal{K}} \le \sqrt{\frac{2\kappa}{\Lambda}\left(M_\mathcal{Y} - \frac{\kappa}{2}\right)} \qquad \text{and} \qquad \|\hsamplei\|_{\mathcal{H}_\mathcal{K}} \le \sqrt{\frac{2\kappa}{\Lambda}\left(M_\mathcal{Y} - \frac{\kappa}{2}\right)},
\end{equation*}
and for any realization $(x, y) \in \mathcal{X} \times \mathcal{Y}$ of $(X, Y)$
\begin{equation*}
\ell(\hsample(x), y) \le \kappa\sqrt{M_\mathcal{Y} - \frac{\kappa}{2}}\left(\frac{\gamma\sqrt{2\kappa}}{\sqrt{\Lambda}} + \sqrt{M_\mathcal{Y} - \frac{\kappa}{2}}\right).
\end{equation*}

If both norms are greater than $\kappa$, the difference $\ell(\hsample(x), y) - \ell(\hsamplei(x), y)$ writes
\begin{equation*}
\kappa\left(\|\hsample(x) - y\|_\mathcal{Y} - \frac{\kappa}{2}\right) - \kappa\left(\|\hsamplei(x) - y\|_\mathcal{Y} - \frac{\kappa}{2}\right) \le \kappa \|\hsample(x) - \hsamplei(x)\|_\mathcal{Y}.
\end{equation*}
If only one norm is greater than $\kappa$, one may upperbound the difference using the previous writing
\begin{equation*}
\kappa\left(\|\hsample(x) - y\|_\mathcal{Y} - \frac{\kappa}{2}\right) - \frac{1}{2} \|\hsamplei(x) - y\|_\mathcal{Y}^2 \le \kappa\left(\|\hsample(x) - y\|_\mathcal{Y} - \frac{\kappa}{2}\right) - \kappa\left(\|\hsamplei(x) - y\|_\mathcal{Y} - \frac{\kappa}{2}\right).
\end{equation*}
If both are smaller than $\kappa$, the difference becomes
\begin{align*}
\frac{1}{2} & \|\hsample(x) - y\|_\mathcal{Y}^2 - \frac{1}{2} \|\hsamplei(x) - y\|_\mathcal{Y}^2,\\
&= \frac{1}{2}\left(\|\hsample(x) - y\|_\mathcal{Y} + \|\hsamplei(x) - y\|_\mathcal{Y}\right)\left(\|\hsample(x) - y\|_\mathcal{Y} - \|\hsamplei(x) - y\|_\mathcal{Y}\right),\\
&\le \kappa \|\hsample(x) - \hsamplei(x)\|_\mathcal{Y},
\end{align*}
so that $C = \kappa$.


\subsection{Further Admissible Kernels for \Cref{hyp:stable_ovk}}
\label{apx:suit_ker}

In the continuation of \Cref{rmk:stable_ovk}, we now exhibit several types of OVK that satisfy \Cref{hyp:stable_ovk}.
\begin{proposition}
The following Operator-Valued Kernels satisfy \Cref{hyp:stable_ovk}:\vspace{-0.2cm}
\begin{enumerate}[label=(\roman*)]
\item $\forall s, t \in \mathcal{X}^2, ~~ \mathcal{K}(s, t) = \sum_i~k_i(s, t)~y_i \otimes y_i$, \hfill with $k_i$ positive semi-definite (p.s.d.) scalar kernels for all $i \le n$.
\item $\forall s, t \in \mathcal{X}^2, ~~ \mathcal{K}(s, t) = \sum_i~\mu_i~k(s, t)~y_i \otimes y_i$, \hfill with $k$ a p.s.d. scalar kernel and $\mu_i \ge 0$ for all $i \le n$.
\item $\forall s, t \in \mathcal{X}^2, ~~ \mathcal{K}(s, t) = \sum_i~k(s, x_i) k(t, x_i)~y_i \otimes y_i$,
\item $\forall s, t \in \mathcal{X}^2, ~~ \mathcal{K}(s, t) = \sum_{i,j}~k_{ij}(s, t)~(y_i + y_j)\otimes(y_i + y_j)$, \hfill with $k_{ij}$ p.s.d. scalar kernels for all $i,j \le n$.
\item $\forall s, t \in \mathcal{X}^2, ~~ \mathcal{K}(s, t) = \sum_{i,j}~\mu_{ij}~k(s,t)~(y_i + y_j)\otimes(y_i + y_j)$, \hfill with $k$ a p.s.d. scalar kernel and $\mu_{ij} \ge 0$.
\item $\forall s, t \in \mathcal{X}^2, ~~ \mathcal{K}(s, t) = \sum_{i,j}~k(s, x_i, x_j)k(t, x_i, x_j)~(y_i + y_j)\otimes(y_i + y_j)$.
\end{enumerate}
\end{proposition}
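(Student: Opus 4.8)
The plan is to recognise that the six displayed kernels are all instances of a single template, namely
\[
\mathcal{K}(s,t) = \sum_m \kappa_m(s,t)\, u_m \otimes u_m,
\]
in which each $\kappa_m$ is a (nonnegative) scalar kernel and --- this is the crux --- each vector $u_m$ already lies in $\bm{\mathsf{Y}} = \text{span}(y_i, i \le n)$. In cases (i), (ii) and (iii) the tensored vectors are $u_m = y_i$, trivially in $\bm{\mathsf{Y}}$; in cases (iv), (v) and (vi) they are $u_m = y_i + y_j$, which also belong to $\bm{\mathsf{Y}}$ because a span is closed under addition. It therefore suffices to establish the invariance for this template once and then match each kernel to it.

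First I would recall the action of a rank-one operator: for all $u, v \in \mathcal{Y}$,
\[
(u \otimes u)\, v = \langle u, v\rangle_\mathcal{Y}\, u,
\]
so that the range of $u \otimes u$ is contained in $\text{span}(u)$. Evaluating the template at an arbitrary pair $(x_i, x_j)$ then yields, for every $v \in \mathcal{Y}$,
\[
\mathcal{K}(x_i, x_j)\, v = \sum_m \kappa_m(x_i, x_j)\, \langle u_m, v\rangle_\mathcal{Y}\, u_m \in \bm{\mathsf{Y}},
\]
since the right-hand side is a finite linear combination of the $u_m \in \bm{\mathsf{Y}}$ and $\bm{\mathsf{Y}}$ is a subspace. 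This shows that the \emph{range} of $\mathcal{K}(x_i, x_j)$ sits inside $\bm{\mathsf{Y}}$ for every $(i,j)$, which is even stronger than the invariance $y \in \bm{\mathsf{Y}} \Rightarrow \mathcal{K}(x_i, x_j) y \in \bm{\mathsf{Y}}$ required by \Cref{hyp:stable_ovk}.

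To finish I would simply read off the scalar weights realising each case: $\kappa_m = k_i(s,t)$ for (i), $\mu_i k(s,t)$ for (ii), $k(s,x_i)k(t,x_i)$ for (iii), $k_{ij}(s,t)$ for (iv), $\mu_{ij} k(s,t)$ for (v), and $k(s,x_i,x_j)k(t,x_i,x_j)$ for (vi). The factorised weights in (iii) and (vi) are rank-one scalar kernels and hence positive semi-definite, which (together with positivity of the self-adjoint operators $u_m \otimes u_m$) confirms these objects are genuine OVKs; but positivity is irrelevant to the stability argument above, which uses only the algebraic form of the range.

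I expect no real obstacle: the whole proof reduces to the elementary identity $(u \otimes u)v = \langle u, v\rangle_\mathcal{Y}\, u$ and the closure of $\bm{\mathsf{Y}}$ under linear combinations. The only place deserving a second glance is cases (iv)--(vi), where one must keep in mind that the tensored vector is the sum $y_i + y_j$ rather than the individual outputs, and check that these sums do not leave $\bm{\mathsf{Y}}$.
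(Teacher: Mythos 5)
Your argument is correct, and it is the natural one: the identity $(u \otimes u)v = \langle u, v\rangle_\mathcal{Y}\, u$ shows that the range of each $\mathcal{K}(x_i,x_j)$ is contained in $\mathrm{span}(u_m) \subset \bm{\mathsf{Y}}$, which is stronger than the invariance demanded by \Cref{hyp:stable_ovk}, and the unified template covering all six cases is a clean way to organize it. Interestingly, the paper's own proof spends essentially all of its effort on the \emph{other} half of the claim, the one you dismiss as ``irrelevant'': it verifies that these expressions are genuine OVKs by checking positive semi-definiteness, i.e.\ that $\sum_{k,l}\langle z_k, \mathcal{K}(s_k,s_l) z_l\rangle_\mathcal{Y} = \sum_i \sum_{k,l} k_i(s_k,s_l)\langle z_k, y_i\rangle_\mathcal{Y}\langle z_l, y_i\rangle_\mathcal{Y} \ge 0$ by positivity of the scalar kernels, with (ii), (iii) reduced to (i) and (v), (vi) reduced to (iv); the invariance itself is left implicit there, presumably regarded as obvious for exactly the reason you give. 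So the two proofs are complementary rather than identical: yours directly targets the stated assumption (the invariance) and treats the OVK property lightly, while the paper's targets the implicit precondition that these maps are operator-valued kernels at all and treats the invariance as read. Your brief remarks that the weights in (iii) and (vi) are rank-one (hence p.s.d.) scalar kernels and that $u_m \otimes u_m$ is positive and self-adjoint do cover the OVK property in substance, so nothing essential is missing; if anything, a one-line display of the double sum above would make that half as explicit as the paper does.
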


\begin{proof}
~\\[-0.6cm]
\begin{enumerate}[label=$(\roman*)$]
\item For all $(s_k, z_k)_{k \le n} \in (\mathcal{X} \times \mathcal{Y})^n$, it holds: $\sum_{k,l}\left\langle z_k, \mathcal{K}(s_k, s_l)z_k\right\rangle_\mathcal{Y} = \sum_i \sum_{k, l} k_i(s, t) \left\langle z_k, y_i\right\rangle_\mathcal{Y}\left\langle z_l, y_i\right\rangle_\mathcal{Y}$, which is\\[0.15cm]positive by the positiveness of the scalar kernels $k_i$'s.
Notice that $(ii)$ and $(iii)$ are then particular cases of $(i)$.
\item is an application of $(i)$, as a kernel remains p.s.d. through positive multiplication.
Observe that this kernel is separable.
\item is also a direct application of $(i)$, kernel $k': s, t \mapsto k(s, x_i)k(t, x_i)$ being indeed p.s.d. for all function $k$ and point $x_i$.
\item is proved similarly to $(i)$. The arguments used for $(ii)$ and $(iii)$ also makes $(v)$ and $(vi)$ direct applications of $(iv)$.
\end{enumerate}
Finally, notice that for $(iv)$, $(v)$ and $(vi)$, any linear combination $(\nu_i y_i + \nu_j y_j) \otimes (\nu_i y_i + \nu_j y_j)$, with $0 \le \nu_i \le 1$ for all $i \le n$, could have been used instead of $(y_i + y_j)\otimes(y_i + y_j)$.
\end{proof}


\section{Loss Functions Illustrations}
\label{apx:loss_fig}

In this section, we provide illustrations of the loss functions we used to promote sparsity and robustness.
This includes $\epsilon$-insensitive losses (\Cref{def:eps_insensitive,def:eps_svr_ridge}, \Cref{fig:eps_svr,fig:eps_ridge}) and the $\kappa$-Huber loss (\Cref{def:Huber}, \Cref{fig:huber}).
First introduced for real outputs, their formulations as infimal convolutions allows for a generalization to any Hilbert space, either of finite dimension (as in \citetNew{sangnier2017data2}) or not, which is the general case addressed in the present paper.
The $\epsilon$-insensitive loss functions promote sparsity, as reflected in the corresponding dual problems (see \Cref{thm:loss_instantiation}, Problems $(D1)$ and $(D3)$ therein) and the empirical results (\Cref{fig:yeast_ekrr,fig:yeast_esvr}).
On the other hand, losses whose slopes asymptotically behave as $||\cdot ||_{\mathcal{Y}}$ instead of $||\cdot||_{\mathcal{Y}}^2$ (such as the $\kappa$-Huber or the $\epsilon$-SVR loss) encourage robustness through a resistance to outliers.
Indeed, under such a setting, residuals of high norm contribute less to the gradient and have a minor influence on the model output.

\begin{figure*}[!h]
\center
\includegraphics[height=4.85cm]{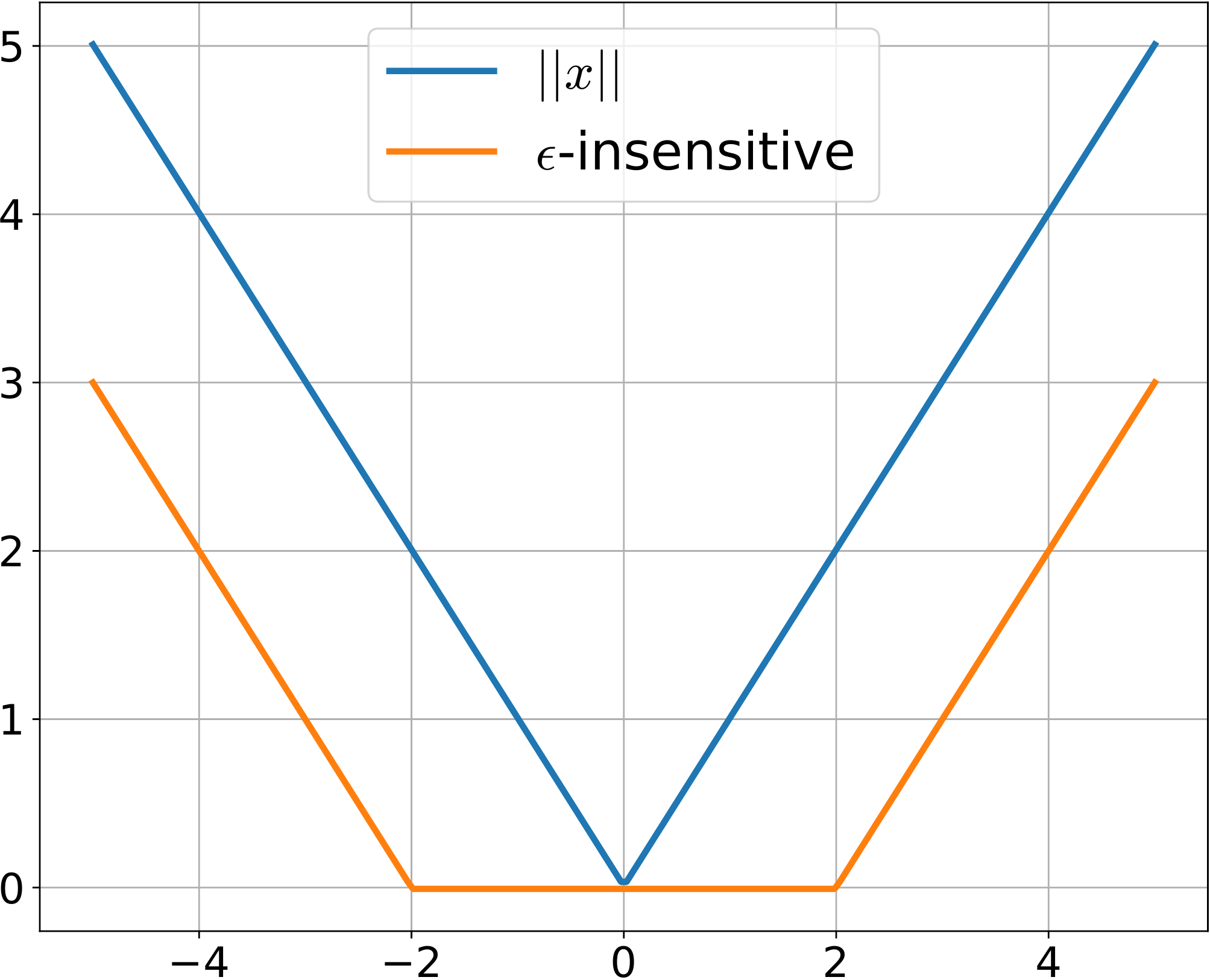}\hspace{1.7cm}
\includegraphics[height=5.1cm]{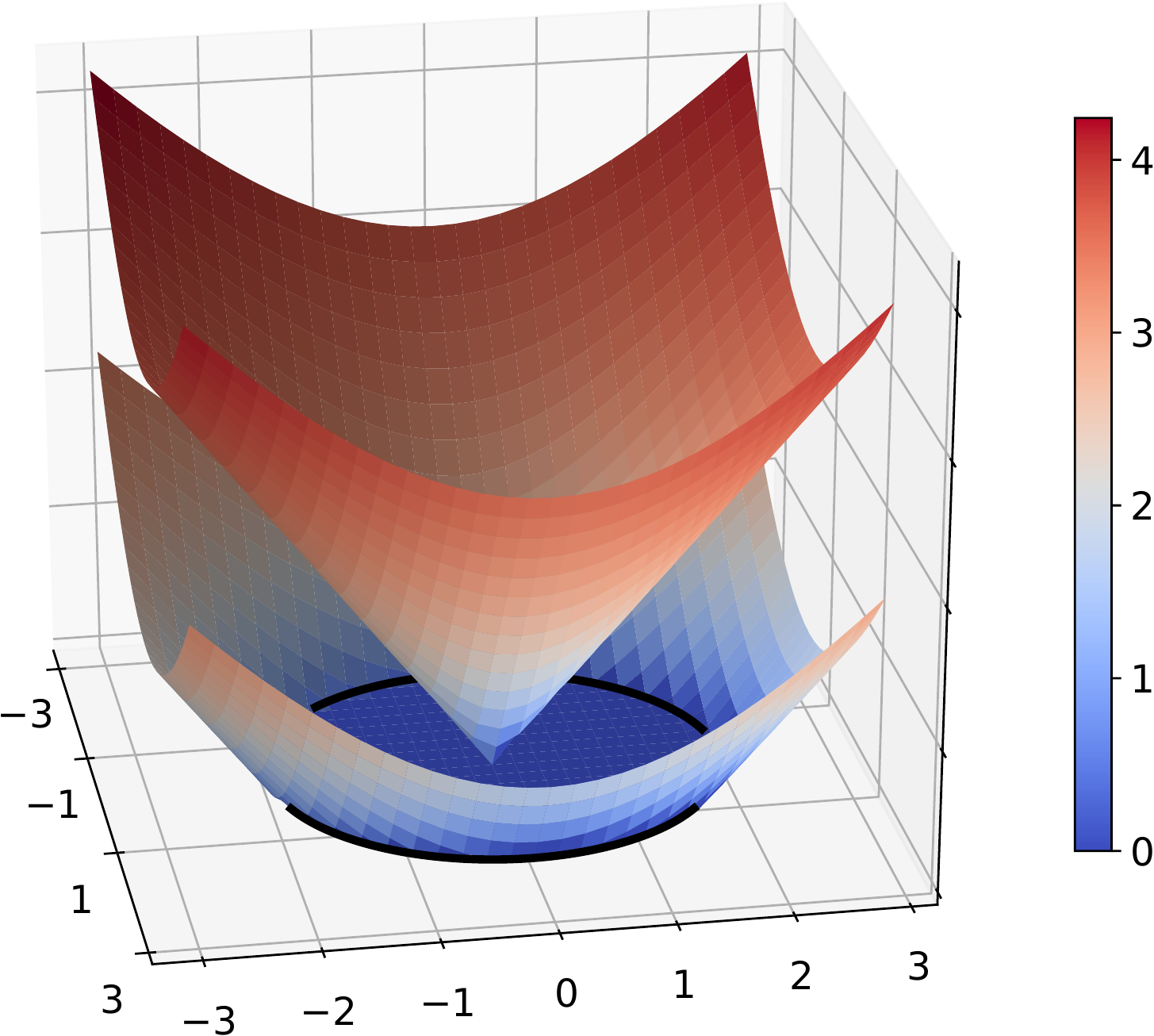}
\caption{Standard and $\epsilon$-insensitive versions of the SVR loss in $1$ and $2$ dimensions ($\epsilon=2$).}
\label{fig:eps_svr}
\end{figure*}

\begin{figure*}[!h]
\center
\includegraphics[height=4.85cm]{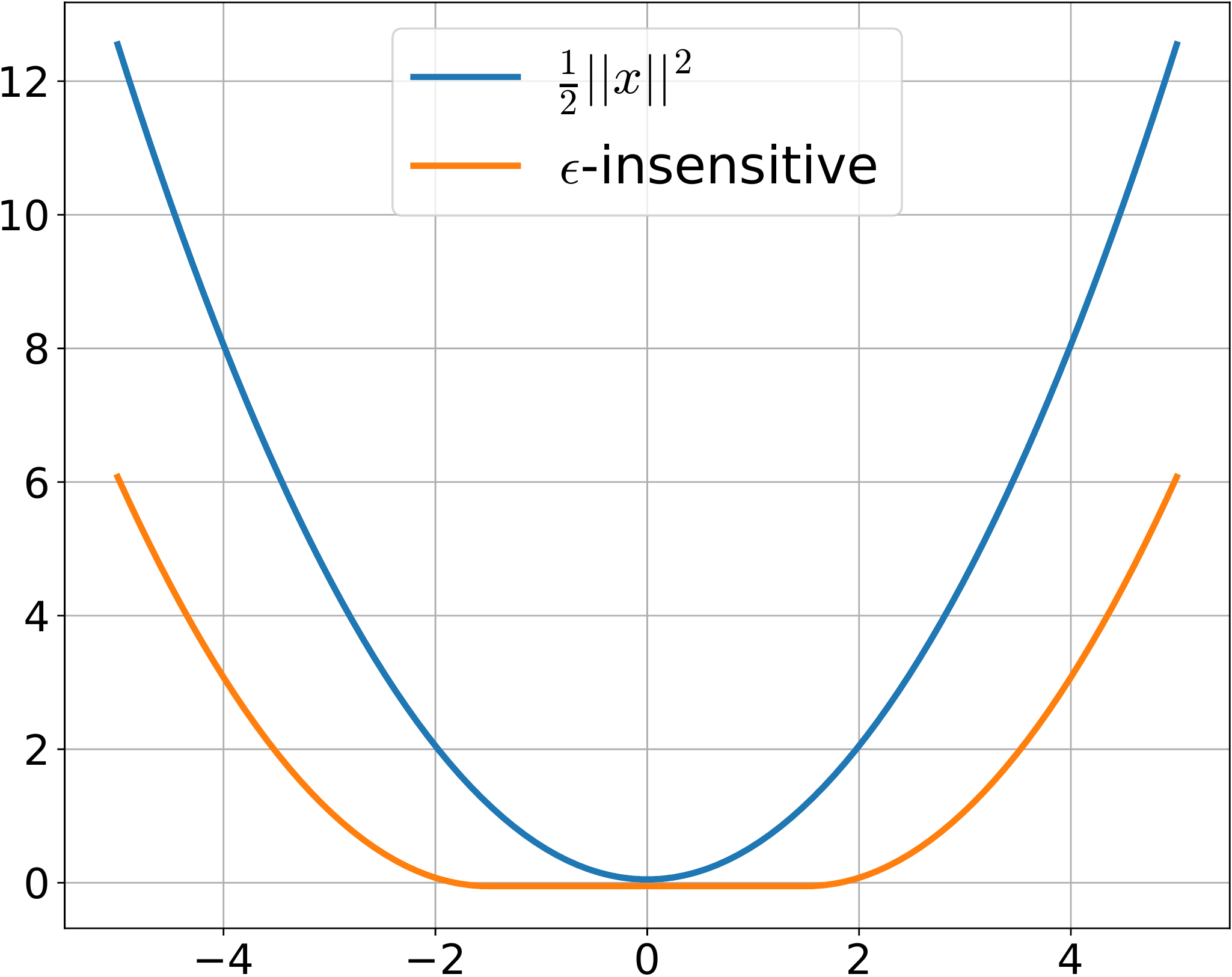}\hspace{1.7cm}
\includegraphics[height=5.1cm]{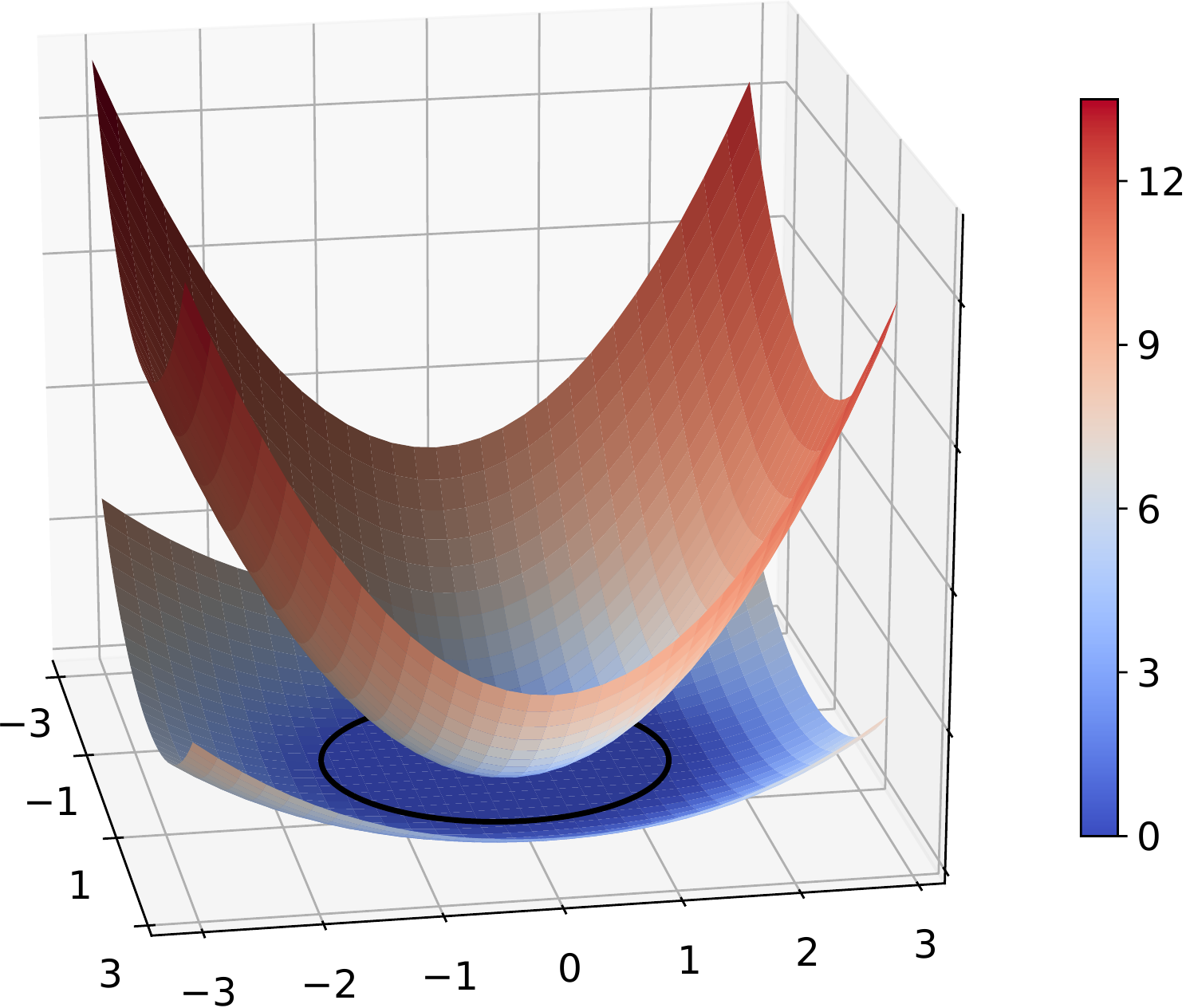}
\caption{Standard and $\epsilon$-insensitive versions of the square loss in $1$ and $2$ dimensions ($\epsilon=1.5$).}
\label{fig:eps_ridge}
\end{figure*}

\begin{figure*}[!t]
\center
\includegraphics[height=4.85cm]{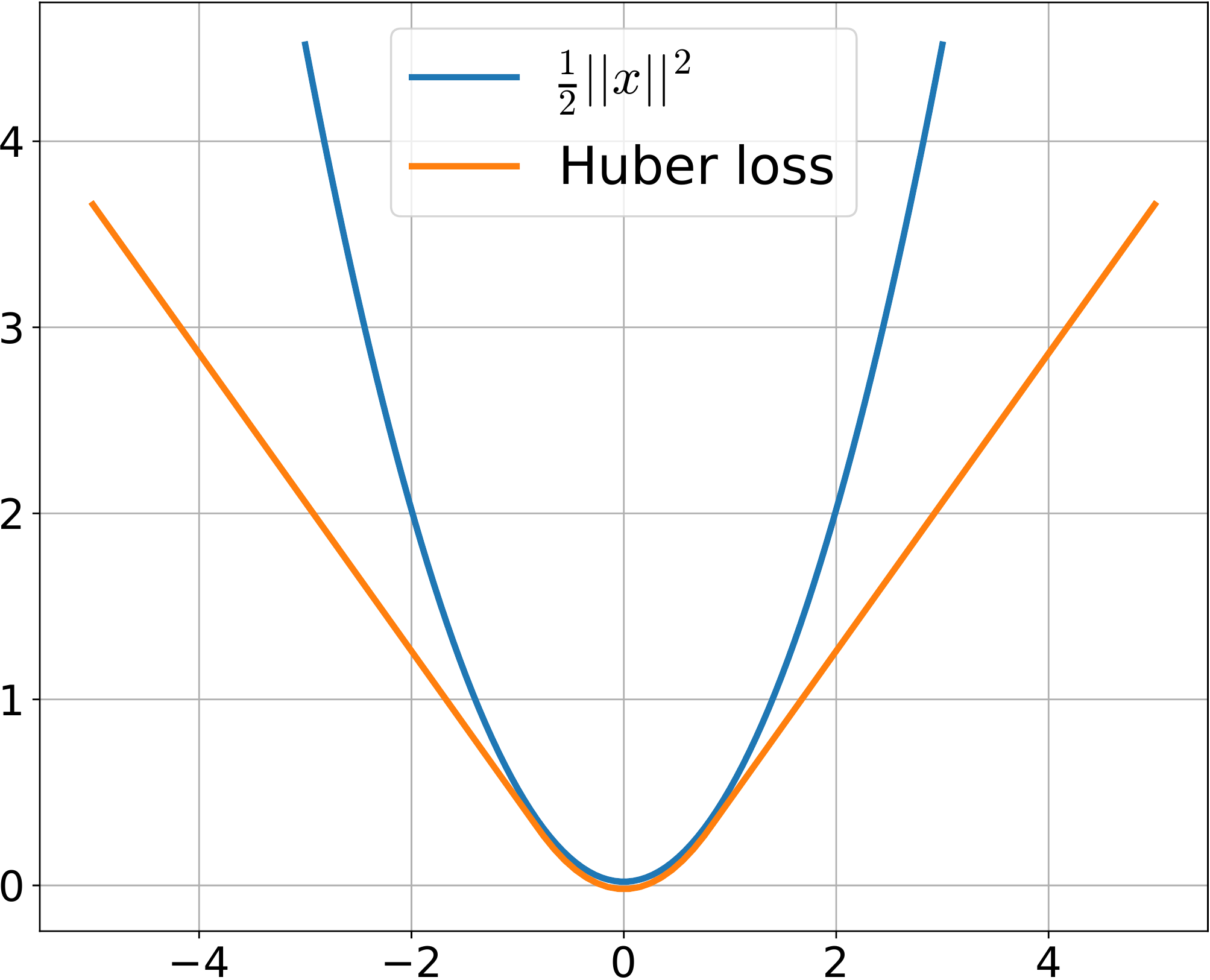}\hspace{1.7cm}
\includegraphics[height=5.1cm]{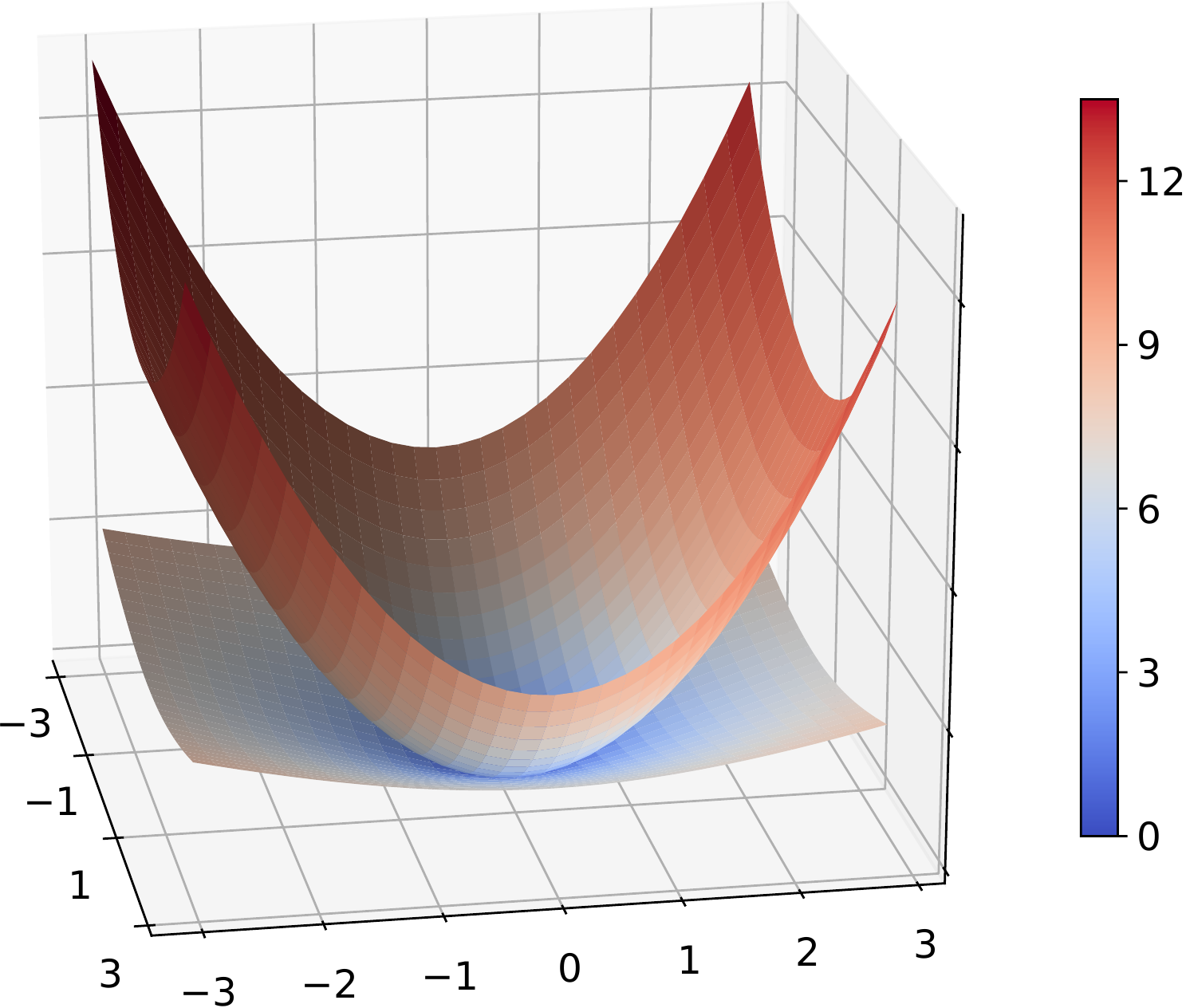}
\caption{Standard square loss and Huber loss in $1$ and $2$ dimensions ($\kappa=0.8$).}
\label{fig:huber}
\end{figure*}


\section{Numerical Experiments and Code}
\label{apx:expes}

\subsection{Provided Code}

The Python code used to generate the plots and tables of the article is provided.
The \texttt{README} file in the code folder contains instructions for quickly reproducing (part of) the plots.
All implemented methods may be run on other datasets/problems.

\subsection{Detailed Protocols}

\subsubsection{Structured Prediction}

{\bf YEAST Dataset Description.}
YEAST\footnote{\href{https://www.csie.ntu.edu.tw/~cjlin/libsvmtools/datasets/}{https://www.csie.ntu.edu.tw/~cjlin/libsvmtools/datasets/}} is a publicly available multi-label classification dataset used as a benchmark in several structured prediction articles.
We compared our approach, with the same train/test decomposition, to those presented in \citetNew{elisseeff2002kernel2}, \citetNew{finley2008training2} and \citetNew{belanger2016structured2}.
The size of the training set is $1500$, the test set is of size $917$. The problem consists in predicting the functional classes of a gene.
The inputs are micro-array expression data (representing the genes) of dimension $p = 103$.
The outputs are multi-label vectors of size \mbox{$d = 14$} representing the possible functional classes of the genes.
The average number of labels is $4.2$.
These $14$ functional classes correspond to the first level of a tree that structures a much bigger set of possible functional classes.

{\bf Experimental protocol: Comparison with other methods.}
In \Cref{tab:yeast}, we reported the Hamming error on the test set obtained by each method.
The results obtained by SSVM and SPENS are extracted from \citetNew{finley2008training2} and \citetNew{belanger2016structured2}.
For our approach and its three variants ($\epsilon$-KRR, $\kappa$-Huber, $\epsilon$-SVR), each hyper-parameter ($\Lambda$, $\epsilon$, or $\kappa$) has been selected by estimating the Mean Squared Error (MSE) through a $5$-fold cross-validation computed on the training set.
We used an input Gaussian kernel with a fixed bandwidth equal to $1$.

{\bf Experimental protocol: Cross-Effect of $\epsilon$ and $\Lambda$ on sparsity and MSE.}
In order to measure the effect of the different hyperparameters and study their interrelations, we have computed the $5$-fold cross-validation MSE and sparsity/saturation for several values of $\Lambda$ and $\epsilon$/$\kappa$.
The input kernel is still Gaussian with bandwidth $1$.
The results are plotted in \Cref{fig:mse_yeast,fig:sparse_yeast} for the $\epsilon$-KRR, and in \Cref{fig:yeast_esvr,fig:yeast_huber} for the $\epsilon$-SVR and $\kappa$-Huber.
In \Cref{fig:sparse_yeast}, we have measured sparsity through the number of training data which are discarded, \textit{i.e.} not used in the finite representation of the $\epsilon$-KRR model.
The $\kappa$-Huber saturation is assessed in a similar fashion: it corresponds to the number of training data whose associated coefficient saturates the norm constraint (see \Cref{thm:loss_instantiation}, Problem $(D2)$ therein).
Simplified versions of these graphs may be quickly reproduced using the code attached (see \texttt{README} file).

{\bf Metabolite identification dataset description.}
We next tested our method on a harder problem: that of metabolite \mbox{identification} \citepNew{brouard2016fast2}.
The goal is to predict a metabolite (small molecule) thanks to its mass spectrum.
The difficulty comes from the reduced size of the training set ($n = 6974$) compared to the high dimension of the outputs ($d = 7593$).
Input Output Kernel Regression (IOKR, see \citetNew{brouard2016fast2,brouard2016input2}) with a Tanimoto-Gaussian kernel is state of the art on this problem.

{\bf Experimental protocol.}
We investigate the advantages of substituting the Ridge Regression for the $\epsilon$-KRR, $\kappa$-Huber, and $\epsilon$-SVR.
Outputs are embedded in an infinite dimensional space through the use of the Tanimoto-Gaussian kernel (with bandwidth $\gamma=0.72$).
We compare the different algorithms' performances on a set of $6974$ mass spectra through the top-$k$ accuracies for $k \in \{1, 10, 20\}$.
We give the average $5$-fold top-$k$ accuracies (\Cref{tab:metabolite}).
The $5$ folds have been chosen such that a metabolite does not appear in two different folds (zero-shot learning setting).

\subsubsection{Structured Representation Learning}

{\bf Dataset Description.}
Robust structured representation learning was tested on a drug dataset, introduced in \citetNew{su2010structured2}, and extracted from the NCI-Cancer database.
This dataset features a set of molecules that are represented through a Gram matrix of size $2303 \times 2303$ obtained with a Tanimoto kernel.
Tanimoto kernels (see \citetNew{ralaivola2005graph2} for details) are a common way to compare labeled graphs by means of a bag-of-sequences approach.

{\bf Experimental protocol: Robust KAE.}
We computed the mean $5$-fold cross-validation Mean Squared Error.
The first layer uses a linear kernel.
But since inputs (and outputs) are kernelized -- only the $2303 \times 2303$ Gram matrix is provided for learning --, the first layer may also be seen as a function from the associated Tanimoto-RKHS, applied to the molecules.
The second layer uses a Gaussian kernel.
The regularization parameters for the two layers have been fixed to $\Lambda = 1e-6$, and the inner dimension has been set to $p = 200$.
In \Cref{fig:eps_kae} is plotted the MSE and the sparsity (discarded training data) for several values of $\epsilon$ in order to assess the effect of the regularization.
We used an existing source code from \citetNew{laforgue2019autoencoding2}\footnote{\href{https://github.com/plaforgue/kae}{github.com/plaforgue/kae}}, that has been adapted to our needs.
The IOKR resolution part, materialized by the \texttt{compute\_N\_L} function therein, has been replaced by the \texttt{compute\_Omega} function of the \texttt{IOKR\_plus} class in the attached code.

\subsubsection{Robust Function-to-Function Regression}

{\bf Dataset Description.}
The task at hand consists in predicting lip acceleration from electromyography (EMG) signals of the corresponding muscle \citepNew{ramsay2007applied2}.
The dataset\footnote{\href{http://www.stats.ox.ac.uk/~silverma/fdacasebook/lipemg.html}{http://www.stats.ox.ac.uk/~silverma/fdacasebook/lipemg.html}} includes 32 samples of time series obtained by recording a subject saying ``say bob again'', that are noted $(x_i,y_i)_{i=1}^{32}$.
Each time series is of length $64$.
To assess the performance of our method in presence of outliers, we created $4$ outliers by picking randomly some $(x_i)_{i=1}^4$ and adding to the dataset the samples $(x_i, -1.2*y_i)_{i=1}^4$.

{\bf Experimental protocol.}
As the number of samples is small, one can use the Leave One Out (LOO) generalization error as a measure of the model performance.
We first used it with plain Ridge Regression \citepNew{kadri2016ovk2} to select the best hyperparameter $\Lambda$.
Then, we tested robustness by computing the LOO generalization error of a model output by solving \Cref{pbm:huber_dual_l2} for various $\kappa$ (see \Cref{fig:huber_fonctional}, that may also be reproduced from the attached code).
For the $\{\psi_j\}_{j=1}^m$ we used the sine and cosine basis of $L^2([0,1])$, i.e. $\forall l \leq \frac{m}{2}$ and $\theta \in [0,1]$, $\psi_{2l}(\theta)= \sqrt{2} \cos(2\pi l \theta)$ and $\psi_{2l+1}(\theta)= \sqrt{2} \sin(2\pi l \theta)$.
The number of basis function was set to $m=16$, so that we get the first $8$ cosines and sines of the basis.
The chosen associated eigenvalues are $\lambda_{2l} =\lambda_{2l+1} = \frac{1}{(1+j)^2}$.
We used as an input kernel the integral Laplacian $k_{\mathcal{X}}(x_1,x_2) = \int_{0}^1 \exp{(-7|x_1(\theta)-x_2(\theta)|)} \mathrm{d}\theta$.

\subsection{Additional Figures}

We now provide analogues to \Cref{fig:mse_yeast,fig:sparse_yeast} for the $\epsilon$-SVR and $\kappa$-Huber.
The $\epsilon$-Ridge graphs are first recalled.
Notice that simplified versions of these plots may be easily reproduced from the attached code.

The $\epsilon$-KRR (\Cref{fig:yeast_ekrr}) appears as a natural regularized version of the plain KRR.
For small values of $\Lambda$, the regularization effect of the $\epsilon$ induces a smaller MSE.
This phenomenon is achieved for a wide range of $\Lambda$ and $\epsilon$, and coincides with an important sparsity.
The counterpart is that no value of $\epsilon$ clearly allows to outperform the standard KRR for its optimal $\Lambda$.
The $\epsilon$-KRR may rather be used as an implicit regularization preventing from a cross-validation on $\Lambda$.

The $\epsilon$-SVR (\Cref{fig:yeast_esvr}) shares analogous characteristics for the small $\Lambda$ regime.
However, it further produces predictors with smaller MSE than the best KRR one.
This furthermore coincides with a peak in the sparsity.

The $\kappa$-Huber (\Cref{fig:yeast_huber}) has a quite different behavior.
When $\Lambda$ tends to $0$, the constraint (see Problem $(D2)$) is vacuous for all $\kappa$, and one asymptotically recovers the standard KRR.
The optimal $\Lambda$ now changes with $\kappa$, and better performances than the KRR for the best $\Lambda$ are regularly attained.

\begin{figure*}[!h]
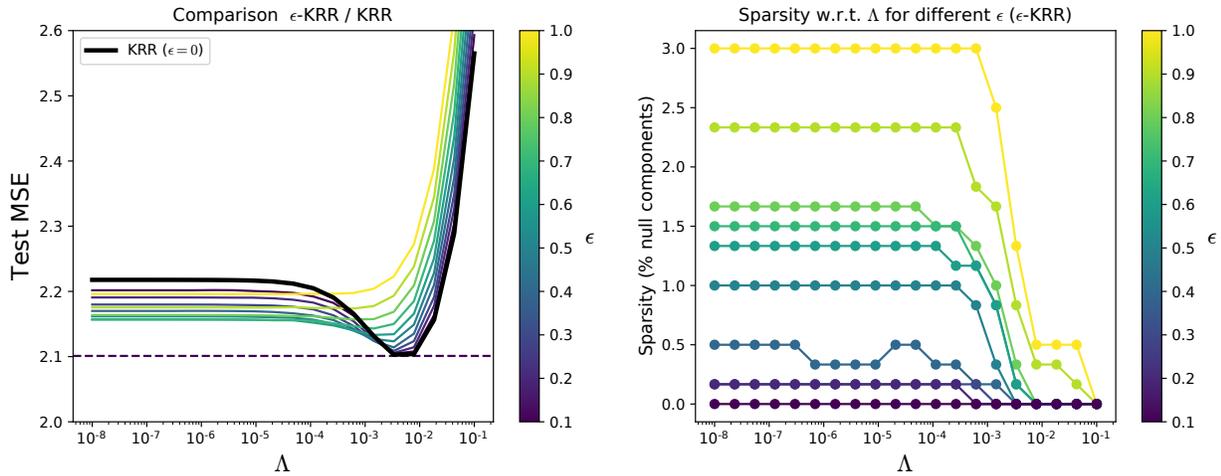

\center
\includegraphics[height=6.2cm]{fig/yeast_krr_mse}\hspace{0.5cm}
\includegraphics[height=6.2cm]{fig/yeast_krr_sparsity}
\vspace{-0.3cm}
\caption{MSE and Sparsity w.r.t. $\Lambda$ for different $\epsilon$ for the $\epsilon$-KRR on the YEAST dataset.}
\label{fig:yeast_ekrr}
\end{figure*}

\vspace{0.5cm}

\begin{figure*}[!h]
\center
\includegraphics[height=6.2cm]{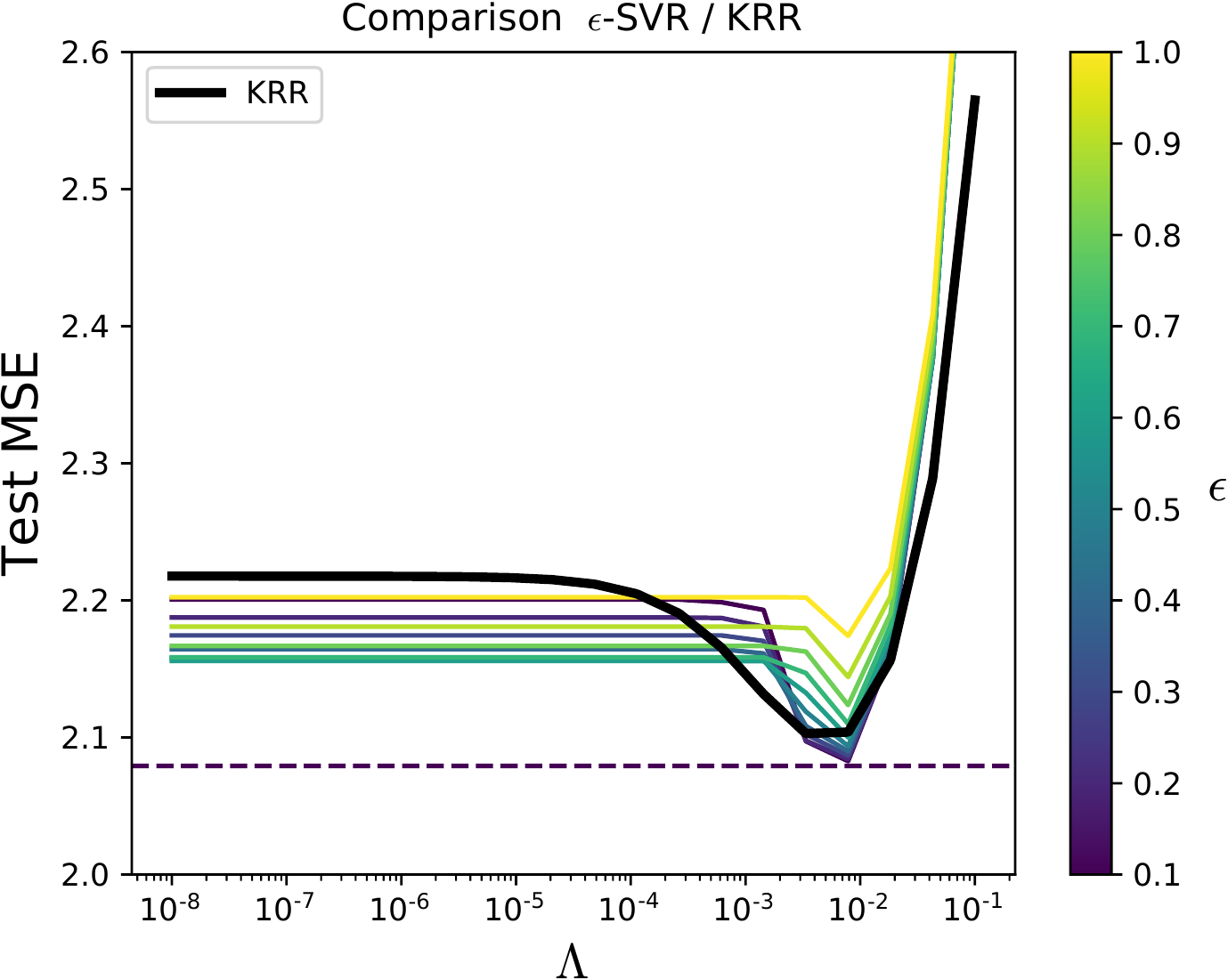}\hspace{0.5cm}
\includegraphics[height=6.2cm]{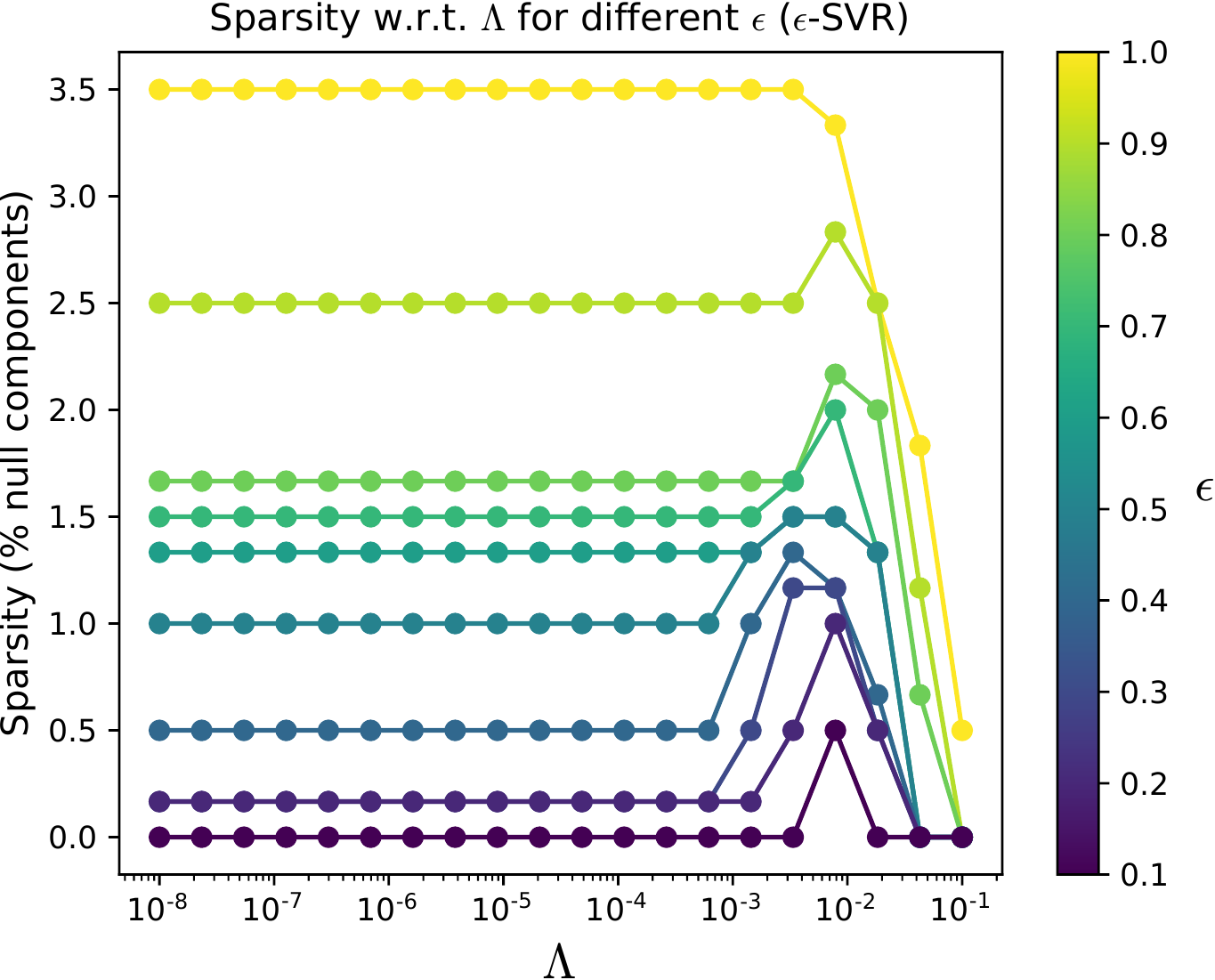}
\vspace{-0.3cm}
\caption{MSE and Sparsity w.r.t. $\Lambda$ for different $\epsilon$ for the $\epsilon$-SVR on the YEAST dataset.}
\label{fig:yeast_esvr}
\end{figure*}

\vspace{0.5cm}

\begin{figure*}[!h]
\center
\includegraphics[height=6.2cm]{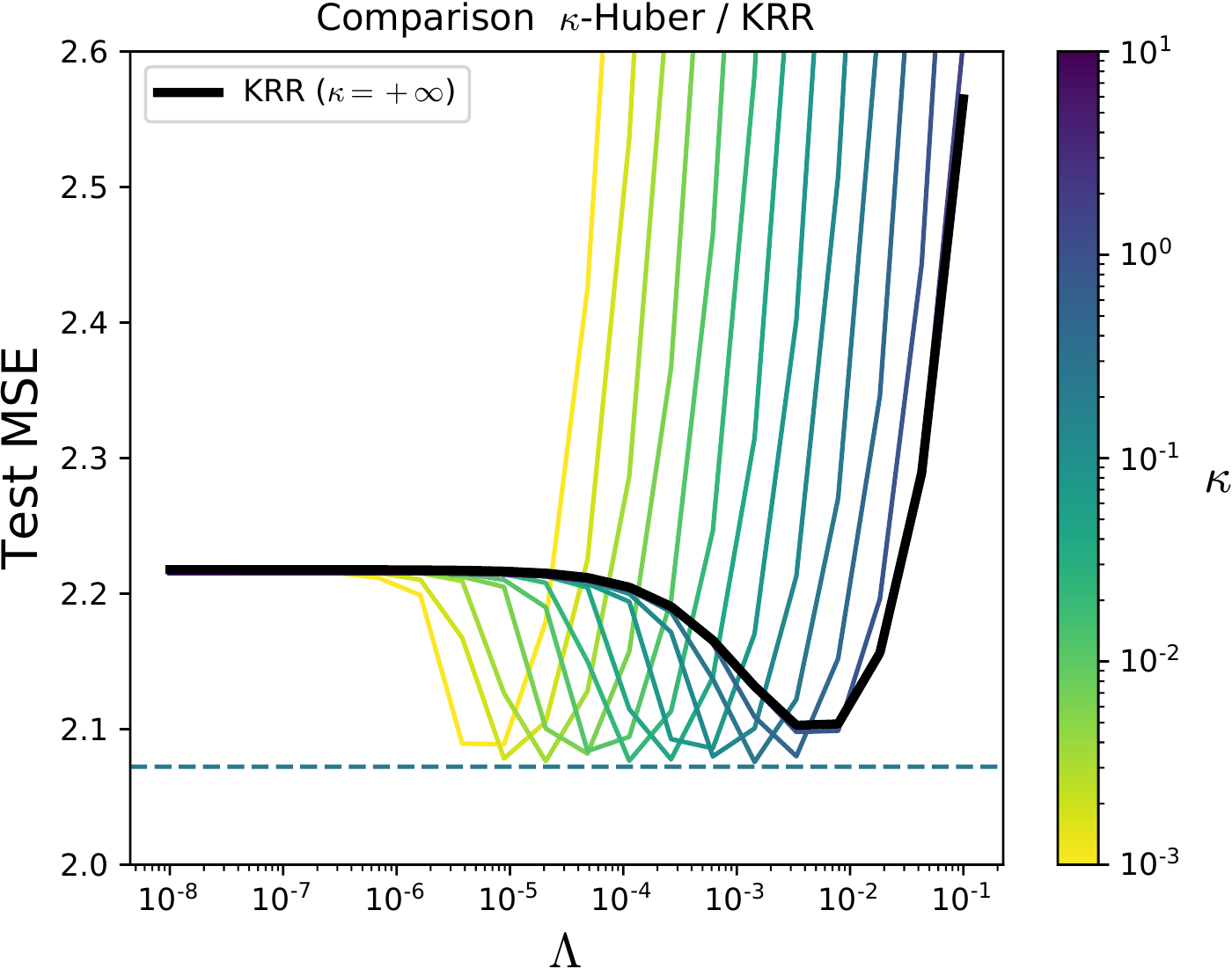}\hspace{0.5cm}
\includegraphics[height=6.2cm]{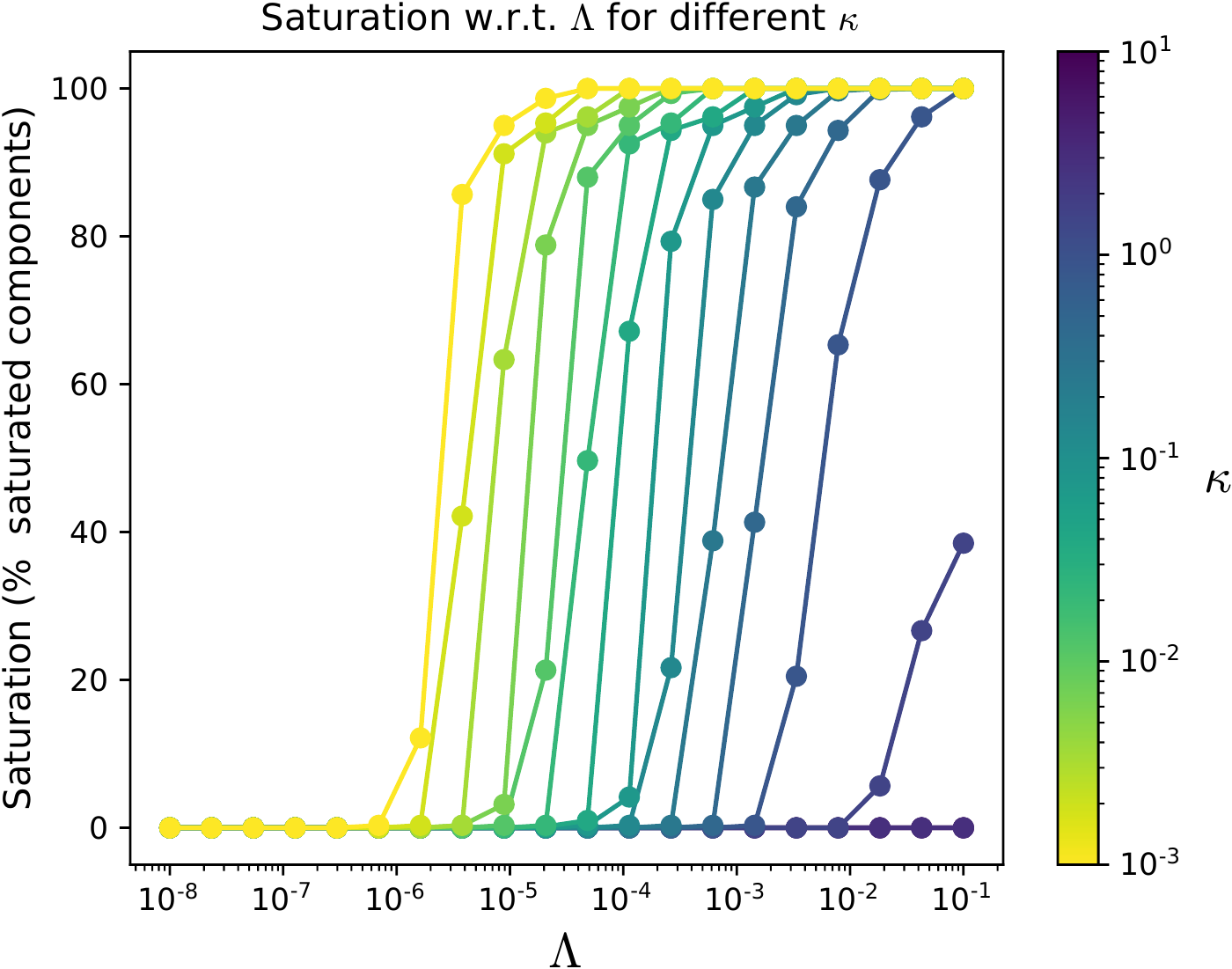}
\vspace{-0.3cm}
\caption{MSE and Saturation w.r.t. $\Lambda$ for different $\kappa$ for the $\kappa$-Huber on the YEAST dataset.}
\label{fig:yeast_huber}
\end{figure*}

\bibliographystyleNew{apalike}
\bibliographyNew{ref2}


\end{document}